
\documentclass[10pt,twocolumn,letterpaper]{article}

\usepackage[pagenumbers]{cvpr} 
\usepackage{caption}
\usepackage{multirow}
\usepackage{url}
\usepackage{graphicx}
\usepackage{makecell}
\usepackage{array}
\usepackage{bbding}
\usepackage{algorithm,algorithmic}
\usepackage{xcolor,colortbl}
\usepackage{soul}
\usepackage{color}
\usepackage{adjustbox}
\usepackage{amsthm}
\usepackage{subcaption}
\usepackage[percent]{overpic}

\usepackage{etoc}
\etocdepthtag.toc{mtchapter}
\etocsettagdepth{mtchapter}{subsection}
\etocsettagdepth{mtappendix}{none}

\PassOptionsToPackage{[],dvipsnames}{xcolor}
%
%
\usepackage[dvipsnames]{xcolor}


\newtheorem{theorem}{Theorem}
\newtheorem{remark}{Remark}
\newtheorem{lemma}{Lemma}

%
\definecolor{cvprblue}{rgb}{0.21,0.49,0.74}
\usepackage[pagebackref,breaklinks,colorlinks,citecolor=cvprblue]{hyperref}


\title{\includegraphics[width=0.035\textwidth]{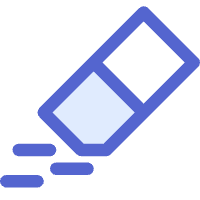} ERASE: Error-Resilient Representation Learning on Graphs\\ for Label Noise Tolerance \vspace{-1.5em}}
\author{
Ling-Hao Chen$^{1, 2 \dag *}$, Yuanshuo Zhang$^{2, 3 *}$, Taohua Huang$^{3}$, Liangcai Su$^{1}$, Zeyi Lin$^{2, 3}$, Xi Xiao$^{1\#}$,\\ Xiaobo Xia$^{4 \dag\#}$, Tongliang Liu$^{4}$\\
$^1$Tsinghua University, $^2$SwanHub.co, $^3$Xidian University, $^4$The University of Sydney \vspace{-0.3em}\\
{\small $^{\dag}$Project lead \ \ $^{*}$Equal Contribution \ \  $^{\#}$Corresponding author} \vspace{-0.4em}\\
{\small Email: }\ {\tt\small \{thu.lhchen, youngsoul0731, work.thhuang\}@gmail.com \ xiaoboxia.uni@gmail.com}\vspace{-0.4em}\\
{\small \centering \url{https://eraseai.github.io/ERASE-page}}
\vspace{-1.7em}
}

\newcommand{\myPara}[1]{\vspace{.05in}\noindent\textbf{#1}}
\begin{document}
\maketitle
\begin{abstract}
\vspace{-1.3em}
Deep learning has achieved remarkable success in graph-related tasks, yet this accomplishment heavily relies on large-scale high-quality annotated datasets. However, acquiring such datasets can be cost-prohibitive, leading to the practical use of labels obtained from economically efficient sources such as web searches and user tags. Unfortunately, these labels often come with noise, compromising the generalization performance of deep networks.
To tackle this challenge and enhance the robustness of deep learning models against label noise in graph-based tasks, we propose a method called ERASE (\underline{E}rror-\underline{R}esilient representation learning on graphs for l\underline{A}bel noi\underline{S}e toleranc\underline{E}). The core idea of ERASE is to learn representations with error tolerance by maximizing coding rate reduction. Particularly, we introduce a decoupled label propagation method for learning representations. Before training, noisy labels are pre-corrected through structural denoising. During training, ERASE combines prototype pseudo-labels with propagated denoised labels and updates representations with error resilience, which significantly improves the generalization performance in node classification. The proposed method allows us to more effectively withstand errors caused by mislabeled nodes, thereby strengthening the robustness of deep networks in handling noisy graph data. Extensive experimental results show that our method can outperform multiple baselines with clear margins in broad noise levels and enjoy great scalability. Codes are released at \url{https://github.com/eraseai/erase}. 
\vspace{-0.3em}
\end{abstract}    
\vspace{-1.5em}
\section{Introduction}
\label{sec:intro}

\begin{figure}[h]
\captionsetup{}
    \centering
    \includegraphics[width = 0.45\textwidth]{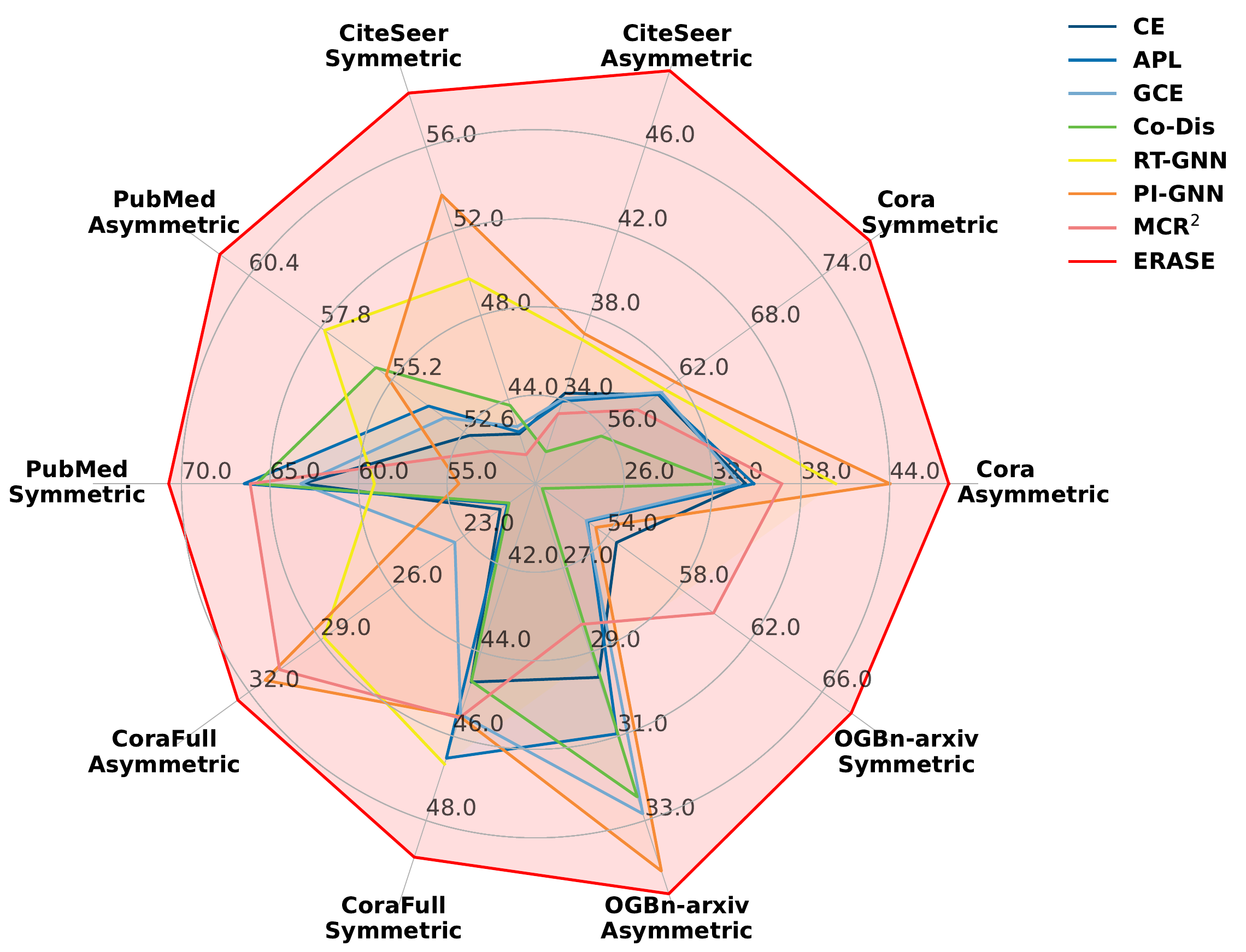}
    \vspace{-1.1em}
    \caption{ERASE beats everything on 5 node classification datasets in large ratio label noise scenarios ($\phi = 0.5$).}
    \label{fig:teaser}
    \vspace{-1.8em}
\end{figure}

Graphs are a pervasive structure found in various data analysis scenarios~\cite{pan2021autostg,li2020just,qiu2020gcc,chen2023anomman, li2021detectornet,he2020lightgcn,wang2019heterogeneous, shi2018heterogeneous,ling2021deep, you2020graph,sun2023think}. They appear in multiple real-world contexts such as social networks and biological networks~\cite{xu2018representation}. These graphs can encapsulate valuable information that goes beyond individual entities~\cite{xu2019powerful}. Accordingly, tasks involving the analysis of graphs, such as node classification, wield substantial influence in practical applications~\cite{du2023noise}. The success of deep learning models on node classification is largely attributed to the collection of large-scale datasets with high-quality annotated labels~\cite{zhou2019meta}. Particularly, with the data in such datasets, deep learning models first extract high-level features from nodes as representations and then complete classification with the representations~\cite{xie2022simmim,he2022masked,hou2022graphmae,liu2022graph,zeng2021contrastive}. Nevertheless, acquiring extensive high-quality annotated labels at a large scale is prohibitively expensive. Alternatively, labels acquired through web searches and user tags present a cost-effective solution~\cite{li2022selective,yi2019probabilistic,wu2020topological}. However, they come with the inherent drawback of being noisy~\cite{dai2021nrgnn,li2021unified}.

Noisy labels detrimentally impact the generalization performance of deep networks on graphs. This is because, for a mislabeled node, its label provides incorrect signals during the process of inducing latent representations for nodes~\cite{du2023noise}. Besides, the incorrect signals can be propagated along the topological edges, which influences the representation learning of other nodes. As a result, these corrupted representations subsequently lead to inaccurate decisions in subsequent node classification, adversely affecting generalization~\cite{dai2022towards,qian2023robust}. Therefore, it is imperative to learn robust representations of nodes for label noise tolerance, which is also the central focus of our study.

Recent works have explored diverse strategies to learn robust representations of nodes, \textit{e.g.}, involving a
small set of clean nodes for assistance~\cite{xia2020towards}, operating with class
labels derived from clustering node embeddings~\cite{zhang2020adversarial}, and selecting
clean labeled nodes from noisy ones~\cite{qian2023robust}. These works achieved commendable progress in enhancing the robustness of representation learning against label corruption. 
However, essentially, their representation learning is not designed or modeled by an \textit{not error-resilient} principle, which means that the representations are hard to gracefully withstand errors caused by mislabeled nodes. 
Specifically, as shown in~\cref{tab:training_correction}, it is hard to relieve the side effects of mislabeled nodes when combating label noise in node classification. Therefore, the learned representations from these works will be significantly influenced by the incorrect signals of mislabeled nodes. If the error resilience modeling in representation learning is not considered, the harm of those unprocessed incorrect signals will extremely jeopardize network robustness, which is never our desideratum. As a result, learning representations on graphs with an error-resilient objective is necessary. 

\begin{table}[t]
\centering
\footnotesize
\setlength\tabcolsep{2.5pt}
\begin{tabular}{c|ccc|ccc}
\toprule \specialrule{0em}{0.0pt}{0.0pt}
                       & \multicolumn{3}{c|}{Asymmetric}                  & \multicolumn{3}{c}{Symmetric}                    \\ \cline{2-7} 
\multirow{-2}{*}{Cora} & 0.3            & 0.4            & 0.5            & 0.3            & 0.4            & 0.5            \\ \hline
CE             & 39.71  & 26.96	& 20.26	& 48.21	& 36.90 & 31.58\\
\rowcolor[HTML]{FFFFFF} 
APL            & 39.71	& 27.50	& 19.74	& 50.71	& 37.62	& 31.23\\
GCE            & 39.14	& 30.89	& 22.89	& 50.00	& 38.57	& 29.65\\
CoDis          & 44.29	& 31.43	& 19.21	& 37.32	& 40.54	& 37.26\\
MCR$^2$        & 16.57	& 16.07	& 14.08	& 39.64	& 21.19	& 14.56\\
RT-GNN          & \underline{57.91}   &\underline{53.32}   & \underline{40.35} & 43.75 & 40.55   & \underline{39.82}  \\
PI-GNN         &41.46   & 37.03 & 19.69 & \underline{57.69} & \underline{46.47} & 28.04\\
\rowcolor[HTML]{E7FBFF} 
ERASE          &\makecell{\textbf{84.86} \\(\textcolor{red}{+26.95})}   &\makecell{\textbf{75.71}  \\(\textcolor{red}{+22.39}) }  &\makecell{\textbf{54.61} \\(\textcolor{red}{+14.26})}   &\makecell{\textbf{84.29}  \\(\textcolor{red}{+26.60}) }    &\makecell{\textbf{74.29}  \\(\textcolor{red}{+27.82})}     &\makecell{\textbf{62.46}  \\(\textcolor{red}{+22.64})} \\ 
\specialrule{0em}{0.0pt}{0.0pt} \bottomrule
\end{tabular}

\caption{Correction rate of mislabeled nodes. ERASE enjoys a significant margin over the best baseline.}
\vspace{-3.7em}
\label{tab:training_correction}
\end{table}

In this paper, to improve the robustness of representation learning in node classification, and address the issue of previous works, we propose error-resilient representation learning on graphs for label noise tolerance (\textit{aka} \textbf{ERASE}).
In essence, ERASE can enable representation learning on graphs to greatly withstand those unprocessed incorrect signals brought by mislabeled nodes. The improved robustness of representations and subsequent better generalization in node classification is hence achieved. Empirically, as shown in~\cref{tab:training_correction}, ERASE enjoys a significant margin over the best baseline on the correction rate of mislabeled nodes, indicating a better error resilience ability than previous attempts. 

Technically, we learn the robust representations by maximizing the coding rate between the whole dataset and each class (coding rate reduction), whose measurement is error-resilient~\cite{Ma_Derksen_Hong_Wright_2007} under mild conditions. As the existence of mislabeled nodes makes it hard to estimate the coding rate reduction precisely, we propose to use a decoupled label propagation method to tackle the problem.
 
In ERASE, the label propagation is decoupled into two phases. 1) \textit{Before training}, to avoid the negative impact of label noise, label propagation is used for structural denoising. In this phase, we pre-correct the noisy training labels via the topology of the graph by label propagation. 2) \textit{During training}, to make full use of error-resilient representations, we combine the prototype pseudo labels obtained by the representations and propagated structural denoised labels as semantic labels, which are used for the second phase of label updating. With the decoupled label propagation, our method will learn the error-resilient node representations via estimating precise coding rate reduction. Besides, enjoying the property of maximizing coding rate reduction, learned representations between different classes are approximately orthogonal, which can be easily used to predict labels via both linear or nonlinear classifiers and achieve great robustness.

The main contributions of this paper are summarized in three aspects. 
1) We provide a novel perspective to learning label-noise-tolerance presentations on graphs via optimizing an error-resilient training objective for the first time. 
2) We propose a decoupled label propagation method to provide denoised labels and semantic labels with graph structural prior. The decoupled label propagation helps to provide trustworthy labels to learn error-resilient node representations. 
3) Extensive experiments show our method outperforms baselines and enjoys great scalability, especially when the label noise ratio is large. 

\section{Preliminaries}
\label{sec:preliminaries}

\myPara{Notations.}
We begin by clarifying notations. We denote a graph as $\mathcal{G} = \left\{ \mathcal{V},\mathcal{E}\right\}$, where $\mathcal{V}$ and $\mathcal{E}$ represent the node set and edge set respectively. In the graph $\mathcal{G}$ with $N$ nodes, $\boldsymbol{A}\in \left\{0, 1\right\}^{N \times N}$ is the adjacency matrix. For an attributed graph, $\boldsymbol{X}=[\boldsymbol{x}_1,\boldsymbol{x}_2,\cdots ,\boldsymbol{x}_N]^{\top}\in \mathbb{R}^{N\times d_0}$ contains $N$ nodes with $d_0$-dimension features. We denote $\boldsymbol{Z} = \mathtt{Enc}_{\Theta}(\boldsymbol{X},\mathcal{E})=[\boldsymbol{z}_1,\boldsymbol{z}_2,\cdots ,\boldsymbol{z}_N]^{\top}\in \mathbb{R}^{N\times d}$ as the latent representations, where $\mathtt{Enc}_{\Theta}(\cdot)$ is a graph neural network encoder parameterized by learnable parameters $\Theta$.

\myPara{Problem formulation.} To evaluate the robustness of our method, we take the semi-supervised node classification on the graph as the pretext task which can be defined as follows. 
We split all the nodes $\mathcal{V}$ into three sets $\mathcal{V} ^{\text{train}}$, $\mathcal{V}^{\text{valid}}$, and $\mathcal{V}^{\text{test}}$ for training, validation, and testing respectively. The ground truth labels and the corrupted labels are denoted as $\boldsymbol{Y}$ and $\boldsymbol{\tilde{Y}}$, all of which are divided as $K$ classes. When learning representations in noisy label scenarios, only corrupted labels $\tilde{\boldsymbol{Y}}^{\text{train}}$ of $\mathcal{V}^{\text{train}}$ are available during the training process. 
Besides, all the features $\boldsymbol{X}$ and adjacency matrix $\boldsymbol{A}$ are also available to update parameters $\Theta$ of $\mathtt{Enc}_{\Theta}(\cdot)$. The validation set and test set are used for early stopping and comparison respectively.


\myPara{Coding rate reduction.} 
The coding rate is defined as the
average number of bits needed per example~\cite{Ma_Derksen_Hong_Wright_2007}. Given representations $\boldsymbol{Z}$ and a precision $\epsilon$, the average coding rate~\cite{Ma_Derksen_Hong_Wright_2007} of each example is estimated by,
\vspace{-0.5em}
\begin{equation}
    R(\boldsymbol{Z},\epsilon)\doteq \frac{1}{2}\displaystyle \log \det (\boldsymbol{I} + \frac{d}{N\epsilon^2}\boldsymbol{Z}^{\top}\boldsymbol{Z}),
\vspace{-0.5em}
\end{equation}
where $d$ is the dimension of learned representations. Here, we can also use the volume $vol(\boldsymbol{Z})$ to measure how large the space spanned by these vectors, where the $R$ and $vol(\boldsymbol{Z})$ are with a positive correlation. For a detailed definition of the volume $vol(\cdot)$, please refer to~\cref{sec:definition of NVTR}.

\section{How Can Error-resilient Objective Help Label Noise Tolerance?}

Without loss of generality, the label corruption can be treated as the problem that a shift on $\boldsymbol{z}$ occurs when we utilize the corrupted labels for training. That is to say $\tilde{\boldsymbol{z}} = \boldsymbol{z} + \delta$, where $\delta$s are the shift vectors. Following the proof in~\cite{Ma_Derksen_Hong_Wright_2007}, we have, 
\vspace{-0.5em}
\begin{remark}
    
    If the error $\delta$ of $\tilde{\boldsymbol{z}} = \boldsymbol{z} + \delta$ and $\boldsymbol{w} \sim \mathcal{N}(\boldsymbol{0}, \frac{\epsilon^2}{d}\boldsymbol{I})$ satisfies the inequality $vol(\delta)\leq vol(\boldsymbol{w})$, $R(\tilde{\boldsymbol{Z}}, \epsilon)$ measures the coding rate subject to distortion $\epsilon$. 
    \label{rm1}

\end{remark}

For data with $K$ classes, the average coding rate per example of all classes is, 
\vspace{-0.4em}
\begin{equation}
\footnotesize
    R^C(\boldsymbol{Z},\epsilon \vert \mathbf{\Pi} )\doteq \sum_{j=1}^{K} \frac{\mathrm{tr}(\mathbf{\Pi}_j)}{2N}\displaystyle \log \det (\boldsymbol{I} + \frac{d}{\mathrm{tr}(\mathbf{\Pi}_j)\epsilon^2}\boldsymbol{ Z}^{\top} \mathbf{\Pi}_j \boldsymbol{Z}),
\vspace{-0.4em}
\end{equation}
where $\mathbf{\Pi}_j \in \mathbb{R}^{N\times N}$ is a diagonal matrix representing the membership 
$\mathbf{\Pi}_{j,i,i}\in \{0,1\}$. Here, $\mathbf{\Pi}$ lies in $\Omega \doteq\left\{\mathbf{\Pi} \mid \mathbf{\Pi}_j \geq \boldsymbol{0}, \mathbf{\Pi}_1+\cdots+\mathbf{\Pi}_K=\boldsymbol{I}\right\}$. $\mathbf{\Pi}_{j,i,i}=1$ if example $i$ belongs to the class $j$, otherwise $\mathbf{\Pi}_{j,i,i}=0$. In contrast, smaller $R^C$ means more compact representations.

According to~\cite{NEURIPS2020_6ad4174e}, more discriminative and orthogonal representations require larger $R$ and smaller $R^C$. Based on this design principle, the volume between the different classes should be large enough, which can be measured by the coding rate reduction, 
\vspace{-0.5em}
\begin{equation}
\label{eq:coding rate within a class}
    \Delta R(\boldsymbol{Z}, \mathbf{\Pi}, \epsilon) \doteq R(\boldsymbol{Z}, \epsilon)-R^C(\boldsymbol{Z}, \epsilon \mid \mathbf{\Pi}). 
\vspace{-0.7em}
\end{equation}
The larger the coding rate reduction value is, the more discriminative and orthogonal the learned representations are. To obtain better representations, we train the GNN encoder to maximize the coding rate reduction with $\boldsymbol{\Pi}$ and $\epsilon$ given, 
\vspace{-0.5em}
\begin{equation}
    \label{eq:mcr2}
    \small
    \max _{\Theta, \mathbf{\Pi}} \Delta R(\boldsymbol{Z}(\Theta), \mathbf{\Pi}, \epsilon) = R(\boldsymbol{Z}(\Theta), \epsilon)-R^C(\boldsymbol{Z}(\Theta), \epsilon \mid \mathbf{\Pi}),
\vspace{-0.7em}
\end{equation}
where $\Theta$ is the parameters of the encoder $\mathtt{Enc}_{\Theta}(\cdot)$. Note that only training labels are available during training, which makes the membership matrix $\mathbf{\Pi}$ as an optimization term in the training process.

 In~\cref{eq:mcr2}, the $R(\boldsymbol{Z}(\Theta), \epsilon)$ term measures the coding rate of full data, which is irrelevant to label noise. However, $R^C(\boldsymbol{Z}(\Theta), \epsilon| \mathbf{\Pi})$ is dependent on the partition $\mathbf{\Pi}$, which is calculated by the given noisy labels. We denote  $\boldsymbol{Z}=\mathtt{Enc}_{\Theta}(\boldsymbol{X},\boldsymbol{A})$ and $\tilde{\boldsymbol{Z}}=\mathtt{Enc}_{\tilde{\Theta}}(\boldsymbol{X},\boldsymbol{A})$ as the latent representation learned by $\boldsymbol{Y}^{\text{train}}$ and $\tilde{\boldsymbol{Y}}^{\text{train}}$ respectively. According to~\cref{rm1}, if $vol(\delta)\leq vol(\boldsymbol{w})$ holds, $R^C(\boldsymbol{Z}(\Theta), \epsilon \mid \mathbf{\Pi})$ can be measured precisely in noisy label scenarios and the coding rate reduction also can be measured robustly. The experimental supports are shown in~\cref{sec:Abalation_study}. 
\vspace{-0.6em}
\section{Methodology}
\label{section 4}

Before delving into technical details, we present the system overview of ERASE, which is shown in~\cref{fig:system}. In this section, we will introduce the two-phase decoupled label propagation method that is designed to reduce the negative effect of label noise at first. Before training~(\cref{fig:system}(b)), the label propagation~\cite{zhu2002learning,10.1145/3442381.3449927,10.1145/3490478,zhou2003learning,Gong_Tao_Liu_Liu_Yang_2017,Zhang_Zhang_Zhao_Ye_Zhang_Wang_2020,Park_Jeong_Kim_Kim_2020,ma2021unified} method is introduced for label denoising. During training~(\cref{fig:system}(c)), we obtain semantic labels by combining denoised labels and prototype pseudo labels. Benefiting from the propagated semantic labels, we can estimate the coding rate reduction more precisely. \textbf{Importantly, the coding rate reduction is error-resilient, whose design principle is different from the traditional classification training objective.}

\begin{figure}[h]
    \vspace{-0.7em}
    \centering
    \begin{overpic}[scale=0.248]{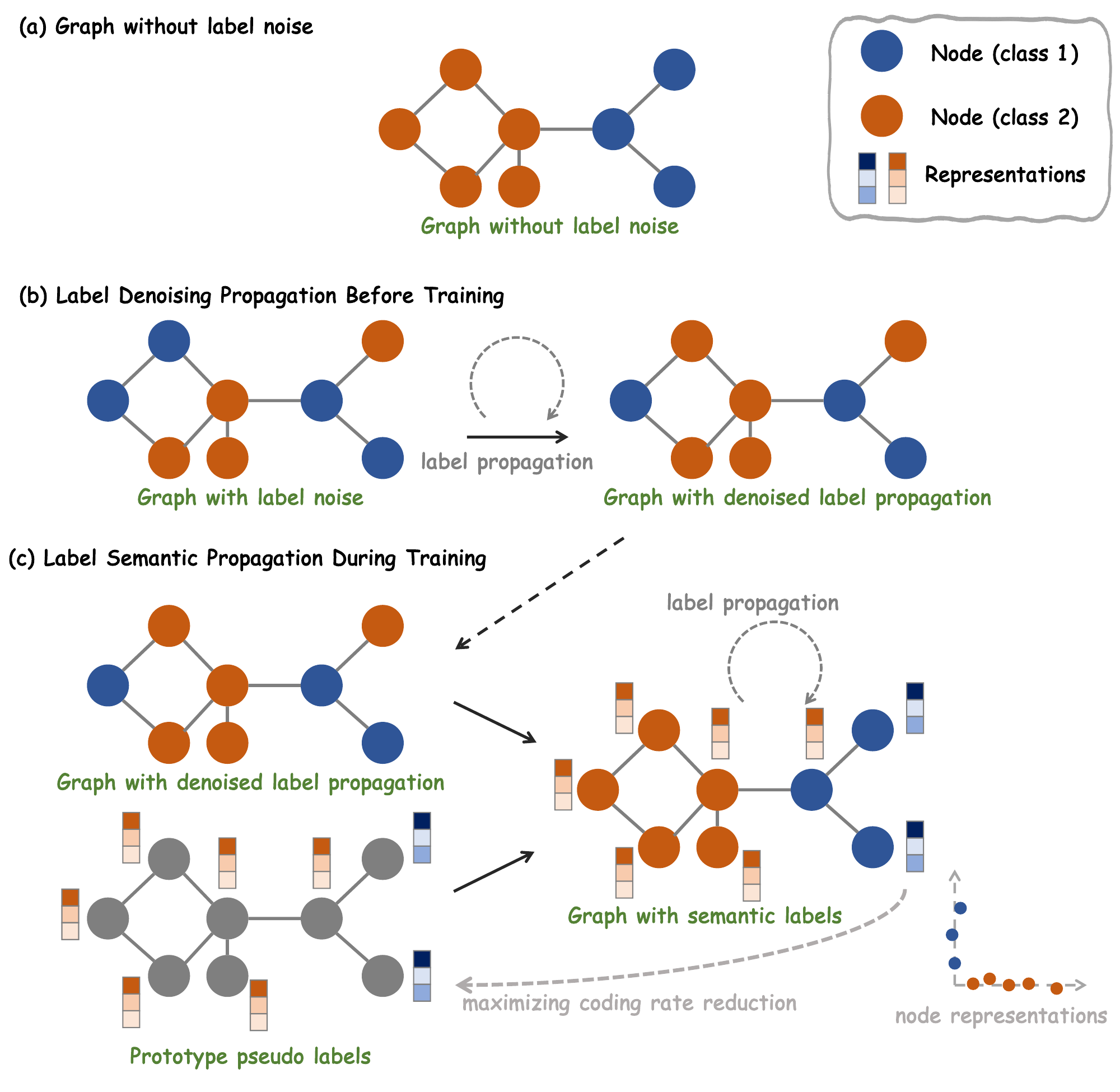}
        \put(52.2, 55.8){
            \textcolor[RGB]{107,107,107}{\scalebox{.4}{$\times T_1$ }}
        }
        \put(41, 3.2){
            \textcolor[RGB]{107, 107, 107}{\scalebox{.6}{$\Theta = \Theta + \eta \nabla_{\Theta}(\tilde{R}(\boldsymbol{Z}(\Theta), \epsilon,\gamma) 
- R(\boldsymbol{Z}(\Theta), \epsilon|\mathbf{\Pi}))$}}
            }
        \put(74, 43.1){
            \textcolor[RGB]{107,107,107}{\scalebox{.4}{ $\times T_2$ }}
        }
        \put(40, 23){
            \textcolor[RGB]{107,107,107}{\scalebox{.4}{ $(1-\beta)\boldsymbol{L}^{\text{proto}}$ }}
}
            \put(40, 30){
            \textcolor[RGB]{107,107,107}{\scalebox{.4}{ $\beta \tilde{\boldsymbol{L}}$ }}
        }
        \put(83.8, 17.5){
            \textcolor[RGB]{53,85,150}{\scalebox{.4}{ $\boldsymbol{Z}_1$ }}
        }
        \put(93, 10.5){
            \textcolor[RGB]{194,90,26}{\scalebox{.4}{ $\boldsymbol{Z}_2$ }}
        }
    \end{overpic}
    \vspace{-1.5em}
    \caption{The procedure of decoupled label propagation and model training. (a) A graph without label noise. (b) For a graph with label noise, before training, we perform label denoising on all training nodes. (c) During training, we estimate the prototype labels with learned representations $\boldsymbol{Z}$ and combine them with denoised labels to obtain the semantic labels. We update representations with semantic labels via maximizing coding rate reduction.}
    \vspace{-1.8em}
    \label{fig:system}
\end{figure}

\vspace{-0.4em}
\subsection{Decoupled Label Propagation}
\label{sec:estimation of senmantic-structural label}

In ERASE, the label propagation is decoupled into two phases. 
\begin{itemize}[leftmargin=*]
    \item \textit{Before training}, to avoid the negative impact of label noise on unlabeled nodes (\textit{aka} validation and testing nodes), label propagation is used for structural denoising, in which phase we pre-correct the labels with noise via the topology of the graph, \textit{e.g.}, the adjacency matrix.
    \item \textit{During training}, label propagation is conducted on the whole graph to provide pseudo-labels for all nodes. Besides, we additionally introduce cosine similarity logits learned by maximizing coding rate reduction to provide semantic pseudo labels for updating representations. 
\end{itemize}

\subsubsection{Label Denoising Propagation Before Training} 
\label{sec:dp}

\vspace{-0.6em}
 We firstly initialize the class distribution $\tilde{\boldsymbol{L}}^{(0)}$ of all nodes with training label $\tilde{\boldsymbol{Y}}^{\text{train}}$ and unlabelled data are zero-padded. To avoid the negative impact of label noise on unlabeled nodes, we introduce the masked adjacency matrix to restrict labels to be propagated only among training nodes. Keep it simple, we denote $\mathbf{m}\in \{0, 1\}^{N}$ as the masking vector for distinguishing training nodes and others, where $1$ and $0$ in $\mathbf{m}$ mean training nodes and others respectively. The masked adjacency matrix is obtained by $\boldsymbol{A}^{M} = \boldsymbol{A}\odot \mathbf{m}\mathbf{m}^{\top}$, where $\odot$ denotes the Hadamard product of matrices. Following~\cite{zhou2003learning}, we normalize $\boldsymbol{A}^{M}$ as $\tilde{\boldsymbol{A}}^{M}=\boldsymbol{D}^{-\frac{1}{2}}\boldsymbol{A}^{M}\boldsymbol{D}^{-\frac{1}{2}}$, where $\boldsymbol{D}$ is a diagonal matrix whose diagonal element $ \boldsymbol{D}_{ii}=\sum_{j}{\boldsymbol{A}}^{M}_{ij}$. We perform  structural denoising propagation as,  
\vspace{-0.4em}
 \begin{equation}
 \label{eq:denoising propagation}
 \fontsize{8.5}{1}\selectfont
 \tilde{\boldsymbol{L}}^{(t+1)} = \left((1-\alpha_{1})\tilde{\boldsymbol{A}}^{M}+\alpha_{1}\boldsymbol{I}\right) \tilde{\boldsymbol{L}}^{(t)}, \ t=0,1,\cdots,T_1-1,
\vspace{-0.4em}
 \end{equation}
where $T_1$ is the total propagation depth, and $\alpha_1$ is the coefficient controlling the usage of original labels. 
In the sequel, to keep it simple, we denote $\tilde{\boldsymbol{L}}^{(T_1)}_{ij}$ as $\tilde{\boldsymbol{L}}_{ij}$.

\subsubsection{Label Semantic Propagation During Training} 
\label{sec:sp}

For the semi-supervision setting in learning node representations on the graph, we cannot obtain labels of all nodes. As a result, when calculating the coding rate for each class $R^{C}$, we cannot fully utilize node $\boldsymbol{X}$ features, making it hard to estimate $R^{C}$ precisely. 

Fortunately, inspired by pseudo-labeling techniques, we assign pseudo-labels to unlabeled nodes. Accordingly, available nodes can be scaled from $\boldsymbol{X}^{\text{train}}$ to $\boldsymbol{X}$, estimating coding rate reduction more precisely.
\begin{lemma}
\label{lam1}
    If the ambient space is large enough, the optimal solution $\boldsymbol{Z}^{*} = \{\boldsymbol{Z}_{1}^{*}, \boldsymbol{Z}_{2}^{*}, \cdots, \boldsymbol{Z}_{K}^{*}\}$ of $K$ classes in~\cref{eq:mcr2} satisfies $\boldsymbol{Z}_{i}^{*\top} \boldsymbol{Z}_{j}^{*}=\boldsymbol{0}$ for $i \neq j$ (c.f. the proof in~\cite{NEURIPS2020_6ad4174e}).
\end{lemma}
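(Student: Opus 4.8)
The plan is to show that the maximizer of the rate reduction $\Delta R(\boldsymbol{Z},\boldsymbol{\Pi},\epsilon)$ over a ball $\{\boldsymbol{Z}:\ \|\boldsymbol{Z}_j\|_F^2 \le n_j\}$ forces the per-class subspaces to be pairwise orthogonal, under the ``ambient space is large enough'' hypothesis (meaning $d \ge \sum_j \mathrm{rank}(\boldsymbol{Z}_j^*)$, so the class subspaces \emph{can} be arranged orthogonally). First I would fix the partition $\boldsymbol{\Pi}$ and rewrite $\Delta R$ in terms of the per-class Gram matrices $\boldsymbol{\Sigma}_j \doteq \boldsymbol{Z}^\top \boldsymbol{\Pi}_j \boldsymbol{Z}$ and the total $\boldsymbol{\Sigma} = \sum_j \boldsymbol{\Sigma}_j$, so that
\begin{equation}
\Delta R = \tfrac12 \log\det\!\bigl(\boldsymbol{I} + \tfrac{d}{N\epsilon^2}\textstyle\sum_j \boldsymbol{\Sigma}_j\bigr) - \sum_j \tfrac{n_j}{2N}\log\det\!\bigl(\boldsymbol{I} + \tfrac{d}{n_j\epsilon^2}\boldsymbol{\Sigma}_j\bigr),
\end{equation}
with $n_j = \mathrm{tr}(\boldsymbol{\Pi}_j)$. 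The first (global) term is concave in $\boldsymbol{\Sigma}$ and is maximized, for a fixed total ``energy,'' by spreading singular values out; the second (per-class) term penalizes each class occupying many dimensions. The key inequality I would establish is a superadditivity-type bound: $\log\det(\boldsymbol{I}+c\sum_j \boldsymbol{\Sigma}_j) \le \sum_j \log\det(\boldsymbol{I}+c\,\boldsymbol{\Sigma}_j)$ is the wrong direction, so instead I would argue directly that replacing any configuration with non-orthogonal class subspaces by an orthogonalized one (possible since $d$ is large enough) strictly increases the global term while leaving every per-class term unchanged — the per-class terms depend only on the $\boldsymbol{\Sigma}_j$ individually, which an isometric re-embedding of each class into mutually orthogonal slots preserves.

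The main steps, in order, are: (1) reduce to fixed $\boldsymbol{\Pi}$ and express $\Delta R$ via $\{\boldsymbol{\Sigma}_j\}$; (2) observe that an orthogonal re-embedding (pad each $\boldsymbol{Z}_j$ into its own block of coordinates, which is admissible when $d \ge \sum_j \mathrm{rank}\,\boldsymbol{\Sigma}_j$) leaves each $\boldsymbol{\Sigma}_j$ and hence $R^C$ invariant; (3) show the global rate $R$ is nondecreasing under this re-embedding, with equality iff the original subspaces were already orthogonal — this follows because $\boldsymbol{\Sigma}^{\text{orth}} = \mathrm{blkdiag}(\boldsymbol{\Sigma}_1,\dots,\boldsymbol{\Sigma}_K)$ majorizes $\boldsymbol{\Sigma} = \sum_j \boldsymbol{\Sigma}_j$ in the eigenvalue sense (the eigenvalues of the sum are dominated, via Ky Fan / Lidskii, by the concatenated eigenvalues of the blocks) and $\log\det(\boldsymbol{I}+c\,\cdot)$ is Schur-concave, giving $R(\boldsymbol{\Sigma}^{\text{orth}}) \ge R(\boldsymbol{\Sigma})$; (4) conclude that any maximizer $\boldsymbol{Z}^*$ must satisfy $\boldsymbol{Z}_i^{*\top}\boldsymbol{Z}_j^* = \boldsymbol{0}$ for $i\neq j$, otherwise the re-embedding would yield a strictly larger objective.

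The hard part will be step (3): making the comparison between $\log\det(\boldsymbol{I}+c\sum_j\boldsymbol{\Sigma}_j)$ and $\log\det(\boldsymbol{I}+c\,\mathrm{blkdiag}(\boldsymbol{\Sigma}_j))$ rigorous and, crucially, pinning down the \emph{equality case} so that orthogonality is forced rather than merely permitted. One clean route is to use concavity of $\log\det$ together with the fact that $\sum_j\boldsymbol{\Sigma}_j$ and the block-diagonal arrangement have the same trace but different spectra unless cross terms vanish; another is to invoke the strict version of the Ky Fan inequality. Since the excerpt explicitly cites \cite{NEURIPS2020_6ad4174e} for the proof, I would lean on their argument for this spectral comparison and focus my exposition on steps (1), (2), and (4), which are the parts that adapt cleanly to the graph setting and the specific normalization used here.
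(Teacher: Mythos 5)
The paper does not actually prove \cref{lam1}: it states the result and defers entirely to the proof in~\cite{NEURIPS2020_6ad4174e}, so your proposal is really being measured against the argument in that reference. Your overall architecture matches it: fix $\mathbf{\Pi}$, observe that $R^C$ depends only on the spectra of the per-class Gram matrices $\boldsymbol{\Sigma}_j$ and is therefore invariant under isometrically re-embedding each class into mutually orthogonal coordinate blocks (admissible precisely because the ambient dimension is large enough), and then show the re-embedding can only increase the global term $R$. Steps (1), (2) and (4) are fine, and you are right to restrict to a Frobenius ball, without which the objective is unbounded.

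The gap is in step (3), the step you yourself flag as the crux, and it is a genuine one: both of your directional claims are backwards. The inequality you dismiss as ``the wrong direction,'' namely $\log\det(\boldsymbol{I}+c\sum_j\boldsymbol{\Sigma}_j)\le\sum_j\log\det(\boldsymbol{I}+c\,\boldsymbol{\Sigma}_j)$ for PSD $\boldsymbol{\Sigma}_j$, is in fact exactly the right tool and is the key lemma in~\cite{NEURIPS2020_6ad4174e}: its right-hand side equals $2R(\boldsymbol{Z}^{\mathrm{orth}})$ because the re-embedded Gram matrices satisfy $\boldsymbol{\Sigma}_i^{\mathrm{orth}}\boldsymbol{\Sigma}_j^{\mathrm{orth}}=\boldsymbol{0}$ while keeping the same individual spectra, and its equality case ($\boldsymbol{\Sigma}_i\boldsymbol{\Sigma}_j=\boldsymbol{0}$, equivalently $\mathrm{tr}(\boldsymbol{\Sigma}_i\boldsymbol{\Sigma}_j)=\Vert\boldsymbol{Z}_i^{\top}\boldsymbol{Z}_j\Vert_F^2=0$) is precisely the orthogonality to be forced, so it closes the proof in one stroke. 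Your substitute via majorization has the majorization reversed: $\sum_j\boldsymbol{\Sigma}_j$ majorizes $\mathrm{blkdiag}(\boldsymbol{\Sigma}_1,\dots,\boldsymbol{\Sigma}_K)$, not the other way around (take $\boldsymbol{\Sigma}_1=\boldsymbol{\Sigma}_2=\mathrm{diag}(1,0)$: the sum has spectrum $(2,0)$, the block arrangement $(1,1)$, and $(2,0)$ majorizes $(1,1)$). Combining your stated majorization with Schur-concavity of $\lambda\mapsto\sum_i\log(1+c\lambda_i)$ therefore yields $R(\boldsymbol{Z}^{\mathrm{orth}})\le R(\boldsymbol{Z})$, the opposite of what you need. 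Once the direction is corrected the majorization route does work, and the equality case can be recovered from $\mathrm{tr}(\boldsymbol{\Sigma}_i\boldsymbol{\Sigma}_j)=0\Leftrightarrow\boldsymbol{\Sigma}_i\boldsymbol{\Sigma}_j=\boldsymbol{0}$ for PSD matrices, but as written step (3) fails.
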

As shown in~\cref{lam1}, the training objective is to make the learned representations discriminative, revealing that cosine similarity is an appropriate measurement of node similarity. Therefore, we introduce the cosine similarity logits as the prototype pseudo labels during training.

\myPara{Prototype pseudo labels.} 
The prototype~\cite{PCL} is described as ``an exemplar that represents a collection of instances with similar semantic attributes''. For all nodes in the graph, we estimate the prototypes $\mathcal{C} = \left\{ \ \boldsymbol{c}_{j} \right\}_{j=1}^{K}$ of each class as, 
\vspace{-0.5em}
\begin{equation}
\label{eq:prototype}
    \boldsymbol{c}_{j} =  {\textstyle \sum_{\boldsymbol{v}_i \in \mathcal{S}_j}}\boldsymbol{z}_i / {\mid \mathcal{S}_j \mid},
\vspace{-0.5em}
\end{equation}
where $\small \mathcal{S}_j =  \{\boldsymbol{v}_i|\mathop{\arg\max}\limits_{k}\tilde{\boldsymbol{L}}_{ik} = j\}$ is the subset of $\mathcal{V}^{\text{train}}$ split by membership of the class $k$ and $\tilde{\boldsymbol{L}}$ is the label obtained by label propagation in~\cref{eq:denoising propagation}. Given the representation $\boldsymbol{z}$ and the prototypes $\mathcal{C}$, the prototype pseudo label $\boldsymbol{L}^{\text{proto}}$ is calculated by, 
\vspace{-0.5em}
\begin{equation}
\label{L_proto_eq}
    \boldsymbol{L}^{\text{proto}}_{ij} = \exp(s(\boldsymbol{z}_{i},\boldsymbol{c}_{j}))/ {\textstyle \sum_{k=1}^{K}}\exp(s(\boldsymbol{z}_{i},\boldsymbol{c}_{k})),
\vspace{-0.5em}
\end{equation}
where $s(\boldsymbol{z}_{i},\boldsymbol{c}_{j})=\frac{\boldsymbol{z}_{i}^{\top} \boldsymbol{c}_{j}}{\Vert \boldsymbol{z}_{i} \Vert_2 \Vert \boldsymbol{c}_{j} \Vert_2}$ is the cosine similarity between node representation $\boldsymbol{z}_{i}$ and prototype $\boldsymbol{c}_{j}$. The semantic pseudo label $\boldsymbol{L}^{\text{proto}}$ contains class semantics of $\boldsymbol{Z}$. If the value of $\boldsymbol{L}^{\text{proto}}_{ij}$ is closer to 1,  $\boldsymbol{z}_i$ is with larger probability to lie in the class $j$. 

\begin{theorem}
    \label{cTc}
    If the ambient space is large enough, the clustering center $\boldsymbol{c}^{*} = \{\boldsymbol{c}_{1}^{*}, \boldsymbol{c}_{2}^{*}, \cdots, \boldsymbol{c}_{K}^{*}\}$ of the optimal solution $\boldsymbol{Z}^{*}$ in~\cref{eq:mcr2} satisfies $\boldsymbol{c}_{i}^{*\top} \boldsymbol{c}_{j}^{*}=0$ for $i \neq j$.
\end{theorem}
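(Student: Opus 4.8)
The plan is to deduce Theorem~\ref{cTc} directly from Lemma~\ref{lam1}, treating the prototypes $\boldsymbol{c}_j$ as (scaled) averages of the columns of the optimal representation matrix restricted to each class. Concretely, I would start by recalling from Lemma~\ref{lam1} that in the large-ambient-space regime the optimal solution $\boldsymbol{Z}^*$ of \cref{eq:mcr2} partitions into blocks $\boldsymbol{Z}_1^*, \dots, \boldsymbol{Z}_K^*$ with pairwise-orthogonal column spaces, i.e. $\boldsymbol{Z}_i^{*\top}\boldsymbol{Z}_j^* = \boldsymbol{0}$ for $i \neq j$. The key observation is that, by \cref{eq:prototype}, $\boldsymbol{c}_j^*$ is a nonnegative linear combination (in fact the uniform average) of the representation vectors $\boldsymbol{z}_i$ with $\boldsymbol{v}_i \in \mathcal{S}_j$; since at the optimum the membership $\mathcal{S}_j$ recovered from $\tilde{\boldsymbol{L}}$ coincides with class $j$, every such $\boldsymbol{z}_i$ lies in the column space of $\boldsymbol{Z}_j^*$. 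Hence $\boldsymbol{c}_j^* \in \mathrm{range}(\boldsymbol{Z}_j^*)$.

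Next I would carry out the short linear-algebra step: write $\boldsymbol{c}_i^* = \boldsymbol{Z}_i^* \boldsymbol{a}_i$ and $\boldsymbol{c}_j^* = \boldsymbol{Z}_j^* \boldsymbol{a}_j$ for suitable coefficient vectors $\boldsymbol{a}_i = \tfrac{1}{|\mathcal{S}_i|}\boldsymbol{1}$, $\boldsymbol{a}_j = \tfrac{1}{|\mathcal{S}_j|}\boldsymbol{1}$ (with $\boldsymbol{Z}_i^*$ understood as the matrix whose columns are the $\boldsymbol{z}_i$ of class $i$). Then for $i \neq j$,
\begin{equation}
\boldsymbol{c}_i^{*\top}\boldsymbol{c}_j^* = \boldsymbol{a}_i^{\top}\boldsymbol{Z}_i^{*\top}\boldsymbol{Z}_j^*\boldsymbol{a}_j = \boldsymbol{a}_i^{\top}\boldsymbol{0}\,\boldsymbol{a}_j = 0,
\end{equation}
which is exactly the claim. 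I would also remark that this argument only uses that $\boldsymbol{c}_j^*$ is \emph{some} vector in $\mathrm{range}(\boldsymbol{Z}_j^*)$, so it is robust to replacing the uniform average in \cref{eq:prototype} by any other convex or linear combination of within-class representations.

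The main obstacle — and the place where I would be most careful — is justifying that the membership sets $\mathcal{S}_j$ used to form the prototypes are the \emph{correct} class partition at the optimum, i.e. that $\arg\max_k \tilde{\boldsymbol{L}}_{ik}$ agrees with the block structure of $\boldsymbol{Z}^*$. Lemma~\ref{lam1} is stated for the partition $\mathbf{\Pi}$ that is jointly optimized in \cref{eq:mcr2}, whereas $\mathcal{S}_j$ is defined via the propagated label $\tilde{\boldsymbol{L}}$ from \cref{eq:denoising propagation}; strictly, one should either (a) assume the idealized setting in which label propagation returns the clean partition so that $\mathcal{S}_j$ equals the $j$-th block, or (b) note that the theorem is really a statement about the geometry of $\boldsymbol{Z}^*$ given \emph{any} fixed within-class grouping, and $\boldsymbol{c}_j^*$ inherits orthogonality as long as it is built only from vectors of a single block. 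I would state this assumption explicitly (matching the "ambient space is large enough" hypothesis already invoked), and then the rest is the one-line computation above. A secondary minor point is non-degeneracy: one should note $\boldsymbol{c}_j^* \neq \boldsymbol{0}$ generically, so that the orthogonality is non-vacuous, but this is not needed for the stated equality $\boldsymbol{c}_i^{*\top}\boldsymbol{c}_j^* = 0$.
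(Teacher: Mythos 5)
Your proof is correct and follows essentially the same route as the paper: invoke Lemma~\ref{lam1} for block orthogonality $\boldsymbol{Z}_i^{*\top}\boldsymbol{Z}_j^*=\boldsymbol{0}$, write each prototype as a within-class average, and expand $\boldsymbol{c}_i^{*\top}\boldsymbol{c}_j^*$ into a sum of vanishing cross-class inner products. Your explicit caveat about the membership sets $\mathcal{S}_j$ matching the block structure of $\boldsymbol{Z}^*$ is a point the paper's one-line proof silently assumes, so flagging it is a welcome (if not strictly necessary) refinement rather than a departure.
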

\begin{proof}
    If the ambient space is large enough, $\boldsymbol{Z}^{*}$ satisfies $\boldsymbol{Z}_{i}^{*\top} \boldsymbol{Z}_{j}^{*}=\boldsymbol{0}$ for $i \neq j$ (\cref{lam1}). Therefore, $\boldsymbol{c}_{i}^{*\top}\boldsymbol{c}_{j}^{*} = (\sum_{s=1}^{m_j}\boldsymbol{z}_{s}^{* \top}) \sum_{s=1}^{m_j}\boldsymbol{z}_{s}^{*} = \sum_{s=1}^{m_j}\sum_{l=1}^{m_j}\boldsymbol{z}_{s}^{* \top} \boldsymbol{z}_{l}^{*} = 0$ holds when $i \neq j$.
\end{proof}

\begin{remark}
    According to~\cref{cTc}, when coding rate reduction reaches the optimal value, the mean vectors of each class are orthogonal. This makes the representations discriminative enough between different classes, leading to higher confidence of learned prototype pseudo labels $\boldsymbol{L}^{\text{proto}}$. 
\end{remark}

\myPara{Semantic labels.} The prototype label $\boldsymbol{L}^{\text{proto}}$ provides the prototype similarity knowledge, and the $\tilde{\boldsymbol{L}}$ provides the propagated training labels. During training, to make full use of both labels, we introduce the semantic logits ${\boldsymbol{L}}^{\text{s}}$ which combines class prototypes and denoised labels together for semantic estimation,
\vspace{-0.4em}
\begin{equation}
    {\boldsymbol{L}}^{\text{s}} = (1-\beta)\boldsymbol{L}^{\text{proto}} + \beta \tilde{\boldsymbol{L}},
\vspace{-0.4em}
\end{equation}
where $\boldsymbol{L}^{\text{proto}}$ is obtained by ~\cref{L_proto_eq}. Accordingly, we define the semantic estimation operation process (phase 2) as $\boldsymbol{L}^{\text{s}}=\texttt{SE}(\boldsymbol{Z},\tilde{\boldsymbol{L}})$. 

\textit{During training}, the label matrix of label propagation is initialized by $\boldsymbol{L}^{(0)}=\boldsymbol{L}^{\text{s}}$. We further propagate the semantic labels as,
\begin{equation}
    \small
     {\boldsymbol{L}}^{(t+1)} = (1-\alpha_{2})\tilde{\boldsymbol{A}}{\boldsymbol{L}}^{(t)}+\alpha_{2} {\boldsymbol{L}}^{(t)}, \quad t=0,1,\cdots,T_2-1,
\end{equation}
where $\tilde{\boldsymbol{A}}$ is the normalized matrix of adjacency matrix $\boldsymbol{A}$, and the $T_2$ is the propagation depth. The propagated label matrix $\boldsymbol{L}^{(T_2)} \in \mathbb{R}^{N\times K}$ contains reliable semantic information covering the whole graph, and we can easily obtain semantic labels of all nodes by converting $\boldsymbol{L}=\boldsymbol{L}^{(T_2)}$ to one-hot encoding (\cref{eq:argmax}).

\begin{figure}[!t]
\vspace{-1em}
\begin{algorithm}[H]
    \renewcommand{\algorithmicrequire}{\textbf{Input:}}
	\renewcommand{\algorithmicensure}{\textbf{Output:}}
	\caption{Training procedure of ERASE.}
    \label{algorithm:Training procedure of ERASE}
    \begin{algorithmic}
        \REQUIRE A graph $\mathcal{G} = \left\{ \mathcal{V},\mathcal{E}\right\}$ with adjacency matrix $\boldsymbol{A}$,  features $\boldsymbol{X}$, noisy labels $\tilde{\boldsymbol{Y}}^{\text{train}}$; A GNN encoder initialized $\mathtt{Enc}_{\Theta}(\cdot)$; learning rate $\eta$; scaling weight $\gamma$; coding rate precision $\epsilon$, steps of label propagation $T_1, T_2$; coefficients $\alpha_1, \alpha_2 $ and $\beta$ and maximum iterations $I_{\max}$.
        \ENSURE Robust encoder $\mathtt{Enc}_{\Theta}(\cdot)$.
        \STATE Initialize $\tilde{\boldsymbol{L}}^{0};$\\
        \texttt{\small \textcolor{teal}{// label propagation before training}} 
        \FOR {$t=0,1,\cdots,T_1-1$} 
            \STATE $\tilde{\boldsymbol{L}}^{(t+1)} = (1-\alpha_{1})\tilde{\boldsymbol{A}}^{M}\tilde{\boldsymbol{L}}^{(t)}+\alpha_{1} \tilde{\boldsymbol{L}}^{(t)};$
        \ENDFOR\\
        \texttt{\small \textcolor{teal}{// training iterations}} 
        \FOR {$I=1,2,\cdots,I_{\max}$}
            \STATE $\boldsymbol{Z} = \mathtt{Enc}_{\Theta}(\boldsymbol{X},\mathcal{E});$
            \STATE $\boldsymbol{L}^{s}=\texttt{SE}(\boldsymbol{Z},\tilde{\boldsymbol{L}}^{T_1});$ \ \ \texttt{\small \textcolor{teal}{// get semantic labels}}
             \STATE Initialize ${\boldsymbol{L}}^{0}=\boldsymbol{L}^{s};$\\
            \texttt{\small \textcolor{teal}{// label propagation during training}} 
            \FOR{$t=0,1,\cdots,T_2-1$}
            \STATE     $ {\boldsymbol{L}}^{(t+1)} = (1-\alpha_{2})\tilde{\boldsymbol{A}}{\boldsymbol{L}}^{(t)}+\alpha_{2} {\boldsymbol{L}}^{(t)};$
            \ENDFOR
            \STATE $\boldsymbol{L} = \boldsymbol{L}^{T_2};$\\
            \texttt{\small \textcolor{teal}{// Update $\Theta$ by Eq.~(\ref{eq:training objective})}}
            \STATE $\Theta = \Theta + \eta \nabla_{\Theta}(\tilde{R}(\boldsymbol{Z}(\Theta), \epsilon,\gamma) 
- R(\boldsymbol{Z}(\Theta), \epsilon|\mathbf{\Pi}))$. 
        \ENDFOR
    \end{algorithmic} 
\end{algorithm}
\vspace{-3.0em}
\end{figure}

\vspace{-0.5em}
\subsection{Training Objective and Node Classification}
\vspace{-0.5em}
\label{sec:traininginf}

\myPara{Model training.} Given semantic labels $\boldsymbol{L}$, we can obtain the semantic membership of all nodes as follows, 
    \vspace{-0.2em}
\begin{equation}
\label{eq:argmax}
\mathbf{\Pi}_{j,i,i} = 
\begin{cases}
    1, \mathop{\arg\max}\limits_{k}{\boldsymbol{L}}_{ik} = j\\
    0, \mathop{\arg\max}\limits_{k}{\boldsymbol{L}}_{ik} \neq j
\end{cases}.
    \vspace{-0.2em}
\end{equation}
However, attempting to derive a sound representation from unlabelled data via optimizing $\Delta R$ directly will be an excessively ambitious undertaking.
Therefore, we scale the coding rate of all nodes during training. Here we use simplified strategy for scaling\footnote{We simplify $\tilde{R}(\boldsymbol{Z},\epsilon,\gamma_1,\gamma_2)\doteq \frac{1}{2\gamma_1}\displaystyle \log \det (\boldsymbol{I} + \frac{d\gamma_2}{N\epsilon^2}\boldsymbol{ZZ}^{\top}) $ by only controlling the weight of $\Delta R(\boldsymbol{Z},\epsilon)$.} as, 
    \vspace{-0.4em}
\begin{equation}
\label{eq:revised average coding rate}
    \tilde{R}(\boldsymbol{Z},\epsilon,\gamma)=\frac{\gamma}{2}\displaystyle \log \det (\boldsymbol{I} + \frac{d}{N\epsilon^2}\boldsymbol{Z}^{\top}\boldsymbol{Z}),
    \vspace{-0.4em}
\end{equation}
where $\boldsymbol{Z} = \mathtt{Enc}_{\Theta}(\boldsymbol{X},\mathcal{E})$, $\gamma$ is the hyper-parameter, and $\mathtt{Enc}_{\Theta}(\cdot)$ is a GNN encoder. We train our GNN following the objective, 
    \vspace{-0.2em}
\begin{equation}
\label{eq:training objective}
    \small
    \max _{\Theta} \Delta R(\boldsymbol{Z}(\Theta), \mathbf{\Pi}, \epsilon) = \tilde{R}(\boldsymbol{Z}(\Theta), \epsilon,\gamma) 
- R(\boldsymbol{Z}(\Theta), \epsilon \mid \mathbf{\Pi}), 
    \vspace{-0.2em}
\end{equation}
where $\tilde{R}(\boldsymbol{Z}(\Theta), \epsilon,\gamma)$ and $R^C(\boldsymbol{Z}, \epsilon|\mathbf{\Pi})$ are calculated
by~\cref{eq:revised average coding rate} and~\cref{eq:coding rate within a class} respectively. The training process is summarized in~\cref{algorithm:Training procedure of ERASE}.

\vspace{-0.5em}
\myPara{Node classification.} In the inference stage, after obtaining learned representations $\boldsymbol{Z}$, we perform the node classification task via logistic regression (LogReg)~\cite{buitinck2013api,scikitlearn}. Other linear or nonlinear classifiers are optional. More discussions about the choice of classifiers are provided in~\cref{sec:classificationoption}.

\subsection{Time Complexity Analysis of Training}
We take a $L$-layer GAT as an example. The time complexity of the model inference is $\mathcal{O}(L\vert \mathcal{V} \vert d_0d + L\vert \mathcal{V} \vert d^{2} + 2L\vert \mathcal{E}\vert d)$,  where $d$ is the dimension of the representations, and $|\mathcal{V}|$ and $|\mathcal{E}|$ denote the number of nodes and edges respectively. Without considering operations such as addition with low time consumption, the time complexity of decoupled label propagation is $\mathcal{O}(T_2K{\vert \mathcal{V}\vert}^{2} )$, where $T_1, T_2$ and $K$ denote the depth of label propagation and the number of classes respectively. The time complexity of computing the coding rate reduction is $\mathcal{O}(2Kd {\vert \mathcal{V}\vert}^{2}+Kd^3 + d^2\vert \mathcal{V} \vert+d^3 )$. In~\cref{sec:scalability of Algorithm}, we scale our algorithm by subgraph sampling~\cite{NIPS2017_5dd9db5e} to tackle the challenge of large-scale graphs.
The experimental comparison is attached in~\cref{sec:appTrainDe}.

\begin{table*}[bt]
\centering
\setlength{\tabcolsep}{2.9pt}
\renewcommand\arraystretch{1.0}
\footnotesize
\vspace{-1.2em}
\begin{tabular}{c|ccccc|ccccc}
\toprule
\specialrule{0em}{0pt}{0pt}
\textbf{Cora} &Asym-0.1 &   Asym-0.2 &   Asym-0.3 &  Asym-0.4 &   Asym-0.5 &   Sym-0.1 &   Sym-0.2 &  Sym-0.3 &   Sym-0.4 & Sym-0.5 
  \\ \hline
CE &   75.72~(0.57)	&  70.50~(1.26) &  62.52~(1.83)	& 48.26~(1.98) &34.23~(2.47)	&78.65~(0.83) &76.72~(1.08)	&73.72~(1.38)	& 67.30~(1.50)	&  60.28~(1.77) 
    \\
APL &75.41~(1.09)	&71.23~(1.23)	& 64.05~(1.64)	& 52.02~(2.90)	& 34.79~(3.10)	&  78.56~(0.49)	&  76.53~(0.68)	& 73.01~(0.93)	& 67.68~(1.45)	& 60.31~(2.00)\\ 
GCE &
    75.19~(0.81)	& 71.49~(1.28)	&  65.10~(1.96)	& 52.30~(2.46)	&33.84~(4.68)	&79.14~(0.78)	&76.67~(0.88)	& \underline{73.81~(1.08)}	&67.89~(1.47)	&60.54~(1.86)
   \\
Co-Dis &
    77.52~(1.04)	& 
    74.12~(2.48)	&
    64.48~(4.17)	&
    49.76~(3.69)	&
    32.76~(2.32)	&
    \underline{80.27~(0.94)}	&
    73.66~(2.26)	&
    72.04~(3.94)	 &
    68.36~(4.44)	&
    55.50~(5.47)
   \\
MCR${}^{2}$  &74.73~(0.70) &66.37~(0.88)	&59.80~(1.20)	&52.30~(1.26)	&36.69~(1.19)	&77.25~(0.84)	&74.32~(1.04)	&71.15~(1.13)	&68.06~(0.95)	&58.53~(1.70)
   \\
RT-GNN &    77.89~(2.74)   & 71.64~(1.29)                & 65.48 (2.69)               &52.71 (1.44)
    & 40.34 (3.37)   & 78.53 (2.57)       &74.48 (1.53)
    &72.50 (0.62) &67.91 (1.07) & 60.84 (0.86)     \\
PI-GNN &\underline{78.41~(1.12)}	&\underline{74.73~(3.07)}	&\underline{67.04~(4.81)}	&\underline{56.92~(4.52)}	&\underline{44.01~(5.64)}  &79.12~(1.42)	& \underline{78.11~(1.70)}	& 73.33~(4.96)	& \underline{68.81~(6.20)}	& \underline{61.71~(5.92)}
   \\  \hline
\rowcolor[HTML]{E7FBFF} ERASE &
    \textbf{81.43~(0.90)}	&
    \textbf{80.46~(1.00)}	&
    \textbf{79.52~(1.13)}	&
    \textbf{75.36~(2.32)}	&
    \textbf{48.00~(2.52)}	&
    \textbf{81.58~(0.80)}	&
    \textbf{81.97~(0.79)}	&
    \textbf{81.61~(0.95)}	&
    \textbf{80.13~(1.07)}	&
    \textbf{78.01~(1.05)} 
    \\ \hline
\specialrule{0em}{0.5pt}{0.5pt}
\hline
\textbf{CiteSeer} &
    Asym-0.1 &
    Asym-0.2 &
    Asym-0.3 &
    Asym-0.4 &
    Asym-0.5 &
    Sym-0.1 &
    Sym-0.2 &
    Sym-0.3 &
    Sym-0.4 &
    Sym-0.5 
    \\ \hline
CE &
    67.16~(2.98) &
    62.27~(1.32)	&
    58.71~(1.39)	&
    41.48~(1.22)	&
    34.30~(1.84)	&
    68.51~(0.42)	&
    64.65~(0.75)	&
    63.15~(1.00)	& 
    58.23~(1.12)	&
    42.37~(2.32)
   \\
APL &
    68.04~(1.13)	&
    62.19~(1.23)	&
    59.03~(2.26)	&
    41.01~(1.38)	&
    33.92~(1.64)	&
    68.51~(0.55)	&
    63.86~(1.11)	&
    62.73~(0.94) &
    57.85~(1.16)	&
    42.46~(1.78)
   \\
GCE &
    68.07~(3.36)	&
    62.60~(2.60)	&
    58.59~(3.15)	&
    41.66~(2.97)	&
    34.05~(2.33) &
    67.93~(3.01)	&
    63.76~(2.60)	&
    62.77~(2.77) &
    57.69~(2.76)	&
    42.71~(2.20)
   \\
Co-Dis &
    68.92~(1.19)	&
    63.56~(4.47)	&
    59.31~(3.66)	&
    40.40~(3.33)	&
    31.52~(3.77) &
    67.36~(1.27)	&
    67.75~(1.97)	&
    64.00~(3.66) &
    57.16~(3.66)	&
    43.72~(4.21)
   \\
MCR${}^{2}$ &
    68.07~(0.96)	&
    64.88~(1.54)	&
    {62.41}~(1.43)&
    42.57~(1.30) &
    33.33~(1.31)	&
    67.36~(0.74)	&
    63.76~(0.71) &
    61.89~(0.91) &
    57.52~(0.91)	&
    41.38~(1.31)
   \\
RT-GNN &\textbf {71.52~(0.66)}    &\underline{67.23~(1.35)}                 
    &\underline{64.84~(2.02)}  &\underline{49.59~(1.49)}
    &36.84~(1.51)           & 70.26 (1.93)       &
     67.50 (1.43) &66.04 (1.56) &\underline{63.57 (1.48)} & 49.74 (3.28)     \\
PI-GNN &
    {68.52}~(3.40)	&
    {66.33}~(3.47)	&
    56.71~(5.87)	&
    48.72~(6.28)	&
    \underline{37.13~(5.32)}	&
    \underline{70.62~(0.81)}	&
    \underline{68.23~(2.55)}	&
    \underline{67.23~(2.97)}	&
    61.93~(3.42)	&
    \underline{53.72~(5.66)}
   \\  \hline
\rowcolor[HTML]{E7FBFF} ERASE &
    \underline{70.70~(1.60)} &
    \textbf{69.91~(1.79)} &
    \textbf{69.45~(1.84)} &
    \textbf{59.32~(4.69)} &
    \textbf{49.62~(2.20)} &
    \textbf{70.81~(1.34)} &
    \textbf{69.85~(2.82)} &
    \textbf{69.08~(2.98)} &
    \textbf{68.37~(3.03)} &
    \textbf{58.56~(3.30)} 
   \\ \hline
\specialrule{0em}{0.5pt}{0.5pt}
\hline
\textbf{PubMed} &
    Asym-0.1 &
    Asym-0.2 &
    Asym-0.3 &
    Asym-0.4 &
    Asym-0.5 &
    Sym-0.1 &
    Sym-0.2 &
    Sym-0.3 &
    Sym-0.4 &
    Sym-0.5 
    \\ \hline
CE &
    77.17~(0.76) &
    {75.79}~(0.82)	&
    71.72~(1.75) &
    65.60~(2.84) &
    52.41~(4.51)	&
    77.32~(0.60) &
    77.08~(0.89) &
    71.07~(1.38)	&
    67.84~(1.20)	&
    63.19~(7.47)
   \\
APL &
    77.12~(0.84)	&
    \textbf{76.73~(0.84)}	&
    73.07~(1.94)	&
    \underline{66.18~(3.12)}	&
    53.88~(4.10)	&
    \underline{77.79~(1.16)}	&
    \underline{76.81~(0.84)}	&
    \underline{71.57~(1.73)}	&
    \underline{68.41~(1.61)}	&
    \underline{66.47~(2.04)}
   \\
GCE &
    {77.70}~(1.11) &
    76.05~(1.84)	&
    72.97~(2.11)	&
    65.30~(4.94)	&
    53.30~(5.62)	&
    \underline{77.79~(1.43)}	&
    77.42~(1.25)	&
    67.25~(8.38) &
    67.20~(2.39)	&
    63.27~(6.79) 
   \\
Co-Dis &
    \textbf{79.00~(1.66)} &
    76.24~(5.56)	&
    72.00~(6.46)	&
    62.55~(7.19)	&
    55.80~(9.53)	&
    77.64~(2.34)	&
    {75.28~(1.81)}	&
    72.92~(2.95) &
    66.76~(6.38)	&
    65.76~(7.98) 
   \\
MCR${}^{2}$  &
    76.90~(0.75)	&
    \underline{76.70~(0.89)}	&
    69.02~(1.21)	&
    59.50~(1.12)	&
    51.63(1.33)	&
    75.03~(1.32)	&
    73.63~(1.29)	&
    70.10~(1.17)	&
    65.72~(2.48)	&
    66.12~(1.04) 
   \\ 
RT-GNN & 77.62 (2.41)      & 76.25 (1.38)                &  \underline{73.19 (1.77)}       
    &65.22 (1.61)
    &  \underline{57.66 (1.93)}             &  76.72 (1.12)      &74.79 (1.04)&69.67 (1.07) &65.20 (1.72) &  59.10 (1.58)  \\
PI-GNN &
    76.01~(2.02)	& 
    73.83~(2.21)	& 
    70.62~(3.23)	& 
    63.33~(5.36)	& 
    55.42~(9.60)  & 	
    75.63~(1.71)	& 
    72.52~(3.62)	& 
    {68.13}~(4.79)	& 
    {62.82}~(5.10)	& 
    54.33~(5.73)
   \\ \hline
\rowcolor[HTML]{E7FBFF} ERASE &
    \underline{78.01~(1.10)}	&
    \underline{76.70~(1.07)}	&
    \textbf{76.71~(1.55)}	&
    \textbf{73.00~(2.62)}	&
    \textbf{61.46~(2.90)}	&
    \textbf{78.16~(0.88)}	&
    \textbf{77.86~(1.07)}	&
    \textbf{75.60~(0.85)}	&
    \textbf{72.72~(0.79)}	&
    \textbf{70.72~(1.08)}
  \\  \hline
\specialrule{0em}{0.5pt}{0.5pt}
\hline
\textbf{CoraFull} &
    Asym-0.1 &
    Asym-0.2 &
    Asym-0.3 &
    Asym-0.4 &
    Asym-0.5 &
    Sym-0.1 &
    Sym-0.2 &
    Sym-0.3 &
    Sym-0.4 &
    Sym-0.5 \\ \hline
CE &
    55.42~(0.21)	&
    50.76~(0.26)	&
    44.32~(0.32)	&
    33.76~(0.50)	&
    21.48~(0.29)	&
    56.48~(0.31)	&
    55.04~(0.33)	&
    52.57~(0.41)	&
    49.19~(0.40)	&
    44.71~(0.36) 
    \\
APL &
    {56.33}~(0.16)	&
    51.75~(0.38)	&
    45.73~(0.23)	&
    34.64~(0.43)	&
    21.17~(0.31)	&
    57.36~(0.26)	&
    56.36~(0.25)	&
    45.73~(0.23)	&
    50.68~(0.35)	&
    46.52~(0.44)
    \\
GCE &
    54.60~(0.33)	&
    50.41~(0.60)	&
    47.17~(0.51)	&
    37.00~(0.60)	&
    23.38~(0.91)	&
    55.37~(0.39)	&
    54.69~(0.33)	&
    51.82~(0.45)	&
    49.81~(0.51)	&
    45.48~(0.65)
    \\
Co-Dis &
    \textbf{58.96~(1.69)}	&
    52.78~(1.54)	&
    45.56~(1.49)	&
    34.82~(1.76)	&
    21.11~(2.87)	&
    \textbf{59.22~(1.05)}	&
    \textbf{56.71~(1.33)}	&
    \underline{54.20~(1.93)}	&
    48.90~(2.46)	&
    44.69~(1.14)
    \\
MCR${}^{2}$  &
    51.11~(0.37)	&
    45.75~(0.55)	&
    40.97~(0.58)	&
    35.22~(0.72)	&
    30.72~(1.26) &
    52.31~(0.33)	&
    50.83~(0.47)	&
    48.91~(0.58)	&
    46.86~(0.52) &
    45.56~(0.78)
    \\
    RT-GNN & 56.54 (0.30)      &52.15 (0.79)                 &      44.73 (0.46)             &38.65 (0.84)

    &  28.86 (0.56)            & \underline{57.87 (0.42)}       &55.57 (0.30)
    &53.07 (0.34) &\underline{50.84 (0.70)} & \underline{46.66 (0.61)}    \\
PI-GNN &
    \underline{56.62~(0.84)}	&
    \underline{53.81~(1.37)}	&
    \underline{48.83~(1.73)}	&
    \underline{40.02~(2.86)}	&
    \underline{31.33~(2.43)}	&
    56.71~(0.77)	&
    55.02~(0.99)	&
    52.93~(1.17)	&
    49.22~(1.67)	&
    45.53~(1.71)
    \\  \hline
\rowcolor[HTML]{E7FBFF} ERASE &
    55.60~(0.45)	&
    \textbf{53.96~(0.50)}	&
    \textbf{50.18~(0.49)}	&
    \textbf{41.79~(1.47)}	&
    \textbf{32.47~(1.11)}	&
    55.83~(0.54)	&
    \underline{55.33~(0.62)}	&
    \textbf{54.64~(0.63)}	&
    \textbf{52.07~(0.81)}	&
    \textbf{48.87~(0.64)}
    \\ \specialrule{0em}{0pt}{0pt}
  \bottomrule
\end{tabular}
\vspace{-1.2em}
\caption{Comparison with baselines in test accuracy ($\pm $ std) $\%$ with symmetric noise and asymmetric noise. Asym-0.1 means asymmetric noise type with a
0.1 noise rate and Sym-0.1 means symmetric noise type
with a 0.1 noise rate. }
\vspace{-2em}
\label{tab:baseline_main_results}
\end{table*}

\vspace{-0.8em}
\section{Experiments}
\vspace{-0.4em}

In this section, we present experiments on multiple graph node clarification benchmarks to show the robustness of our ERASE against label noise. Besides, we perform more empirical evaluations and ablations to verify key designs and insights of our method. 
\vspace{-0.4em}

\subsection{Experiment Setups}
\label{exp setting} 
\vspace{-0.5em}
\myPara{Datasets.} We conduct experiments on four graph datasets: Cora, CiteSeer, PubMed, and CoraFull. For Cora, CiteSeer, and PubMed, we apply the default split in~\cite{Yang_Cohen_Salakhutdinov_2016}. For CoraFull, we randomly select 20 nodes from each class for training, 30 nodes for validation, and the rest nodes for testing. Moreover, to explore the scalability of our method, we perform experiments on a large-scale graph dataset, OGBn-arxiv~\cite{NEURIPS2020_fb60d411}. Dataset statistics are provided in~\cref{appendix:dataset}. 

\myPara{Label noise generation.} We manually corrupt labels following two standard ways~\cite{patrini2017making, van2015learning, du2023noise}. 1) \textit{Symmetric flipping}: labels in each class are flipped to other classes with equal probability. 2) \textit{Asymmetric pair flipping}: labels in each class are flipped to only one certain class. Details of the label corruption ways can be found in~\cref{appendix:noise setting}. We set the noise rate $\phi \in \{0.1, 0.2, 0.3, 0.4, 0.5\}$ for both symmetric and asymmetric noise.

\myPara{Implementation details.} We use a 2-layer GAT as the backbone whose hidden dimension, attention heads in the first and last layers are set to 256, 8, and 1 respectively. Adam optimizer~\cite{kingma2014adam} with a learning rate of 0.001 and weight decay of $5\times 10^{-4}$ are utilized. The model is trained with 400 epochs on a single NVIDIA RTX-3090 GPU. The main results are reported with mean and standard values in 10 runs. More details are in~\cref{sec:appTrainDe}.

\myPara{Baselines.} We compare ERASE with some powerful baselines: Cross-entropy~(CE) Loss with GAT~\cite{Liu_Zhou_2020} backbone, Robust loss GCE~\cite{zhang2018generalized}, Robust loss APL~\cite{pmlr-v119-ma20c}, Vanilla MCR${}^{2}$~\cite{NEURIPS2020_6ad4174e}, Co-Dis~\cite{codis}, RT-GNN~\cite{qian2023robust}, and PI-GNN~\cite{du2023noise}, which include traditional optimization loss, label noise robust loss, latest label noise learning methods, and latest label noise learning methods on graph. We set the hyper-parameters to be consistent for a fair comparison and more details about baselines are provided in~\cref{sec:Details of baselines}.

\subsection{Comparisons with Baselines}
\vspace{-0.5em}
\label{sec:comparisons with baselines}
The main results comparing with baselines are shown in~\cref{tab:baseline_main_results}, showing that ERASE outperforms baseline methods on existing graph datasets. Particularly, ERASE shows significant gains in prediction accuracy on larger label noise ratio scenarios. As shown in~\cref{tab:baseline_main_results}, when $\phi=0.5$, ERASE outperforms baselines \textcolor{red}{$+3.99, +12.49, +3.04, +1.14$} in the asymmetric noise scenario on four datasets respectively. Besides, ERASE outperforms baselines \textcolor{red}{$+10.31, +4.84, +4.25, +2.21$} in the symmetric noise scenario respectively on four datasets when $\phi=0.5$. The comparison shows that our method is more robust against label noise than baselines. Figures of performance comparison are shown in~\cref{appendix:performance comparison}.

\vspace{-0.5em}
\subsection{Ablation Study}
\vspace{-0.7em}
\label{sec:Abalation_study}

\myPara{Label noise tolerance controller $\epsilon^{2}$ and empirical supports.} 
As shown in the definition of the coding rate and~\cref{rm1}, we explore how $\epsilon^{2}$ helps learn error-resilient representations. 
(1) As shown in~\cref{tab:eps}, the $\epsilon^{2}$ in~\cref{rm1} shows the error tolerance on the label noise. In~\cref{tab:eps}, if there is a larger number of mislabeled nodes, it needs a larger $\epsilon^{2}$ to tolerate label noise. When the noise rate is low, if the noise tolerance $\epsilon^{2}$ is too high, over-tolerance of the model is harmful to estimate the coding rate precisely and makes the model hard to learn, which indicates the trade-off behind the tolerance $\epsilon^{2}$. 
(2) As discussed in the last paragraph of~\cref{sec:preliminaries}, we compare the volume~\cite{Ma_Derksen_Hong_Wright_2007} of the space of noise $\delta$ and tolerance $\mathcal{N}(\boldsymbol{0}, \frac{\epsilon^2}{d}\boldsymbol{I})$ via $NTVR$ (Noise/Tolerance Volume Ratio) defined in~\cref{sec:definition of NVTR}. As shown in~\cref{tab:eps_vol}, the value of $NTVR$ is always less than 1, which means the volume of error $\delta$ caused by label noise is within the tolerance. This provides the experimental evidence of the assumption in~\cref{rm1}. With respect to the trade-off discussed in (1), the $NTVR\approx 0.3$ is an ideal judgment for choosing $\epsilon^{2}$ in larger label noise ratio scenarios and $NTVR\approx 0.1$ in lower label noise ratio scenarios. Here, $\epsilon^{2}=0.05$ is the design choice for two datasets. 
\vspace{-0.4em}

\begin{table*}[!t]
\begin{adjustbox}{width=1.0\textwidth,center}
    \begin{minipage}{0.6\linewidth}
\begin{table}[H]
\tiny
\setlength\tabcolsep{1.5pt}
\begin{tabular}{c|ccccc|ccccc}
\toprule
\textbf{Cora}   & Asym-0.1    & Asym-0.2    & Asym-0.3    & Asym-0.4    & Asym-0.5     & Sym-0.1     & Sym-0.2     & Sym-0.3     & Sym-0.4     & Sym-0.5     \\ \hline
0.01            & \textbf{81.96~(0.42)} & \textbf{81.28~(1.19)} & 77.70~(1.39) & 67.58~(3.36) & 32.98~(1.48) & \textbf{81.22~(0.80)} & 80.62~(0.87) & 79.62~(0.80) & \textbf{80.20~(1.41)} & 72.40~(3.09) \\
 \rowcolor[HTML]{E7FBFF} 0.05          & 81.42~(0.89) & 80.12~(0.84) &\textbf{79.58~(0.90)} & \textbf{75.00~(1.63)} & \textbf{48.34~(2.83)} & \textbf{81.22~(0.48)} & \textbf{80.84~(0.56)} & \textbf{80.90~(0.62)}  & 79.70~(0.97) & \textbf{77.74~(0.69)} \\
0.1             & 80.28~(0.91) & 79.24~(0.92) & 78.78~(0.89) & 75.40~(2.00) & 46.54~(3.41) & 78.86~(1.44) & 80.34~(0.47) & 80.32~(0.56) & 79.28~(1.29) & 75.90~(1.62) \\
0.5             & 74.44~(0.75) & 73.42~(2.03) & 68.78~(3.53) & 56.08~(2.64) & 41.98~(5.50) & 76.16~(0.66) & 75.84~(1.15) & 74.40~(2.20) & 70.10~(2.35) & 65.18~(3.73) \\ \hline
\specialrule{0em}{0.5pt}{0.5pt} \hline
\textbf{PubMed} & Asym-0.1    & Asym-0.2    & Asym-0.3    & Asym-0.4    & Sym-0.5     & Sym-0.1     & Sym-0.2     & Sym-0.3     & Sym-0.4     & Sym-0.5     \\ \hline
0.01            & 77.42~(1.15) & \textbf{76.68~(0.93)} & 73.96~(1.56) & 68.94~(2.49) & 56.76~(1.87) & 77.70~(1.26) & 77.24~(0.87) & 73.12~(0.88) & 69.86~(1.16) & 65.86~(1.39) \\
 \rowcolor[HTML]{E7FBFF}  0.05            & \textbf{78.48~(0.84)} & 76.00~(1.17) & \textbf{77.40~(0.77)} & \textbf{72.08~(2.35)} & 62.08~(1.03) & \textbf{78.44~(0.83)} & 75.96~(0.49) & \textbf{76.10~(0.51)} & \textbf{72.76~(0.42)} & 70.78~(1.24) \\
0.1 & 77.00~(2.02) & 76.08~(1.77) & 75.08~(2.24) & 71.18~(3.57) & \textbf{62.36~(2.30)} & 77.44~(1.64) & \textbf{77.84~(1.26)} & 75.72~(1.09) & 72.56~(1.48) & \textbf{71.90~(0.41)} \\
0.5             & 72.82~(1.44) & 72.74~(1.64)
&  70.38~(1.34) & 65.76 ~(2.51)            & 57.68~(2.72)            &  73.14~(0.75)           &  71.92~(0.40)           &  71.60~(1.47)           &  66.60~(1.25)           & 67.54~(0.68)            \\ \bottomrule
\end{tabular}
\vspace{-1.8em}
\caption{Mean test accuracy when changing $\epsilon^{2}$.}
\vspace{-2.3em}
\label{tab:eps}
\end{table}
\end{minipage}
\hspace{5pt}
\begin{minipage}{0.36\linewidth}  
\vspace{-1.4em}
\begin{table}[H]
\tiny
\renewcommand\arraystretch{0.9}
\setlength\tabcolsep{1.5pt}
\begin{tabular}{c|ccccc|ccccc}
\toprule
\multirow{2}{*}{\textbf{Cora}}   & \multicolumn{5}{c|}{Asymmetric}       & \multicolumn{5}{c}{Symmetric}         \\ \cline{2-11} 
                                 & 0.1   & 0.2   & 0.3   & 0.4   & 0.5   & 0.1   & 0.2   & 0.3   & 0.4   & 0.5   \\ \hline
0.01                             & 0.183 & 0.302 & 0.401 & 0.539 & 0.548 & 0.153 & 0.198 & 0.233 & 0.292 & 0.467 \\
 \rowcolor[HTML]{E7FBFF}  0.05                             & 0.112 & 0.168 & 0.387 & 0.122 & 0.342 & 0.081 & 0.100 & 0.132 & 0.157 & 0.307 \\
0.1                              & 0.062 & 0.130 & 0.194 & 0.291 & 0.301 & 0.076 & 0.100 & 0.100 & 0.105 & 0.200 \\
0.5                              & 0.019 & 0.027 & 0.078 & 0.108 & 0.111 & 0.036 & 0.031 & 0.048 & 0.033 & 0.089 \\ \hline
\specialrule{0em}{0.5pt}{0.5pt} \hline
\multirow{2}{*}{\textbf{PubMed}} & \multicolumn{5}{c|}{Asymmetric}       & \multicolumn{5}{c}{Symmetric}         \\ \cline{2-11} 
                                 & 0.1   & 0.2   & 0.3   & 0.4   & 0.5   & 0.1   & 0.2   & 0.3   & 0.4   & 0.5   \\ \hline
0.01                             & 0.334 & 0.339 & 0.398 & 0.487 & 0.479 & 0.365 & 0.354 & 0.366 & 0.435 & 0.431 \\
 \rowcolor[HTML]{E7FBFF}  0.05                             & 0.111 & 0.111 & 0.145 & 0.219 & 0.235 & 0.115 & 0.113 & 0.133 & 0.186 & 0.176 \\
0.1                              & 0.050 & 0.055 & 0.087 & 0.148 & 0.181 & 0.055 & 0.053 & 0.062 & 0.096 & 0.101 \\
0.5                              & 0.009 & 0.010 & 0.017 & 0.039 & 0.063 & 0.011 & 0.010 & 0.013 & 0.020 & 0.021\\ \bottomrule
\end{tabular}
\vspace{-1.8em}
\caption{$NTVR$ value when changing $\epsilon^{2}$.}
\vspace{-6em}
\label{tab:eps_vol}
\end{table}
\end{minipage}
\end{adjustbox}
\end{table*}

\vspace{-0.5em}
\myPara{Ablation study on decoupled label propagation.} In~\cref{sec:estimation of senmantic-structural label}, we split label propagation on graphs into label denoising propagation before training (DP, \cref{sec:dp}) and label semantic propagation during training (SP, \cref{sec:sp}). As shown in~\cref{tab:impact of deoupled label propagation}, both label propagation phases contribute significantly to robustness, especially in high label noise ratio scenarios.

\begin{table*}[!t]
\begin{adjustbox}{width=1.0\textwidth,center}
\begin{minipage}{0.39\linewidth}

\begin{table}[H]
\centering
\scriptsize
\setlength\tabcolsep{4.4pt}
\vspace{-1.2em}
\begin{tabular}{c|ccc|ccc}
\toprule \specialrule{0em}{0.0pt}{0.0pt}
\textbf{Cora}     & \multicolumn{3}{c|}{Asymmetric}                  & \multicolumn{3}{c}{Symmetric}                    \\ \hline
\textbf{DP\quad SP}    & 0.3            & 0.4            & 0.5            & 0.3            & 0.4            & 0.5            \\ \hline
\XSolidBrush\quad\XSolidBrush                & 78.50          & 60.58          & 29.16          & 80.52          & 77.84          & 73.92          \\
\XSolidBrush\quad\Checkmark               & \underline{79.82}    & 70.92          & 39.80          & \textbf{81.00} & 79.12          & 76.58          \\
\Checkmark\quad\XSolidBrush               & \textbf{79.84} & \underline{73.36}    & \underline{42.84}    & 80.84          & \textbf{79.96} & \underline{77.16}    \\ \hline
\rowcolor[HTML]{E7FBFF}
\Checkmark\quad\Checkmark               & 79.58          & \textbf{75.00} & \textbf{48.34} & \underline{80.90}    & \underline{79.70}    & \textbf{77.74} \\ \hline
\specialrule{0em}{0.5pt}{0.5pt}
\hline
\textbf{PubMed} & \multicolumn{3}{c|}{Asymmetric}                  & \multicolumn{3}{c}{Symmetric}                    \\ \hline
\textbf{DP\quad SP}    & 0.3            & 0.4            & 0.5            & 0.3            & 0.4            & 0.5            \\ \hline
\XSolidBrush\quad\XSolidBrush &74.84& 67.46 & 53.18      & 75.02 &71.20& 62.10  \\
\XSolidBrush\quad\Checkmark    &    \underline{75.90}&	\underline{67.94}	&53.50  &      75.52	&\underline{71.94}&	64.90 \\
\Checkmark\quad\XSolidBrush               & 75.72	&67.82&	\underline{54.16}    &\underline{75.66}	&71.50	&\underline{64.96}     \\ \hline
\rowcolor[HTML]{E7FBFF}
\Checkmark\quad\Checkmark               & \textbf{77.40} & \textbf{72.08} & \textbf{62.08} & \textbf{76.10 }    & \textbf{72.76 }          & \textbf{70.78 }    \\
\specialrule{0em}{0.0pt}{0.0pt} \bottomrule
\end{tabular}
\vspace{-1.2em}
\caption{\centering Ablation on w/ or w/o denoising propagation~(DP) and semantic propagation (SP).}
\label{tab:impact of deoupled label propagation}
\end{table}
\end{minipage}

\begin{minipage}{0.38\linewidth}
\begin{table}[H]\centering
\scriptsize
\setlength\tabcolsep{2.8pt}
\renewcommand\arraystretch{0.95}
\vspace{-1.0em}
\begin{tabular}{c|ccccc}
\toprule  \specialrule{0em}{0.0pt}{0.0pt}
                                   \multicolumn{6}{c}{\textbf{Cora}}  \\ \hline
Asym.   & 0.1            & 0.2            & 0.3            & 0.4            & 0.5            \\ \hline
CE                                & 74.16          & 73.26          & 65.04          & 59.06          & 34.86          \\
\rowcolor[HTML]{E7FBFF}
ERASE                             & \textbf{81.42} & \textbf{80.12} & \textbf{79.58} & \textbf{75.00} & \textbf{48.34} \\ \hline \specialrule{0em}{0.5pt}{0.5pt} \hline
Sym.   & 0.1            & 0.2            & 0.3            & 0.4            & 0.5            \\ \hline
CE                                & 75.46          & 73.16          & 71.32          & 70.94          & 62.20          \\
\rowcolor[HTML]{E7FBFF}
ERASE                             & \textbf{81.22} & \textbf{80.84} & \textbf{80.90} & \textbf{79.70} & \textbf{77.74} \\ \hline
\specialrule{0em}{0.5pt}{0.5pt}
\hline
                                   \multicolumn{6}{c}{\textbf{PubMed}}  \\ \hline

Asym.  & 0.1            & 0.2            & 0.3            & 0.4            & 0.5            \\ \hline
CE                                & 77.06          & 74.92          & 74.26          & 69.04          & 55.64          \\
\rowcolor[HTML]{E7FBFF}
ERASE                             & \textbf{78.48} & \textbf{76.00} & \textbf{77.40} & \textbf{72.08} & \textbf{62.08} \\ \hline
\specialrule{0em}{0.5pt}{0.5pt}
Sym.  & 0.1            & 0.2            & 0.3            & 0.4            & 0.5            \\ \hline
CE                                & 77.44          & 74.86          & 74.42          & 65.86          & 63.80          \\
\rowcolor[HTML]{E7FBFF}
ERASE                             & \textbf{78.44} & \textbf{75.96} & \textbf{76.10} & \textbf{72.76} & \textbf{70.78}\\ \specialrule{0em}{0.0pt}{0.0pt} \bottomrule
\end{tabular}
\vspace{-1.2em}
\caption{\centering Ablation of how to estimate semantic labels (semantic labels learned by CE v.s. ERASE).}
    \label{tab:impact of coding rate reduction on semantic information}
\end{table}
\end{minipage}

\begin{minipage}{0.4\linewidth}
\vspace{-0.7em}
\begin{table}[H]
    \centering
    \scriptsize
    \setlength\tabcolsep{3pt}
    \begin{tabular}{c|ccc|ccc}
    \toprule \specialrule{0em}{0.0pt}{0.0pt}
                                        & \multicolumn{3}{c|}{Asymmetric} & \multicolumn{3}{c}{Symmetric} \\ \cline{2-7} 
    \multirow{-2}{*}{\textbf{Cora}}     & 0.3       & 0.4      & 0.5      & 0.3      & 0.4      & 0.5     \\ \hline
    GCN        &   78.74   &  \textbf{75.73}    &   47.31 &   80.37   &   80.07   & \textbf{ 78.19}      \\ 
    GraphSAGE     & 79.14     &72.63  &  45.12   &  80.38    &78.53   & 75.75   \\ \hline
    \rowcolor[HTML]{E7FBFF}
    GAT  &  \textbf{79.52}& 75.36&\textbf{48.00}& \textbf{81.61}	&  \textbf{80.13}	& 78.01    \\ \hline 
    \specialrule{0em}{0.5pt}{0.5pt}
    \hline
    & \multicolumn{3}{c|}{Asymmetric} & \multicolumn{3}{c}{Symmetric} \\ \cline{2-7} 
    
    \multirow{-2}{*}{\textbf{CiteSeer}} & 0.3       & 0.4      & 0.5      & 0.3      & 0.4      & 0.5     \\ \hline
    GCN    & 68.79    &  58.97       &   48.19    &  \textbf{70.69}       &  \textbf{69.64}   & 58.88  \\
    GraphSAGE   &66.12     & 52.61 &  43.39 & 67.47   & 52.61    & \textbf{59.17}     \\ \hline
    \rowcolor[HTML]{E7FBFF}
    GAT                              &    \textbf{ 69.45}      &  \textbf{ 59.32}    &  \textbf{49.62}     &  69.08      &   68.37   &  58.56 \\  \hline   \specialrule{0em}{0.5pt}{0.5pt}
    \hline
        & \multicolumn{3}{c|}{Asymmetric} & \multicolumn{3}{c}{Symmetric} \\ \cline{2-7} 
    \multirow{-2}{*}{\textbf{PubMed}} & 0.3       & 0.4      & 0.5      & 0.3      & 0.4      & 0.5     \\ \hline
    GCN    &74.70    &  68.83     &   58.32    &  74.64       &  72.40  & 69.30  \\
    GraphSAGE       & 76.18  & 70.33 &  59.91 &  75.34 & 72.38&69.64     \\ \hline
    \rowcolor[HTML]{E7FBFF}
    GAT                              &    \textbf{76.71}     &   \textbf{73.00}    &  \textbf{61.46}     &  \textbf{75.60}     &  \textbf{ 72.72}  &  \textbf{70.72} \\ \specialrule{0em}{0.0pt}{0.0pt}
    \bottomrule  
    \end{tabular}
    \vspace{-1.2em}
    \caption{
      \centering Ablation on different backbones.}
    \label{tab:backbone}
    \end{table}
\end{minipage}
\end{adjustbox}
\vspace{-1.4em}
\end{table*}

\myPara{Ablation study on the error-resilient training objective design.}
In~\cref{{sec:sp}}, we claimed that representations learned by coding rate reduction can well estimate semantic labels in noisy label scenarios. To verify whether coding rate reduction plays a crucial role in estimating semantic labels, we compare our method with the cross-entropy~(CE) training objective. The comparisons are presented in~\cref{tab:impact of coding rate reduction on semantic information}. The results show that maximizing the coding rate reduction training objective provides a more robust estimation of semantic labels when meeting label noise. This verifies the basic motivation on why learning semantic labels via CE loss is not a wise choice.

\vspace{0.2em}
\myPara{Ablation study on classification methods.}
\label{sec:classificationoption}
Here, we would like to explore whether a linear classification method is enough for the node classification after training. We compare the Logistic Regression~(LogReg, our choice) with some linear~(Linear SVM) and non-linear classification~(Polynomial SVM, and MLP) methods. The implementation details are shown in ~\cref{sec:implementation details of classfication methods}. As shown in~\cref{tab:evalution methods}, both linear and non-linear methods perform well when performing classification, and LogReg usually performs better than non-linear methods. Besides, the linear algorithms are more efficient than non-linear methods. The key explanation on why a linear algorithm is sufficient for final step node classification is based on~\cref{lam1}, where the error-resilient training objective aims to make the representations between different classes orthogonal. This objective is also supported by the empirical results in~\cref{sec:visrep}.

\begin{table*}[!t]\vspace{-1em}
\begin{adjustbox}{width=1.0\textwidth,center}
\begin{minipage}{0.37\linewidth}
\begin{table}[H]
\centering
\scriptsize
\setlength\tabcolsep{4pt}
\renewcommand\arraystretch{0.9}
\begin{tabular}{c|ccc|ccc}
\toprule \specialrule{0em}{0.0pt}{0.0pt}
                                    & \multicolumn{3}{c|}{Asymmetric} & \multicolumn{3}{c}{Symmetric} \\ \cline{2-7} 
\multirow{-2}{*}{\textbf{Cora}}     & 0.3       & 0.4      & 0.5      & 0.3      & 0.4      & 0.5     \\ \hline
MLP                                 &   \textbf{79.66}   &   73.90    &   \textbf{50.12}   &   81.04    &   \underline{80.08}   & \underline{77.46}        \\ 
SVM (P)                                 &  79.64         &   73.26   &  47.52     &   81.00   &  80.04   &   \underline{77.46}  \\
SVM (L)                                 &    80.34       &    \underline{74.16}      &    47.58      &     \underline{81.50}     &    79.84      &   77.40      \\ \hline
\rowcolor[HTML]{E7FBFF} 
LogReg  &  \underline{79.52}	&  \textbf{75.36}	&  \underline{48.00}	&  \textbf{81.61}	&  \textbf{80.13}	& \textbf{78.01}    \\ \hline 
\specialrule{0em}{0.5pt}{0.5pt}
\hline
& \multicolumn{3}{c|}{Asymmetric} & \multicolumn{3}{c}{Symmetric} \\ \cline{2-7} 

\multirow{-2}{*}{\textbf{CiteSeer}} & 0.3       & 0.4      & 0.5      & 0.3      & 0.4      & 0.5     \\ \hline
MLP                                 &  \underline{68.50}         &   \underline{60.56}    &   \underline{45.18}    &   \underline{68.56}       &   68.07       &  \textbf{63.12}  \\
SVM (P)                                 &   66.68   &   59.20  &  42.78   &  67.20  &   66.42   &   \underline{61.82}   \\ 
SVM (L)                                 &    66.76       &   \textbf{61.34}   &   44.32   & 67.72   &   \underline{68.16}   &   61.80  \\ \hline
\rowcolor[HTML]{E7FBFF} 
LogReg                              &     \textbf{69.45}      &   59.32    &  \textbf{49.62}     &  \textbf{69.08}      &   \textbf{68.37}   &  58.56 \\  \specialrule{0em}{0.0pt}{0.0pt} \bottomrule  
\end{tabular}
\vspace{-1.4em}
\caption{
  \centering Ablation on linear and non-linear classification methods.}
\label{tab:evalution methods}
\end{table}
\end{minipage}
\begin{minipage}{0.6\linewidth}

\begin{table}[H]\centering
\setlength{\tabcolsep}{2pt}
\renewcommand\arraystretch{1.1}
\scriptsize
\label{table:ogbarxiv}
\begin{tabular}{c|ccc|ccc}
\toprule   \specialrule{0em}{0.0pt}{0.0pt} 

{\textbf{OGBn-}} & \multicolumn{3}{c|}{Asymmetric} & \multicolumn{3}{c}{Symmetric} \\ \cline{2-7} 
{\textbf{arxiv}} &
  \multicolumn{1}{c}{0.3} &
  \multicolumn{1}{c}{0.4} &
  \multicolumn{1}{c|}{0.5} &
  \multicolumn{1}{c}{0.3} &
  \multicolumn{1}{c}{0.4} &
  \multicolumn{1}{c}{0.5} \\ \hline
CE          & 
50.21~(0.59)        &
44.32~(1.36)            &
29.60~(2.46)        &
54.24~(0.32)         &
53.70~(0.45)        &
54.52~(0.58)       
\\
APL         &
55.88~(2.92)    &
52.88~(1.53) &
30.94~(2.09)     &
57.63~(0.54)  &
52.88~(1.53)       &
52.90~(2.17)
\\
GCE   &      
52.17~(0.51)&
47.66~(0.49)&
32.84~(2.66)&
53.89~(0.41)&
53.27~(0.36)&
52.83~(0.89)
 \\
Co-Dis    &  
51.59~(0.78)&
50.44~(0.70)&
32.44~(4.91)& 
51.57~(0.94)&
50.72~(0.83)&
50.37~(0.71)
 \\
MCR${}^{2}$ &  57.62~(0.75)       & 48.39~(1.34)             & 28.34~(1.45)       & 61.26~(0.22)         &60.84~(0.70)          &  59.94~(0.32)       \\
RT-GNN      & -       & -    & -      &  -  & -  & -  \\  
PI-GNN      & 51.56 (0.17)       &  47.47 (0.56)     &34.20 (1.19)        &  53.70~(0.45)       &53.64 (0.29)          &53.36 (0.23)        \\   \hline
\rowcolor[HTML]{E7FBFF} 
ERASE     &  \textbf{67.21~(0.21)}      &\textbf{62.40 (0.16) }      & \textbf{34.74~(0.68)}       &  \textbf{ 68.23~(0.06)}       & \textbf{67.96 (0.11)}         &\textbf{ 67.63 (0.21)}        \\   \specialrule{0em}{0.0pt}{0.0pt}  \bottomrule
\end{tabular}
\vspace{-1.4em}
\caption{
  \centering Comparison with baselines on OGBn-arxiv. All experiments are run 5 times. The best results are in bold. (RT-GNN fails on OGBn-arxiv.)}
\end{table}
\end{minipage}
\end{adjustbox}
\vspace{-0.9em}
\end{table*}


\vspace{0.2em}
\myPara{Depth of label propagation.} 
As shown in~\cref{tab:impact of deoupled label propagation}, both phases of decoupled label propagation contribute to the final results. Here, we set $T_1=T_2=T$ to explore how the propagation depth affects the result. As shown in~\cref{fig:ablation_study_T}, the classification accuracy grows when $T$ is increasing at the beginning. This shows that aggregating labels from neighbors is quite essential for label noise scenario. When $T$ is too large, the model will perform poorly, which is mainly due to the over-smoothing problem of labels on the graph. We provide more detailed results in~\cref{sec:results of propagation steps}. 
\vspace{-0.5em}
\myPara{Different network backbones.}
To verify the generalization ability of the algorithm, we ablated the performance of the ERASE under different GNN backbones. To this end, we compare the GAT backbone with GCN~\cite{kipf2016semi} and GraphSAGE~\cite{NIPS2017_5dd9db5e}. Results are shown in~\cref{tab:backbone}. Results show that our ERASE method enjoys different GNN backbones on these datasets. For a fair comparison, we take the GAT backbone to compare with baselines in~\cref{tab:baseline_main_results}. For detailed results, please refer to~\cref{appendix:backbone}.
Results in ablations are reported with the mean of 5 runs. Due to the page limits, we leave more discussions about our technical design in~\cref{sec:more ablation}.

\vspace{-0.8em}
\begin{figure}[htbp]
    \vspace{-0.4em}
    \captionsetup{}
  \centering
  \begin{subfigure}{0.22\textwidth}
    \includegraphics[width=\textwidth]{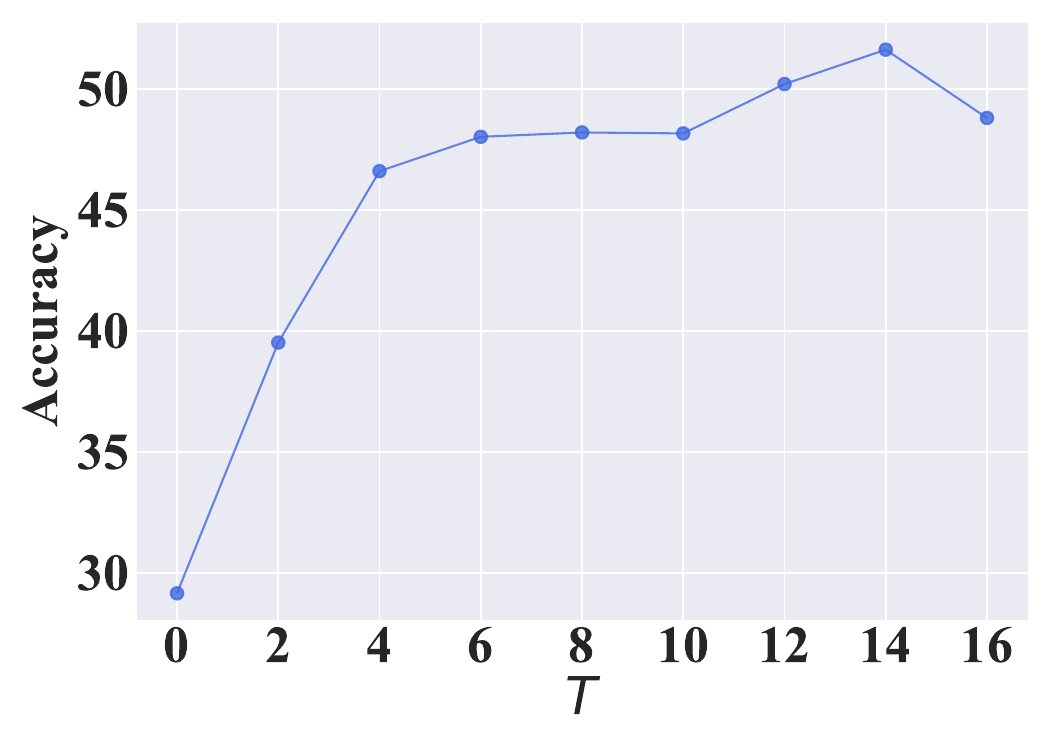}

\vspace{-0.4em}
    \caption{Asymmetric $\phi = 0.5$.}
    \label{fig:ablation_study_T_Cora_asymm}
  \end{subfigure}
  \begin{subfigure}{0.22\textwidth}
    \includegraphics[width=\textwidth]{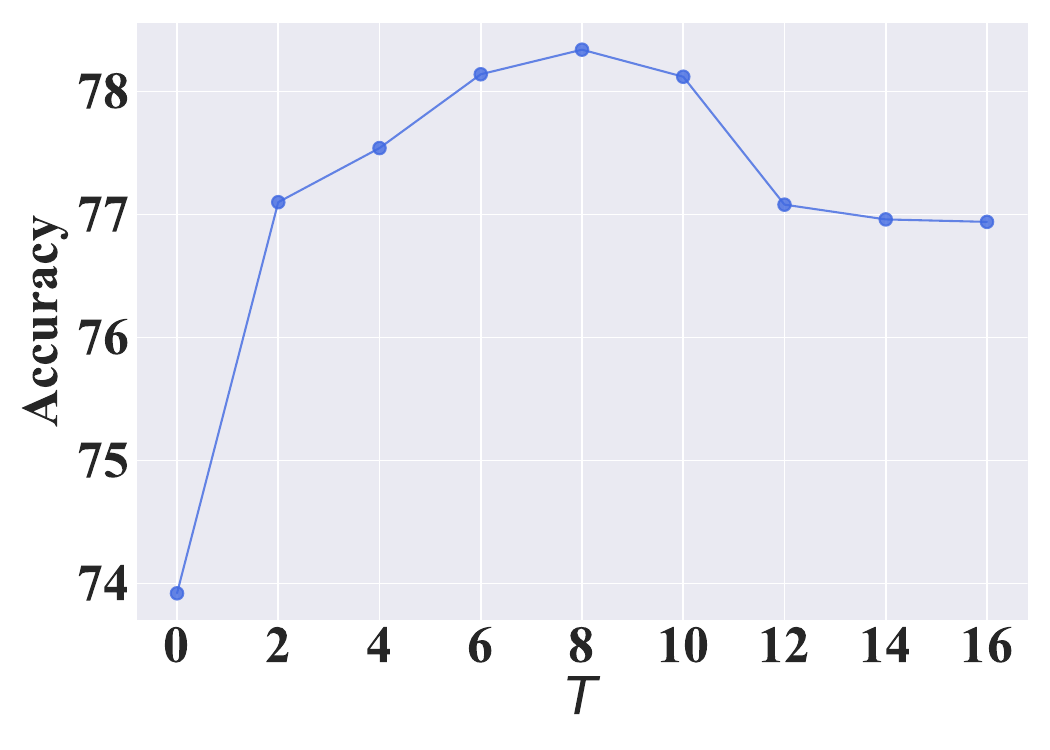}
    
\vspace{-0.4em}
    \caption{Symmetric $\phi = 0.5$.}
     \label{fig:ablation_study_T_Cora_symm}
  \end{subfigure}
  \newline
  \centering
  \begin{subfigure}{0.22\textwidth}
    \includegraphics[width=\textwidth]{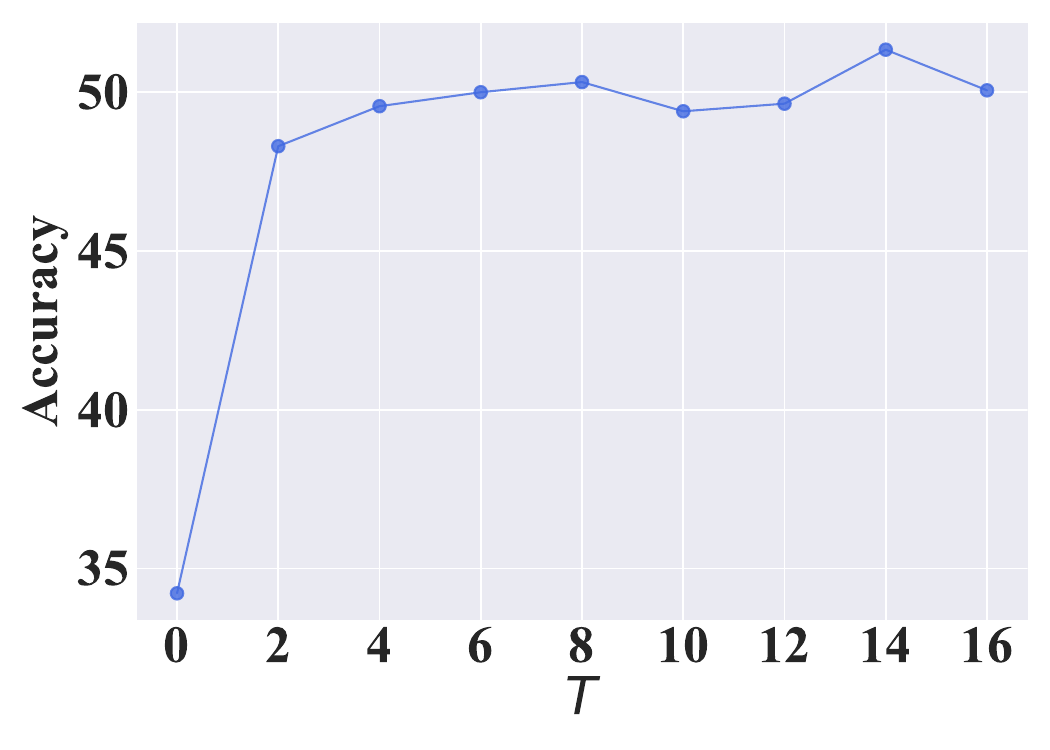}
\vspace{-1.6em}
    \caption{Asymmetric $\phi = 0.5$}
    \label{fig:ablation_study_T_CiteSeer_asymm}
  \end{subfigure}
  \begin{subfigure}{0.22\textwidth}
    \includegraphics[width=\textwidth]{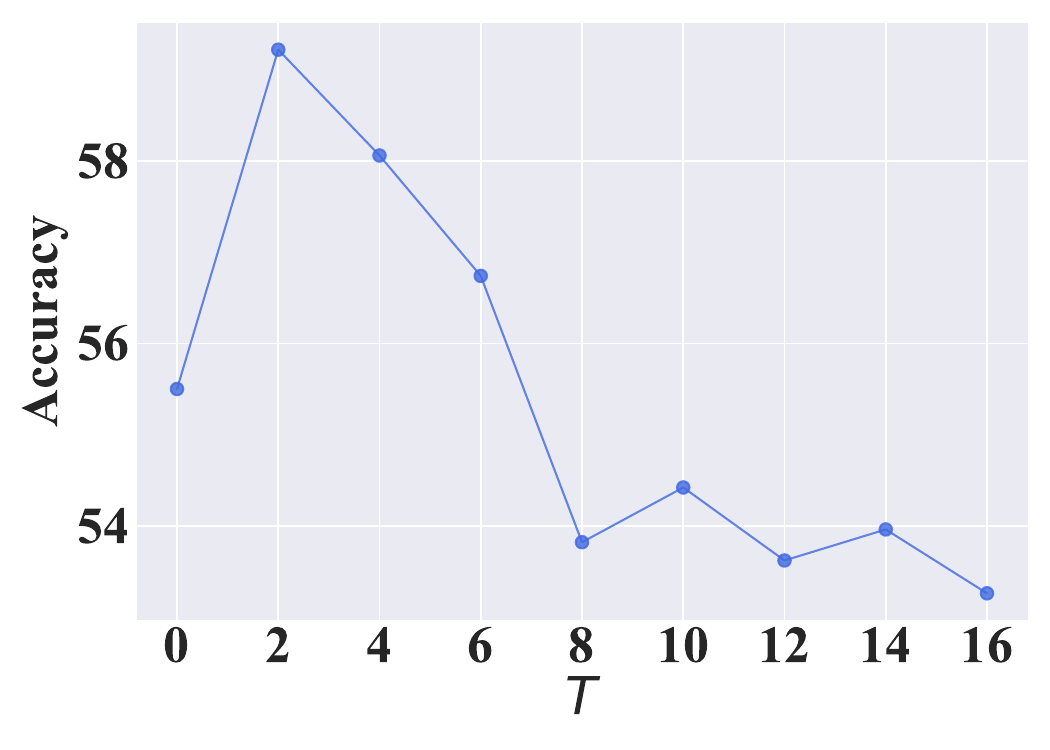}
\vspace{-1.6em}
    \caption{Symmetric $\phi = 0.5$.}
     \label{fig:ablation_study_T_CiteSeer_symm}
  \end{subfigure}
\vspace{-1em}
  \caption{Test accuracy of ERASE with 
  different $T$s on Cora (a, b) and CiteSeer (c, d) respectively.}
  \vspace{-1em}
  \label{fig:ablation_study_T}
\end{figure}

\begin{figure*}[!ht]
    \vspace{-0.3em}
    \small
    \captionsetup{}
    \centering
    \begin{subfigure}{0.49\textwidth}
    \includegraphics[width=\textwidth]{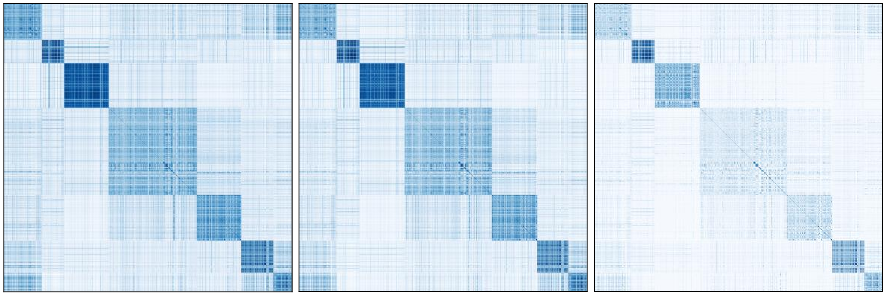}
    \caption{Results of Cora. Asymmetric $\phi = 0.1, 0.3, 0.5$ respectively.}
    \end{subfigure}
    \hfill
    \begin{subfigure}{0.49\textwidth}
    \includegraphics[width=\textwidth]{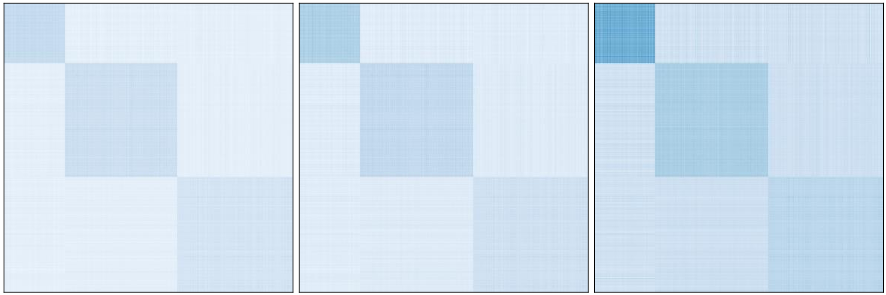}
    \caption{Results of PubMed. Asymmetric $\phi = 0.1, 0.3, 0.5$ respectively.}
    \end{subfigure}
    \vspace{-1.2em}
    \caption{Cosine similarity between sorted representation pairs.}
    \label{fig:visualize_cos_mat}
    \vspace{-1.3em}
\end{figure*}

\begin{figure*}[!ht]
    \captionsetup{}
  \centering
  \begin{subfigure}{0.49\textwidth}
  \centering
    \includegraphics[width=\textwidth]{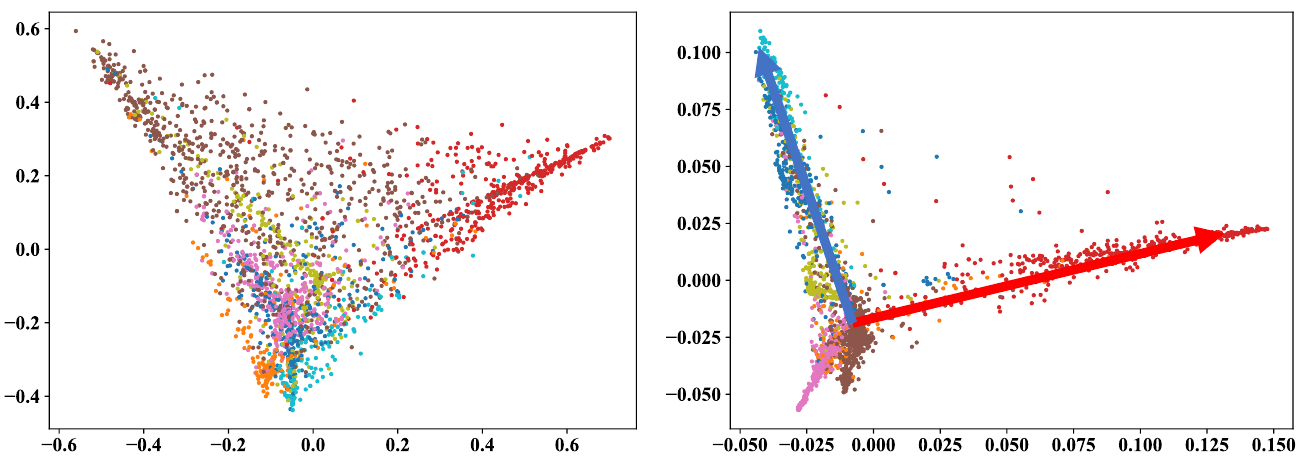}
    \caption{ 
  \centering Results of Cora (Asymmetric $\phi = 0.3$) obtained by CE~(left) v.s. ERASE~(right) respectively.}
  \end{subfigure}
  \centering
  \begin{subfigure}{0.49\textwidth}
  \centering
    \includegraphics[width=\textwidth]{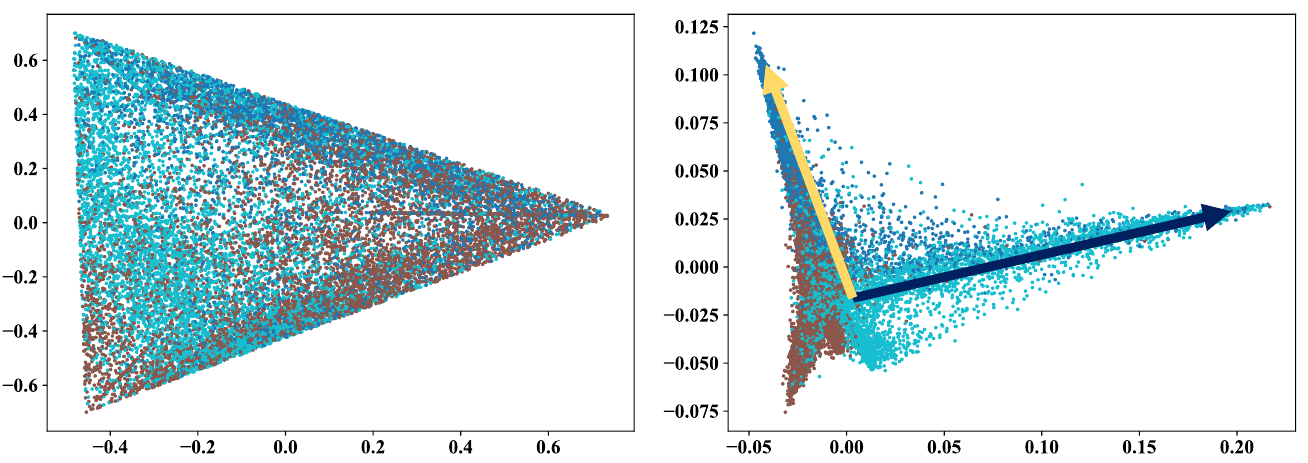}
   \caption{
  \centering Results of PubMed (Asymmetric $\phi = 0.5$) obtained by CE~(left) v.s. ERASE~(right) respectively.}
  \end{subfigure}
    \vspace{-0.9em} 
  \caption{PCA visualization of learned representations (CE v.s. ERASE).}
    \vspace{-1.5em}
  \label{fig:visualize_pca}
\end{figure*}

\subsection{Scalability of Algorithm}
\vspace{-0.4em} 
\label{sec:scalability of Algorithm}

To verify the scalability of our proposed method, we provide the experimental comparison with baselines on a larger scale graph benchmark~\cite{NEURIPS2020_fb60d411}, OGBn-arxiv, whose statistics are provided in~\cref{appendix:dataset}. The experimental results are shown in~\cref{table:ogbarxiv}. The comparison with baselines shows the good scalability of our method on large-scale datasets.

\vspace{-0.4em} 
\subsection{Visualization of Learned Representations}
\vspace{-0.5em} 
\label{sec:visrep}

As claimed in~\cref{sec:dp}, the optimal solution for maximizing coding rate reduction is that the representations between different classes are orthogonal. As shown in~\cref{tab:evalution methods}, a linear classification~(LogReg) method is sufficient for the final classification. (a) We provide visualization of the confusion matrix of learned representations. As shown in~\cref{fig:visualize_cos_mat}, learned representations of ERASE between different classes are approximately orthogonal even in high noise ratio scenarios. (b) To further verify the claim, we use the \textbf{linear} dimensionality reduction algorithm (PCA) to observe the orthogonality of different representations. The results are visualized in~\cref{fig:visualize_pca}. Compared with the CE optimization goal, representations learned by our method show better orthogonality between different classes, which is more discriminative for a simple classifier to perform classification. This phenomenon provides the experimental foundation of why a linear classification algorithm is good for the final prediction, which is shown in~\cref{tab:evalution methods}. More comparisons and t-SNE visualization~\cite{van2008visualizing} are provided in~\cref{appendix:MoreVisRes}.

\vspace{-0.5em}
\subsection{Why is ERASE more error-resilient than baselines?}
\vspace{-0.5em}
As claimed in~\cref{sec:intro}, our method is designed by an error-resilient principle. To verify this, we compare the correction rate of the mislabeled nodes with baselines. The results are shown in~\cref{tab:training_correction} (on Cora) and~\cref{tab:training_correction2} (On CiteSeer) respectively. The results show that ERASE significantly corrects the labels of mislabeled nodes, which means that it is more error-resilient than baselines. These results validate the motivation of our error-resilient design principle.

\begin{table}[H]
    \centering
    \footnotesize
    \setlength\tabcolsep{3pt}
\begin{tabular}{c|ccc|ccc}
\toprule \specialrule{0em}{0.0pt}{0.0pt}
                       & \multicolumn{3}{c|}{Asymmetric}                  & \multicolumn{3}{c}{Symmetric}                    \\ \cline{2-7} 
\multirow{-2}{*}{Cora} & 0.3            & 0.4            & 0.5            & 0.3            & 0.4            & 0.5            \\ \hline
CE             & 39.71  & 26.96	& 20.26	& 48.21	& 36.90 & 31.58\\
\rowcolor[HTML]{FFFFFF} 
APL            & 39.71	& 27.50	& 19.74	& 50.71	& 37.62	& 31.23\\
GCE            & 39.14	& 30.89	& 22.89	& 50.00	& 38.57	& 29.65\\
CoDis          & 44.29	& 31.43	& 19.21	& 37.32	& 40.54	& 37.26\\
MCR$^2$        & 16.57	& 16.07	& 14.08	& 39.64	& 21.19	& 14.56\\
RT-GNN          & \underline{57.91}   &\underline{53.32}   & \underline{40.35} & 43.75 & 40.55   & \underline{39.82}  \\
PI-GNN         &41.46   & 37.03 & 19.69 & \underline{57.69} & \underline{46.47} & 28.04\\
\rowcolor[HTML]{E7FBFF} 
ERASE          &\textbf{84.86}   &\textbf{75.71}    &\textbf{54.61} &\textbf{84.29}     &\textbf{74.29}     &\textbf{62.46}\\ 
\specialrule{0em}{0.0pt}{0.0pt} \bottomrule
\end{tabular}
\caption{Correction rate of mislabeled nodes~(on CiteSeer).}
\label{tab:training_correction2}
    \end{table}

\vspace{-0.9em} 
\section{Conclusion}
\vspace{-0.5em} 

This work proposes a novel framework, namely ERASE, to learn error-resilient presentations on graphs. The learned representations of ERASE enjoy good orthogonal properties and robustness against mislabeled nodes, both of which benefit from the decoupled label propagations and the maximizing coding rate reduction goal. Extensive experiments show the effectiveness, scalability, and great theoretical guarantees of our ERASE algorithm.

\vspace{-0.9em} 
\section*{Limitation and Future Work}
\vspace{-0.5em} 

Although this work enjoys a significant improvement margin for the accuracy in large ratio label noise scenarios, it fails to predict the true labels of some nodes, compared with small ratio label noise scenarios. 
ERASE is still a semi-supervised framework for node classification, which cannot resolve the label noise problem on graphs thoroughly. An unsupervised or self-supervised method with very limited high-quality selected annotations may be a more promising way. However, these techniques are still challenging for noise on graphs. 
We leave this as future work and believe this can also be improved by follow-up research. 

\section*{Acknowledgment}

The author team would sincerely acknowledge Haotian Zheng~(\url{https://hao-tian-zheng.github.io}), Zhiyang Liang, Yu-Kun Zhou from Xidian University, Qiongyan Wang from the University of Copenhagen for providing significant discussions and organizing the team membership. Part of this work was done when Ling-Hao Chen was at Xidian University. Ling-Hao Chen is partially supported by IDEA Research. This work is partially supported by Inspur Group Co., Ltd. and SwanHub.co. 
\vspace{-1em}
{
    \small
    \bibliographystyle{ieeenat_fullname}
    \bibliography{main}
}
\newpage

\appendix
\clearpage
\etocdepthtag.toc{mtappendix}
\etocsettagdepth{mtchapter}{none}
\etocsettagdepth{mtappendix}{subsection}
\maketitlesupplementary
\onecolumn

\section{Related Work}
\label{Appendix:relate}

\subsection{Learning with Label Noise}

To enhance the robustness of models during training, there is a series of works proposed to handle the noisy labels. These methods can be categorized as robust loss function~\cite{patrini2017making,ghosh2017robust,feng2021can,wang2021learning,xia2023regularly}, sample selection~\cite{wei2022self,cheng2020learning,patel2023adaptive}, learning transition matrix~\cite{xue2022investigating,yao2020dual,xia2020part,xia2019anchor,li2022estimating,wu2021class2simi}, and label corrections~\cite{yi2019probabilistic,zheng2021meta,gong2022class}. Besides, some works~\cite{li2022selective,liu2020early,li2019dividemix,wei2020combating} take the combination of these approaches. However, the training objectives of these works do not model the error tolerance explicitly, which makes it hard to perform well in large ratio label noise scenarios. In ERASE, we introduce the maximizing coding rate reduction training objective to learn error-resilient representations. Moreover, these methods are not carefully designed for non-euclidean structural data, like graphs~\cite{wu2020comprehensive}. Therefore, in this paper, we introduce the decoupled label propagation algorithm to update the semantic labels of nodes with maximizing coding rate reduction for robust training.

\subsection{Graph Neural Network and Label Noise on Graph}

Graph Neural Network (GNN)~\cite{kipf2016semi,NIPS2017_5dd9db5e,wu2019simplifying} is a family of powerful models for modeling attributed graphs and is widely used in a series of downstream tasks, such as node prediction~\cite{kipf2016semi,rong2019dropedge}, link prediction~\cite{zhang2018link,rossi2021knowledge,cai2021line}, recommendation systems~\cite{zhu2022bars,su2023beyond, he2020lightgcn}, graph generation~\cite{liao2019efficient}, etc. However, for node classification, these works cannot enjoy robustness~\cite{zhou2022towards,zhuo2021training, de2020analysis, dai2022towards,dai2023unnoticeable,liu2022adversarial, lingam2023pitfalls, zhuang2022robust,dai2022comprehensive, li2022recent,dai2021nrgnn,Park_Jeong_Kim_Kim_2020,omg,10.1145/3539597.3570369,du2023noise,qian2023robust,xia2020towards,Zhang_2021_CVPR} when meeting the label noise on graph. In particular, this problem is more serious in semi-supervised node classification tasks. Accordingly, \cite{dai2021nrgnn,dai2022towards} exploited cosine similarity to connect labeled and unlabeled nodes, and~\cite{omg} takes the cosine similarity and label mix-up strategy for label denoising on graphs. PI-GNN~\cite{du2023noise} leverages adaptive confidence-aware pairwise interaction estimation and a decoupled training approach to learning with label noise. RT-GNN~\cite{qian2023robust} introduces self-reinforcement and consistency regularization as additional supervision, drawing inspiration from the memorization effects observed in deep neural networks. However, the training objectives of these methods are not designed under an error-resilient training principle. In ERASE, we introduce maximizing the coding rate reduction training objective~\cite{NEURIPS2020_6ad4174e} with a decoupled label propagation algorithm to train the model, which enjoys good properties.

\hspace{3em}
\section{Reproducibility}

We ensure our work is reproducible. We provide the \textcolor{blue}{\textbf{explanation video}}\footnote{\url{https://youtu.be/w5HwGb8bElA}} to make our method as clear as possible and easy to follow for the community. We make \textcolor{blue}{\textbf{codes}}\footnote{\url{https://github.com/eraseai/erase}} public and provide a detailed \texttt{README} tutorial document to help the community reproduce our results. Moreover, a \textcolor{blue}{\textbf{project page}}\footnote{\url{https://eraseai.github.io/ERASE-page}} is also public. We additionally provide the details of baselines, datasets, training details, and more analysis of experiments and technical designs in the following parts of the appendix.

\newpage

\section{Implementation Details}
\label{appendix:detail}

\subsection{Datasets Statistics}
\label{appendix:dataset}

The statistics of the five datasets are shown in~\cref{tabel 3}, including the number of nodes, edges, features, classes, and the dataset splits (training, validation, and testing).
\begin{table}[h]\centering
    \small
    \setlength\tabcolsep{3pt}
    \begin{tabular}{cccccccc}
    \toprule
              & \# nodes   & \# edges     & \# features & \# classes & \# training  & \# validation  & \# testing   \\ \hline
    Cora      & 2,708   & 10,556    & 1,433    & 7       & 140                        & 500                        & 1,000                       \\
    CiteSeer  & 3,327   & 9,104     & 3,703    & 6       & 120                        & 500                        & 1,000                       \\
    PubMed    & 19,717  & 88,648    & 500      & 3       & 60                         & 500                        & 1,000                       \\
        CoraFull   &19,793  & 63,421    & 8,710      & 70       & 1,400                         & 2,100                      & 1,6293                       \\
    OGBn-arxiv & 169,343 & 1,166,243 & 128      & 40      & 90,941 & 29,799 & 48,603 \\ \bottomrule
    \end{tabular}
        \caption{Statistics of datasets, including the number of nodes, edges, features, classes, and the dataset splits.}
    \label{tabel 3}
\end{table}

\subsection{Noise Setting}
\label{appendix:noise setting}
We use a transition matrix $Q_{ij}=P(\Bar{y}=j|y=i)$ given that noisy $\Bar{y}$ is flipped from clean $y$ to corrupt the labels. The definition of $Q$ is as follows.

\myPara{Symmetric noise.} The transition matrix of symmetric noise is 
\[
Q = \begin{bmatrix}
1-\phi & \frac{\phi}{n-1} & \cdots & \frac{\phi}{n-1} & \frac{\phi}{n-1} \\
\frac{\phi}{n-1} & 1-\phi & \cdots & \frac{\phi}{n-1} & \frac{\phi}{n-1} \\
\vdots & \vdots & \ddots & \ddots & \vdots \\
\frac{\phi}{n-1} & \frac{\phi}{n-1} & \cdots & \frac{\phi}{n-1} & 1-\phi
\end{bmatrix}.
\]

\myPara{Asymmetric noise.} The transition matrix of asymmetric noise is 
\[
Q = \begin{bmatrix}
1-\phi & \phi & \cdots & 0 & 0\\
\phi & 1-\phi & \cdots & 0 & 0\\
0 & \phi & 1-\phi & \cdots & 0\\
\vdots & \vdots & \ddots & \ddots & \vdots \\
\phi & 0 & 0 & \cdots & 1-\phi
\end{bmatrix}.
\]

\subsection{Details of Baselines}
\label{sec:Details of baselines}
These baselines include traditional classification loss, label noise robust loss, the latest label noise learning methods, and the latest label noise learning methods on graphs. The details of these methods are listed as follows. 
\begin{itemize}
    \item CE: We use a vanilla GAT~\cite{Liu_Zhou_2020} as the backbone and minimize the cross-entropy (CE) loss, which is widely used in node classification tasks but not robust enough to label noise.
    \item APL~\cite{pmlr-v119-ma20c} (ICML-20): It combines two robust loss functions that mutually boost each other to solve the underfitting issue of previous robust loss. We choose NCE + RCE which has good performance against label noise~\cite{pmlr-v119-ma20c}.
    \item GCE~\cite{zhang2018generalized} (NeurIPS-18): It is a theoretically grounded set of noise-robust loss functions that can be seen as an extension encompassing MAE and CE loss.
    \item MCR${}^{2}$~\cite{NEURIPS2020_6ad4174e} (NeurIPS-20): It constitutes an information-theoretic metric designed to optimize the difference in coding rates between the entirety of the dataset and the cumulative coding rates of each distinct class.
    \item Co-Dis~\cite{codis} (ICCV-23): This effective method selects data that is likely to be clean, characterized by having substantial differences in prediction probabilities between two networks.
    \item RT-GNN~\cite{qian2023robust} (WSDM-23): It exploits the mnemonic capabilities inherent in neural networks to discern nodes with accurate labels. Subsequently, pseudo-labels are derived from these identified nodes, aiming to reduce the impact of nodes with noisy labels during the training phase.
    \item PI-GNN~\cite{du2023noise} (TMLR-23): This method introduces Pairwise Intersection (PI) labels derived from evaluating feature similarity among nodes.  PI labels tend to be less corrupted than label noise and are employed to alleviate the detrimental effects of label noise which is expected to improve the robustness of the model.
\end{itemize}

\subsection{Training Details}
\label{sec:appTrainDe}
\myPara{Hyper-parameters.} We take a 2-layer GAT as the backbone. Here, $d_1$ is the dimension of the hidden layer and $d$ is the dimension of the output layer. For the sake of reproducibility, we list the settings of parameters here. The parameters such as learning rate, weight decay, and $d_1$ remain consistent when running baselines. However, it is worth noting that the dimension $d$ must be set to $K$ except running MCR${}^{2}$ and ERASE. For convenience, we set $T_1 = T_2 = T$ and $\alpha_1 = \alpha_2 = \alpha$ in our experiments. 
\begin{table}[h]\centering
    \footnotesize
    \setlength\tabcolsep{3pt}
\begin{tabular}{c|cccc|ccccc}
\toprule
         & learning rate & weight decay      & $d_1$ & $d$ & $\epsilon^{2}$ & $\gamma$ & $T$ & $\alpha$ & $\beta$ \\ \hline
Cora     & $1\times 10^{-3}$         & $5\times 10^{-4}$ & 256   & 512 & 0.05           & 2        & 5   & 0.6      & 0.6     \\
CiteSeer & $1\times 10^{-3}$         & $5\times 10^{-4}$ & 256   & 512 & 0.4            & 2        & 4   & 0.6      & 0.7     \\
PubMed   & $1\times 10^{-3}$         & $5\times 10^{-4}$ & 256   & 512 & 0.05           & 3        & 3   & 0.6      & 0.6     \\
CoraFull & $1\times 10^{-3}$         & $5\times 10^{-4}$ & 256   & 512 & 0.01           & 1        & 2   & 0.7      & 0.7     \\ \bottomrule
\end{tabular}
        \caption{Settings of parameters.}
    \label{tab:hyparameter}
\end{table}

\myPara{Model training.}
To save training time, we adopt an early stop strategy. We use the accuracy of the validation labels to determine whether training stops or not and set the full step as 150, guaranteeing that the model is fully trained. The hyper-parameters $\epsilon^2, \gamma, T, \alpha,$ and $\beta$ are tuned according to the performance of the validation set. For the OGBn-arxiv dataset, we train the model for 10 epochs for our model as the dataset is large.

\myPara{Comparison of time consumption.}
To provide the experimental time consumption of ERASE, we observed the training time consumption of an iteration, which includes forward and backward processes. We compared ERASE with two robust learning methods on graph~(PI-GNN and RT-GNN). The comparison is conducted on a machine with 14 Intel CPU kernels, 1 RTX 3090 GPU, and 80GB memory. PI-GNN, RT-GNN, and ERASE take 65ms, 355ms, and 145ms respectively, which shows that ERASE takes much less time than RT-GNN. Although ERASE takes more time than PI-GNN, ERASE shows significant improvement over PI-GNN in robustness and the extra time consumption is also acceptable. We leave developing a more efficient version of ERASE as future work. 




\subsection{Implementation Details of Classification Methods}
\label{sec:implementation details of classfication methods}

In~\cref{sec:Abalation_study}, we compare the implementation of the final classification methods with MLP, SVM~(L), and SVM~(P). Here, we present the implementation details of these methods. (1) \textit{MLP}. The MLP comes up with 2 100-dimension hidden layers, optimized by Adam~\cite{kingma2014adam} with a learning rate of 0.001. (2) \textit{SVM~(L)}. Linear SVM is implanted by the skit-learn~\cite{scikitlearn} official package.  (2) \textit{SVM~(P)}. Polynomial SVM is implanted by the scikit-learn~\cite{scikitlearn} official package with $degree=3$.

\newpage

\section{Performance Comparison}
\label{appendix:performance comparison}
\subsection{ERASE v.s. Baselines on Large Ratio Label Noise}
\label{sec:radar}
As shown in~\cref{fig: radar}, ERASE outperforms all the baselines with a significant margin when the noise ratio is large~($\phi=0.5$).
\begin{figure*}[h]
\captionsetup{}
    \centering

    \includegraphics[width = 0.8\textwidth]{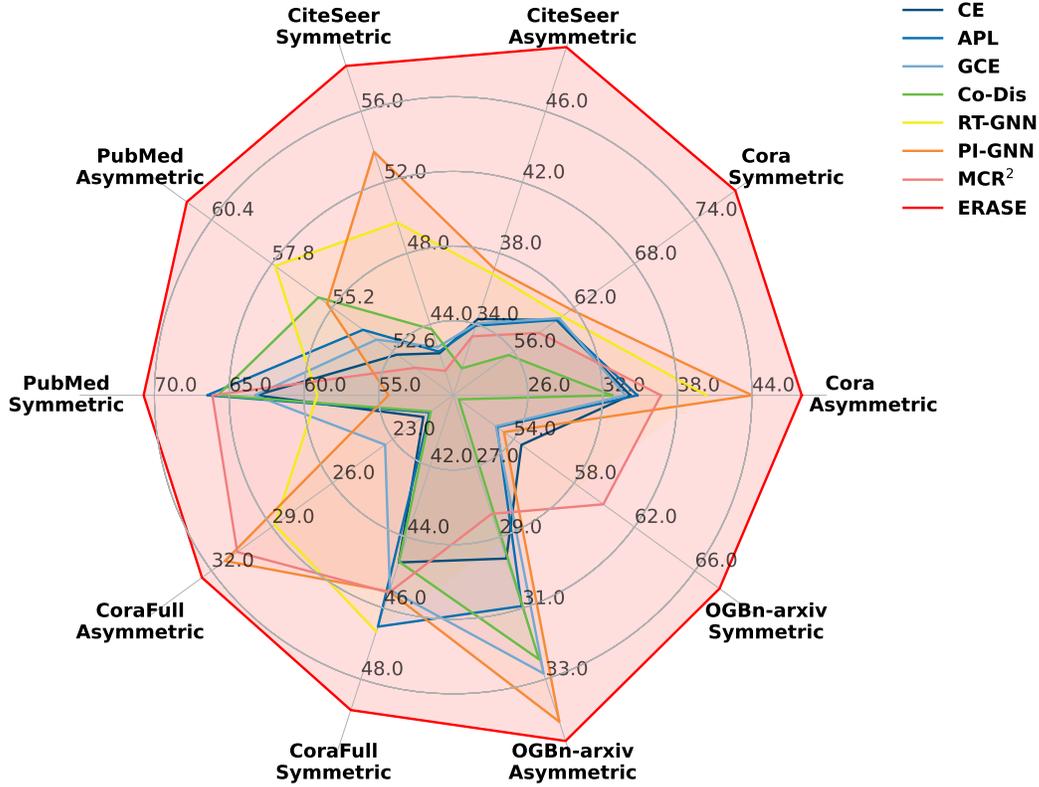}
    \caption{ERASE achieves state-of-the-art performance on 5 node classification datasets in large ratio label noise scenarios ($\phi = 0.5$).}
    \label{fig: radar}
\end{figure*}

\subsection{Gains of ERASE Compared to Baselines}
\label{sec:Gains of ERASE}

We show the performance gain of ERASE compared to the best baseline in~\cref{fig:Gain}. 
As shown in~\cref{fig:Gain}, ERASE enjoys good accuracy gains compared to the best performance of baselines especially in high noise ratio scenarios~($\phi = 0.5$).

\begin{figure}[htbp]
    \captionsetup{}
  \centering
  \begin{subfigure}{0.24\textwidth}
    \includegraphics[width=\textwidth]{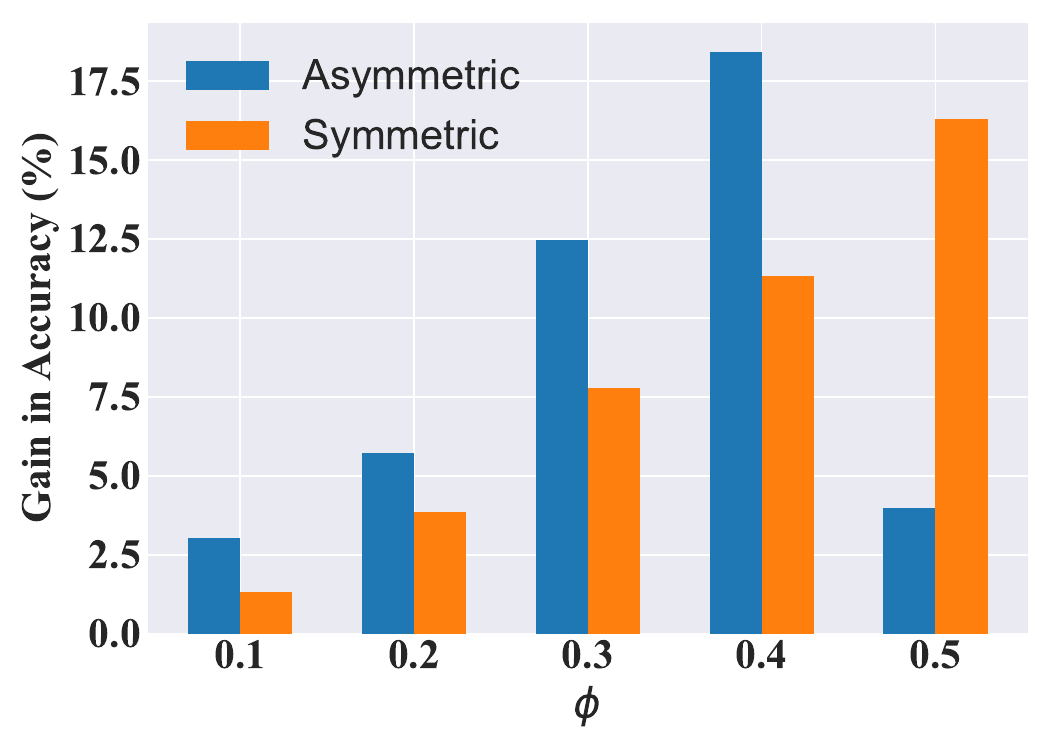}
    \caption{On Cora.}
    \label{fig:Gain_Cora}
  \end{subfigure}
  \begin{subfigure}{0.24\textwidth}
    \includegraphics[width=\textwidth]{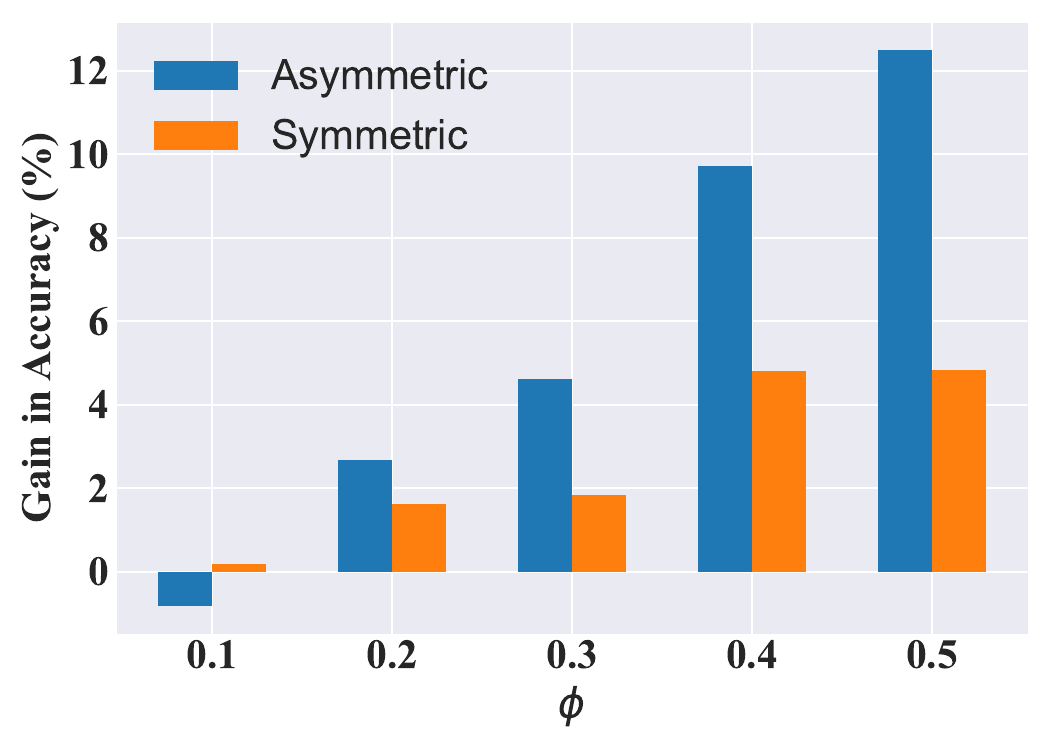}
    
    \caption{On CiteSeer.}
     \label{fig:Gain_CiteSeer}
  \end{subfigure}
  \centering
  \begin{subfigure}{0.24\textwidth}
    \includegraphics[width=\textwidth]{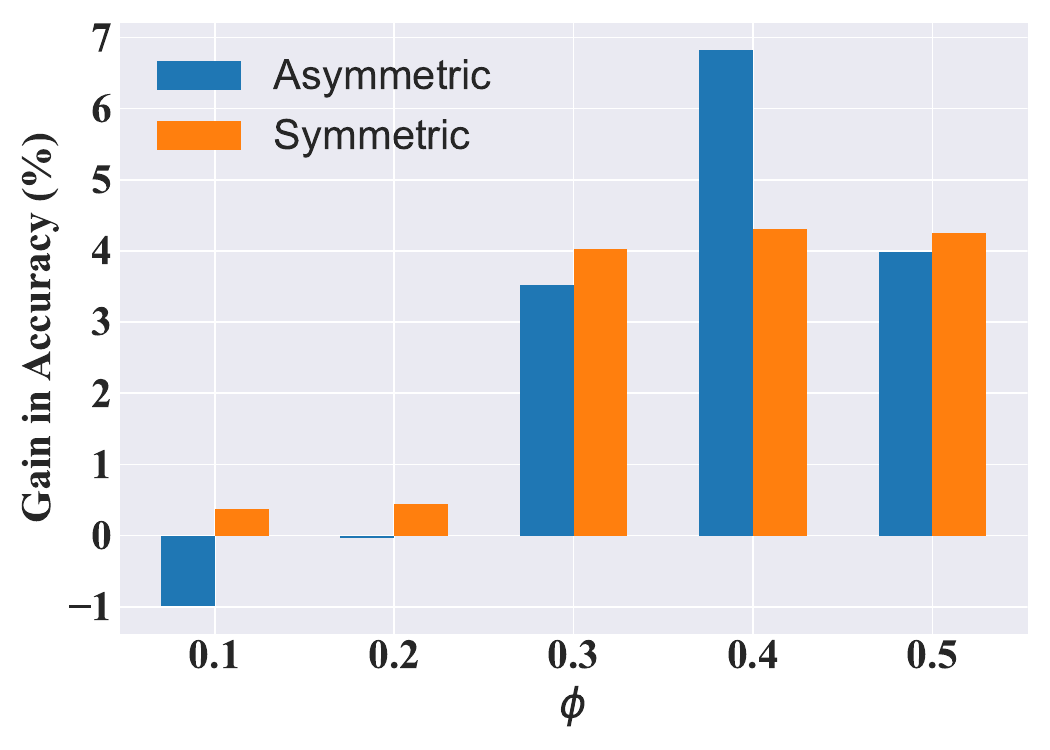}
    \caption{On PubMed.}
    \label{fig:Gain_PubMed}
  \end{subfigure}
  \begin{subfigure}{0.24\textwidth}
    \includegraphics[width=\textwidth]{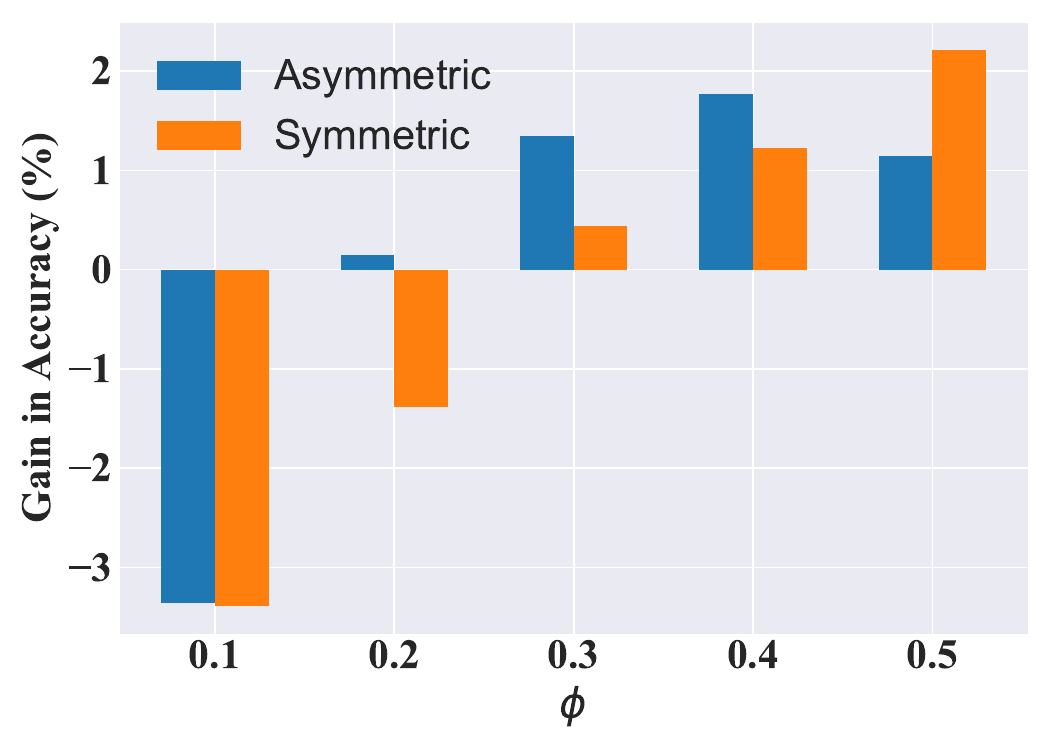}
    \caption{On CoraFull.}
     \label{fig:Gain_CoraFull}
  \end{subfigure}
  \caption{Gains of ERASE compared to the best baselines on 4 datasets.}
  \label{fig:Gain}
\end{figure}

\newpage

\section{More Discussions of Our Design}
\label{sec:more ablation}
\subsection{Definition of the Volume}
\label{sec:definition of NVTR}

\myPara{Definition of $vol(\cdot)$.} As defined in~\cite{Ma_Derksen_Hong_Wright_2007}, volume is a measurement of how large the space spanned vectors and the volume of the space covered by vectors is directly proportional to the square root of the determinant for a matrix. That is, $vol(\boldsymbol{Z}) \propto \sqrt{\det(\frac{1}{N}\boldsymbol{Z}^{\top}\boldsymbol{Z})}$, where $N$ is the number of nodes and $\boldsymbol{Z}$ is normalized to $\mathbf{0}$ ($\mathtt{mean}(\boldsymbol{Z}) = \mathbf{0}$).
According to~\cref{rm1}, if the error $\delta$ of $\tilde{\boldsymbol{z}} = \boldsymbol{z} + \delta$ and $\boldsymbol{w} \sim \mathcal{N}(\boldsymbol{0}, \frac{\epsilon^2}{d}\boldsymbol{I})$ satisfies the inequality $vol(\delta)\leq vol(\boldsymbol{w})$, the error caused by label noise is within the tolerance. For experimental observation, we trained two GNN encoders  $\mathtt{Enc}_{\Theta}(\cdot)$ and $\mathtt{Enc}_{\tilde{\Theta}}(\cdot)$ without noise and with noise respectively. Then the shift $\delta$ is calculated by $\delta = \tilde{\boldsymbol{Z}}-\boldsymbol{Z}$, where $\tilde{\boldsymbol{Z}} = \mathtt{Enc}_{\tilde{\Theta}}(\boldsymbol{X},\mathcal{E})$ and $ \boldsymbol{Z}= \mathtt{Enc}_{{\Theta}}(\boldsymbol{X},\mathcal{E})$. Therefore, the $NVTR$ metric ($\frac{vol(\delta)}{vol(\boldsymbol{w})}$) can be easily induced by the definition of volume.

\subsection{Guidance of Semantic Labels}
In ERASE, we use the semantic label of training nodes to perform the node classification. We explore whether semantic labels play an important role in the classification. 
As shown in~\cref{tab:semantic label guidance,fig:semantic label}, semantic labels are very approximate to the final performance of ERASE, reflecting semantic labels are well estimated and provide a good prior for final node classification. This verifies the soundness of semantic label modeling in ERASE.

\begin{table}[htb]\centering
    \small
    \setlength\tabcolsep{5pt}
\begin{tabular}{c|lllll|lllll}
\toprule
\multicolumn{1}{c|}{} & \multicolumn{5}{c|}{Asymmetric}       & \multicolumn{5}{c}{Symmetric}        \\ \cline{2-11} 
\multicolumn{1}{c|}{\multirow{-2}{*}{Cora}} &
  \multicolumn{1}{c}{0.1} &
  \multicolumn{1}{c}{0.2} &
  \multicolumn{1}{c}{0.3} &
  \multicolumn{1}{c}{0.4} &
  \multicolumn{1}{c|}{0.5} &
  \multicolumn{1}{c}{0.1} &
  \multicolumn{1}{c}{0.2} &
  \multicolumn{1}{c}{0.3} &
  \multicolumn{1}{c}{0.4} &
  \multicolumn{1}{c}{0.5} \\ \hline
Semantic Labels        &80.52 & 79.56 & 78.34 & 73.10 & 44.46 & 79.60 & 79.70 & 79.38 & 78.98 & 75.22 \\
\rowcolor[HTML]{E7FBFF}
ERASE                & \textbf{81.42} & \textbf{80.12} & \textbf{79.58} & \textbf{75.00} & \textbf{48.34} & \textbf{81.22} & \textbf{80.84} & \textbf{80.90} & \textbf{79.70} & \textbf{77.74} \\ \hline
\multicolumn{1}{c|}{} & \multicolumn{5}{c|}{Asymmetric}       & \multicolumn{5}{c}{Symmetric}         \\ \cline{2-11} 
\multicolumn{1}{c|}{\multirow{-2}{*}{CiteSeer}} &
  \multicolumn{1}{c}{0.1} &
  \multicolumn{1}{c}{0.2} &
  \multicolumn{1}{c}{0.3} &
  \multicolumn{1}{c}{0.4} &
  \multicolumn{1}{c|}{0.5} &
  \multicolumn{1}{c}{0.1} &
  \multicolumn{1}{c}{0.2} &
  \multicolumn{1}{c}{0.3} &
  \multicolumn{1}{c}{0.4} &
  \multicolumn{1}{c}{0.5} \\ \hline
Semantic Labels        & 68.02 & 67.22 & 67.08 & 54.80 & 46.70 & 69.38 & 67.56 & 67.34 & 66.42 & 56.58 \\
\rowcolor[HTML]{E7FBFF}
ERASE                & \textbf{70.08} & \textbf{69.00} & \textbf{69.10} & \textbf{58.86} & \textbf{50.00} & \textbf{70.24} & \textbf{69.18} & \textbf{68.90} & \textbf{67.96} & \textbf{58.08} \\ \bottomrule
\end{tabular}
\caption{Mean test accuracy of semantic labels and ERASE.}
\label{tab:semantic label guidance}
\end{table}
\begin{figure}[htbp]
    \captionsetup{}
  \centering
  \begin{subfigure}{0.24\textwidth}
    \includegraphics[width=\textwidth]{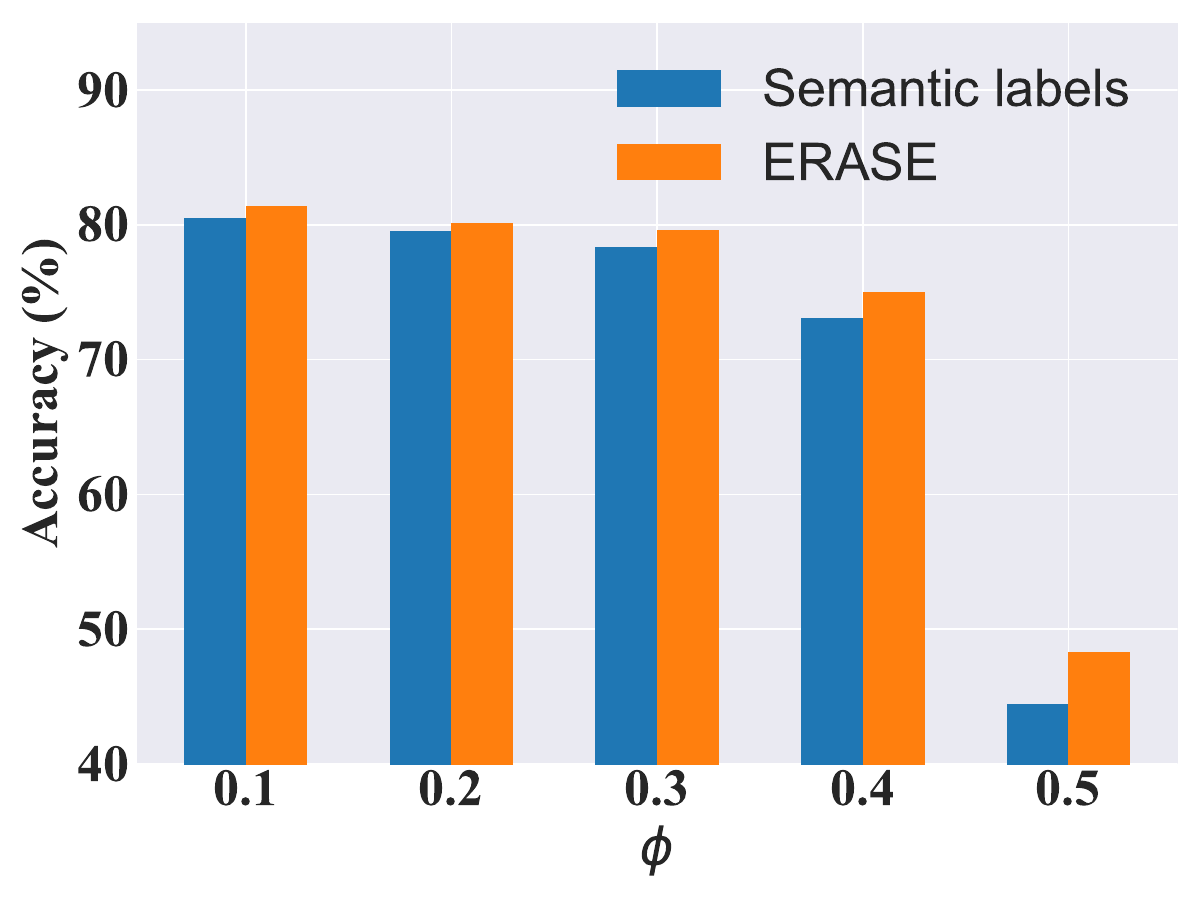}
    \caption{Cora (Asymmetric).}
    \label{fig:Semantic_Label_Guidance_Cora_Asymm}
  \end{subfigure}
  \begin{subfigure}{0.24\textwidth}
    \includegraphics[width=\textwidth]{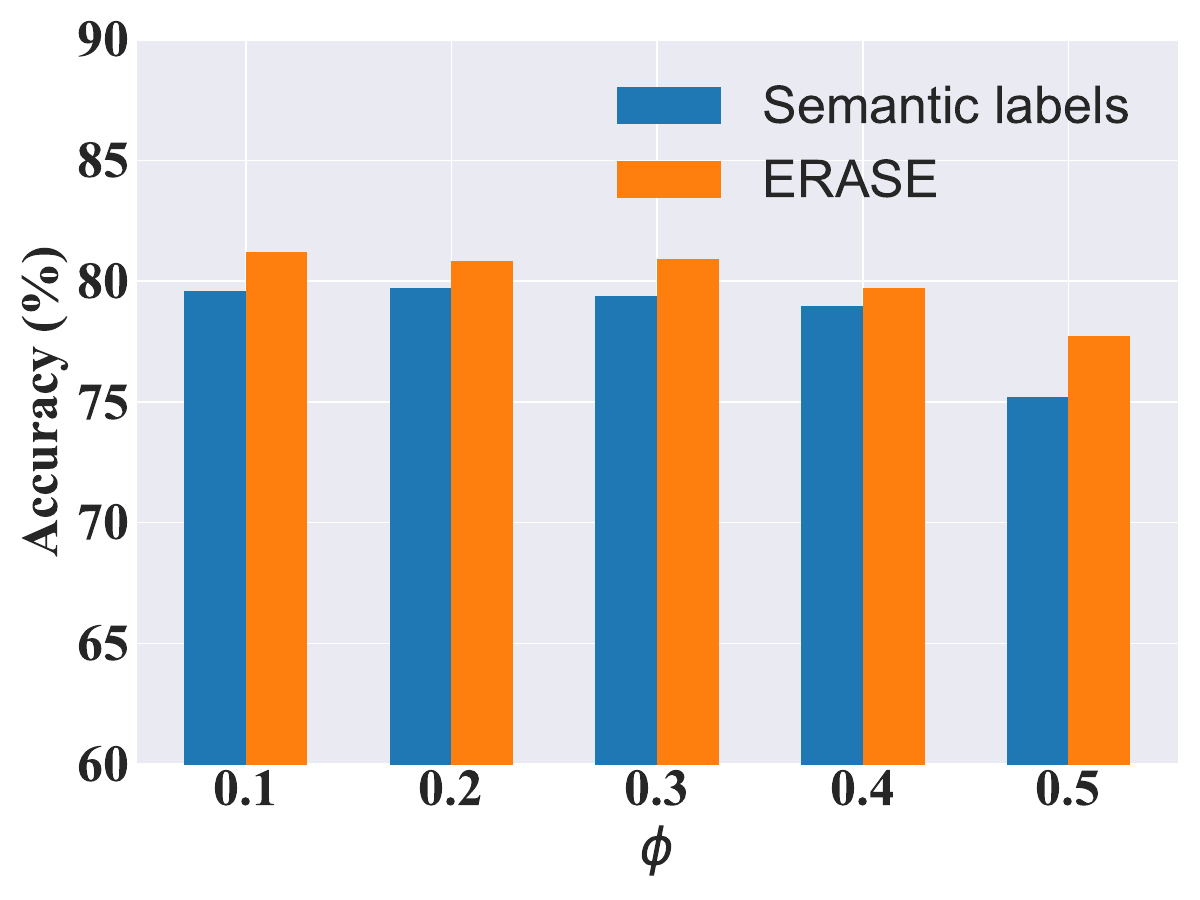}
    
    \caption{Cora (Symmetric).}
     \label{fig:Semantic_Label_Guidance_Cora_Symm}
  \end{subfigure}
  \centering
  \begin{subfigure}{0.24\textwidth}
    \includegraphics[width=\textwidth]{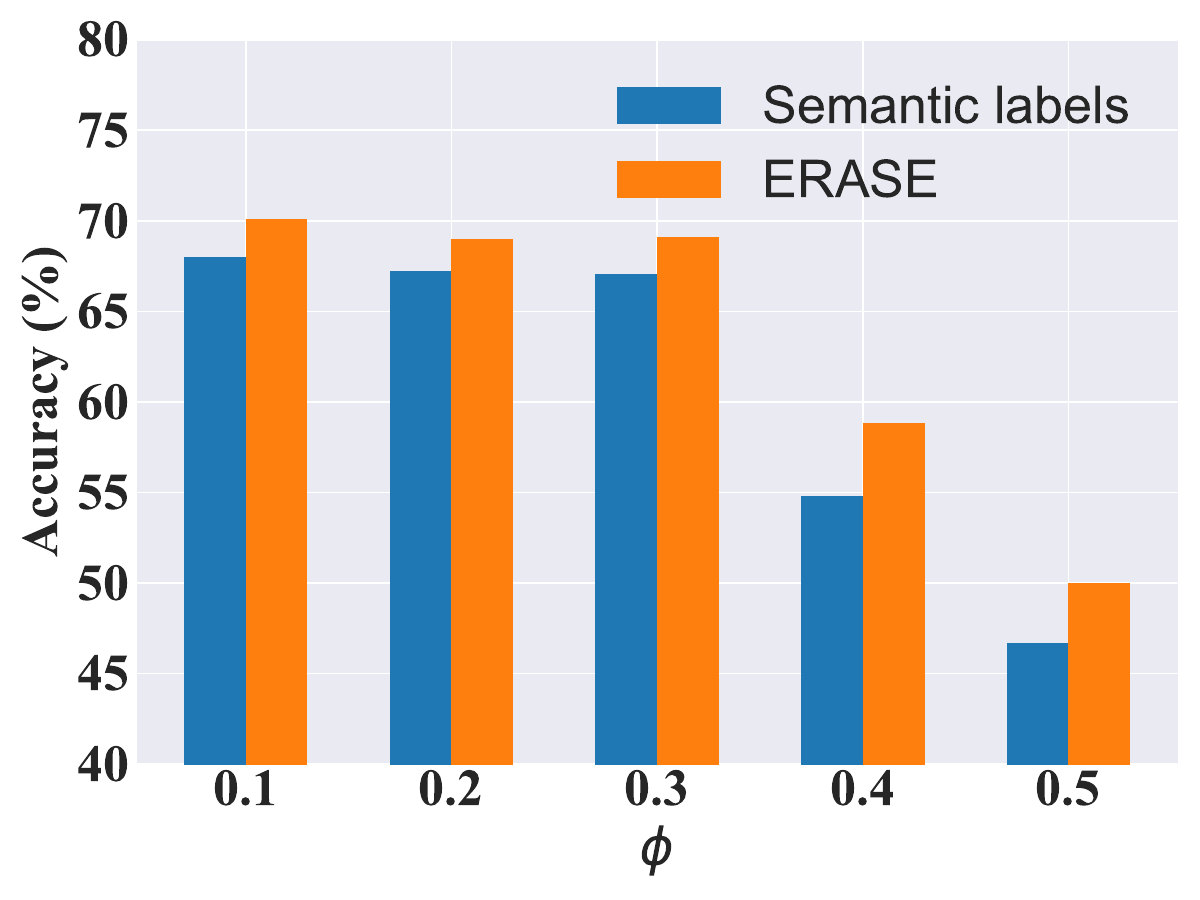}
    \caption{CiteSeer (Asymmetric).}
    \label{fig:Semantic_Label_Guidance_CiteSeer_Asymm}
  \end{subfigure}
  \begin{subfigure}{0.24\textwidth}
    \includegraphics[width=\textwidth]{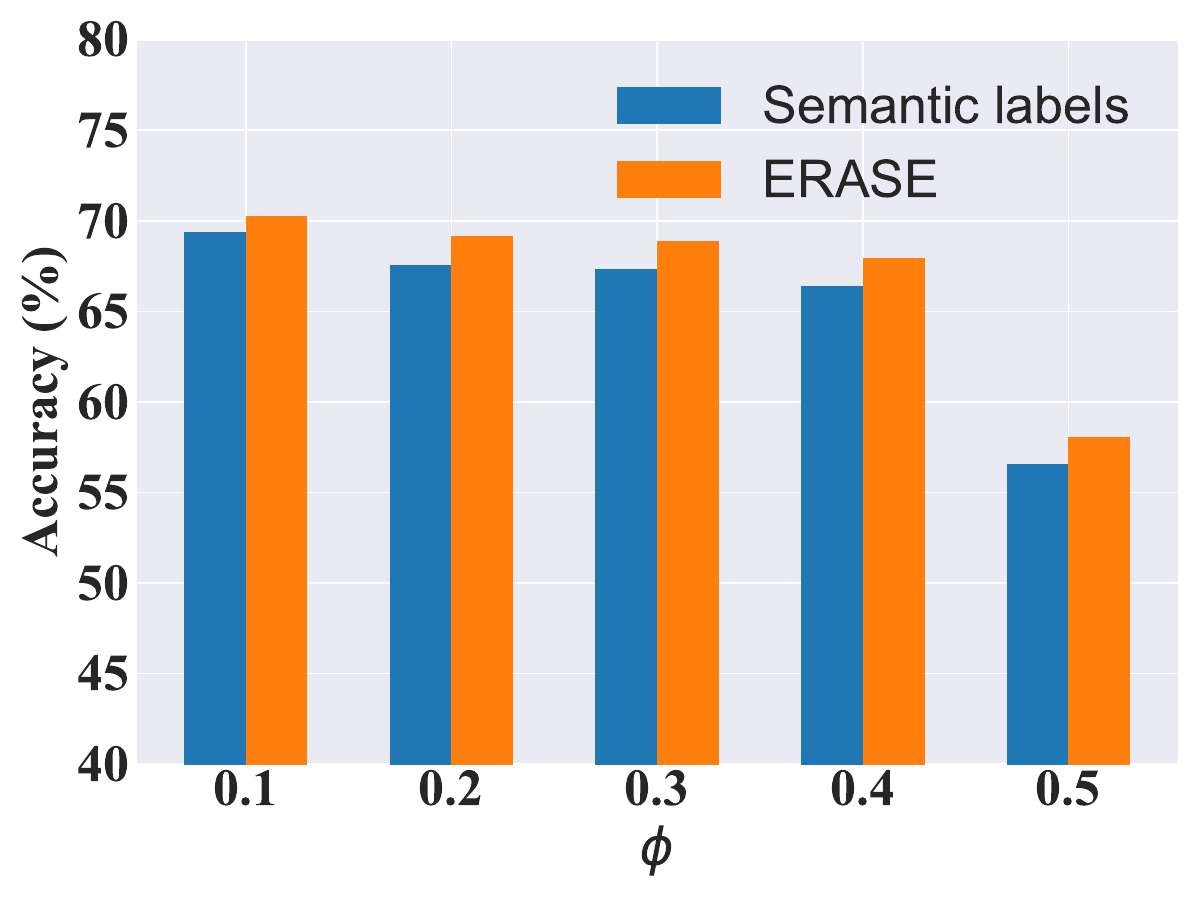}
    \caption{CiteSeer (Symmetric).}
     \label{fig:Semantic_Label_Guidance_CiteSeer_Symm}
  \end{subfigure}
  \caption{Comparison of the semantic labels and ERASE.}
  \label{fig:semantic label}
\end{figure}

\newpage
\subsection{Ablation on Propagation Depth}
\label{sec:results of propagation steps}

As discussed in~\cref{sec:Abalation_study}, our decoupled label propagation design is significant and technically sound. Here, we explore how the depth of label propagation affects the performance. The ablation results are shown in~\cref{tab:propagation steps}. As shown in~\cref{tab:propagation steps}, in low ratio label noise scenarios, the depth of label propagation affects the results marginally. Besides, in large ratio label noise scenarios, the model enjoys the scaling depth of the label propagation at first and performance is degraded when $T$ is very large due to the over-smoothing problem.

\begin{table}[h]\centering
    \small
    \setlength\tabcolsep{6.5pt}
\begin{tabular}{c|ccccc|ccccc}
\toprule
\multirow{2}{*}{\textbf{Cora}}     & \multicolumn{5}{c|}{Asymmetric}       & \multicolumn{5}{c}{Symmetric}         \\ \cline{2-11} 
                                   & 0.1   & 0.2   & 0.3   & 0.4   & 0.5   & 0.1   & 0.2   & 0.3   & 0.4   & 0.5   \\ \hline
$T=0$                                & 81.38 & 79.40 & 78.50 & 60.58 & 29.16 & 80.68 & 81.04 & 80.52 & 77.84 & 73.92 \\
$T=2$                                & \textbf{81.78} & \textbf{80.70} & \textbf{80.28} & 71.78 & 39.52 & \textbf{81.56} & \textbf{81.52} & \textbf{81.20} & 80.06 & 77.10 \\
$T=4$                                & 81.52 & 80.60 & 80.10 & 74.60 & 46.60 & 81.10 & 81.32 & 81.12 & \textbf{80.20} & 77.54 \\
$T=6$                                & 81.06 & 80.26 & 79.56 & 74.82 & 48.02 & 81.34 & 80.94 & 81.04 & 79.88 & 78.14 \\
$T=8$                                & 80.98 & 80.30 & 79.74 & \textbf{75.08} & 48.20 & 80.92 & 81.06 & 80.78 & 79.32 & \textbf{78.34} \\
$T=10$                               & 80.88 & 80.16 & 79.64 & 75.06 & 48.16 & 80.72 & 80.82 & 80.54 & 79.06 & 78.12 \\
$T=12$                               & 80.40 & 80.16 & 79.02 & 76.28 & 50.20 & 80.54 & 80.60 & 79.08 & 77.20 & 77.08 \\
$T=14$                               & 80.50 & 80.12 & 79.20 & 76.70 & \textbf{51.62} & 80.54 & 80.50 & 78.78 & 77.26 & 76.96 \\
$T=16$                               & 80.38 & 80.12 & 79.04 & 76.74 & 48.80 & 80.56 & 80.22 & 78.54 & 76.98 & 76.94 \\ \hline
\multirow{2}{*}{\textbf{CiteSeer}} & \multicolumn{5}{c|}{Asymmetric}       & \multicolumn{5}{c}{Symmetric}         \\ \cline{2-11} 
                                   & 0.1   & 0.2   & 0.3   & 0.4   & 0.5   & 0.1   & 0.2   & 0.3   & 0.4   & 0.5   \\ \hline
$T=0$                                & \textbf{70.32} & 68.62 & 66.92 & 53.76 & 34.22 & 69.54 & 68.54 & 67.38 & 68.34 & 55.50 \\
$T=2$                                & 70.12 & \textbf{69.50} & \textbf{69.18} & 58.22 & 48.30 & 69.96 & \textbf{69.18} & \textbf{69.22} & \textbf{69.92} & \textbf{59.22} \\
$T=4$                                & 70.04 & 69.30 & 68.94 & \textbf{58.86} & 49.56 & \textbf{70.18} & \textbf{69.18} & 68.92 & 67.96 & 58.06 \\
$T=6$                                & 69.46 & 68.66 & 68.62 & 58.80 & 50.00 & 69.82 & 68.60 & 67.40 & 67.92 & 56.74 \\
$T=8$                                & 69.46 & 68.28 & 68.06 & \textbf{58.86} & \textbf{50.32} & 69.78 & 68.50 & 66.76 & 67.42 & 53.82 \\
$T=10$                               & 69.26 & 68.22 & 68.16 & 58.16 & 49.40 & 70.06 & 68.14 & 66.72 & 66.76 & 54.42 \\
$T=12$                               & 68.46 & 67.84 & 68.10 & 55.74 & 49.64 & 69.14 & 67.38 & 67.32 & 66.28 & 53.62 \\
$T=14$                               & 68.68 & 68.04 & 68.08 & 55.78 & 51.34 & 69.60 & 67.38 & 67.44 & 66.30 & 53.96 \\
$T=16$                               & 68.68 & 68.16 & 68.00 & 55.90 & 50.06 & 70.14 & 67.76 & 67.42 & 66.30 & 53.26 \\ \bottomrule
\end{tabular}
\caption{Test accuracy for ERASE with different propagation depths and noise rates. The mean value of 5 runs is displayed.}
\label{tab:propagation steps}
\end{table}

\newpage

\subsection{Ablation on Label Propagation Coefficients}
As shown in~\cref{fig:ablation_study_alpha,tab:alpha}, the performances of ERASE with different $\alpha$s are stable when the noise ratio is low and the accuracy reaches the maximum within an approximate range from 0.5 to 0.8, which provides convenience for application of ERASE. Larger $\alpha$ reduces the role of label propagation and smaller $\alpha$ overtrusts the label propagation and enlarges the over-smoothing problem. 
\begin{figure}[H]
    \captionsetup{}
  \centering
  \begin{subfigure}{0.24\textwidth}
    \includegraphics[width=\textwidth]{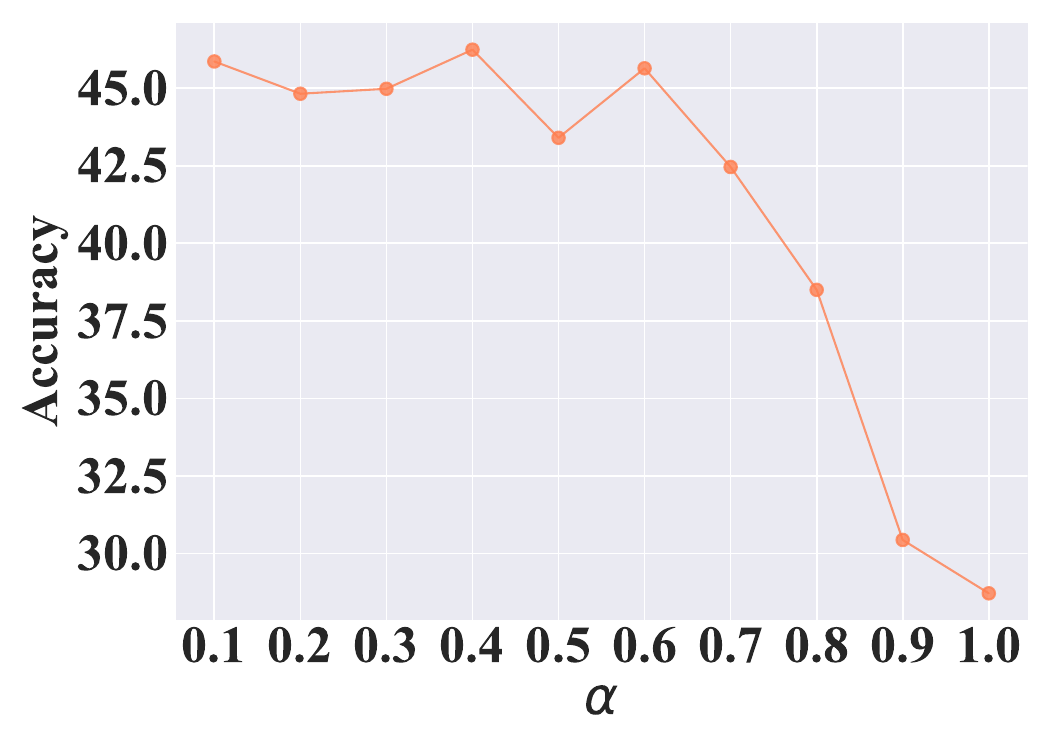}
    \caption{Cora (Asymmetric $\phi = 0.5$).}
    \label{fig:ablation_study_alpha_Cora_asymm}
  \end{subfigure}
  \begin{subfigure}{0.24\textwidth}
    \includegraphics[width=\textwidth]{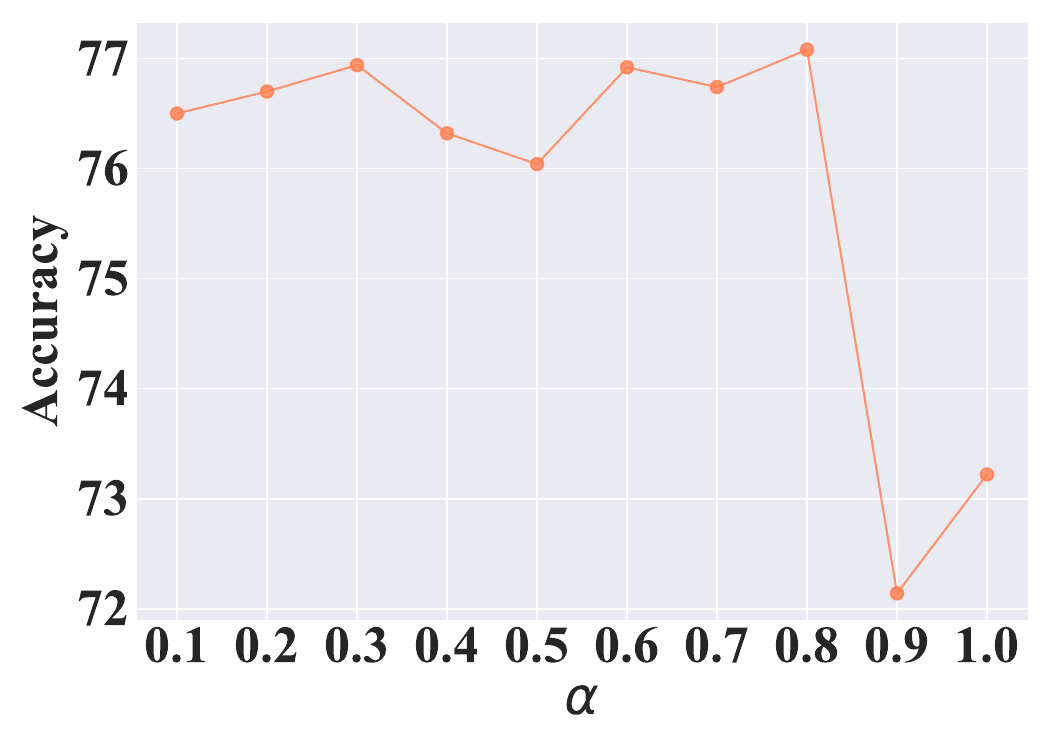}
    
    \caption{Cora (Symmetric $\phi = 0.5$).}
     \label{fig:ablation_study_alpha_Cora_symm}
  \end{subfigure}
  \centering
  \begin{subfigure}{0.24\textwidth}
    \includegraphics[width=\textwidth]{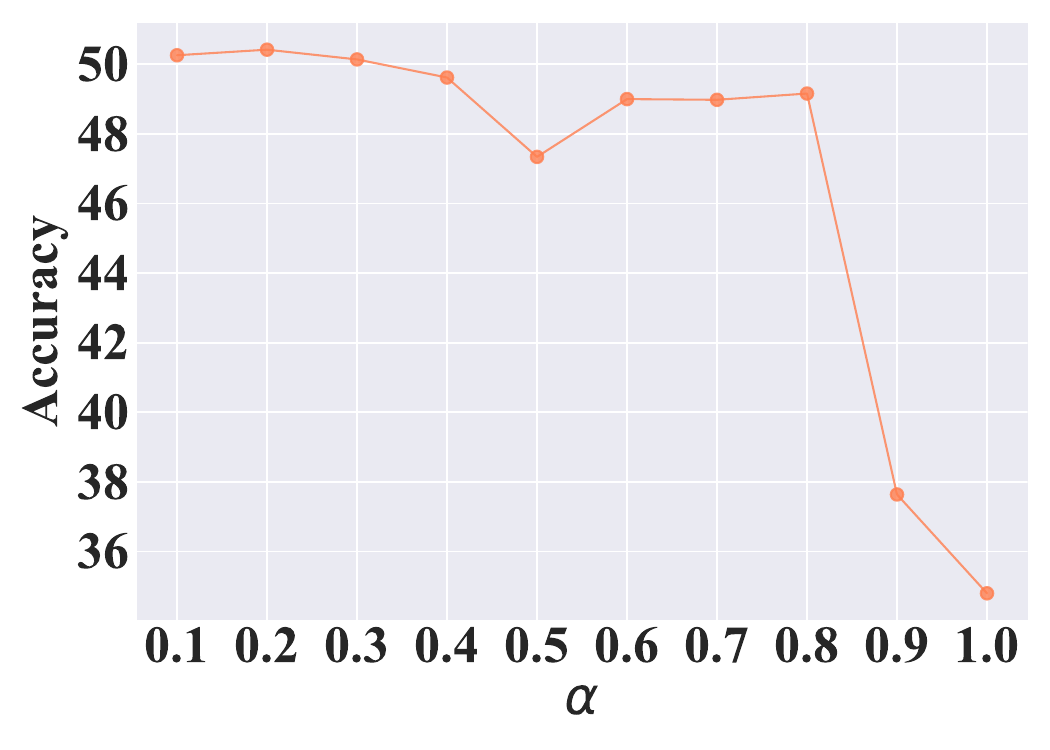}
    \caption{CiteSeer (Asymmetric $\phi = 0.5$).}
    \label{fig:ablation_study_alpha_CiteSeer_asymm}
  \end{subfigure}
  \begin{subfigure}{0.24\textwidth}
    \includegraphics[width=\textwidth]{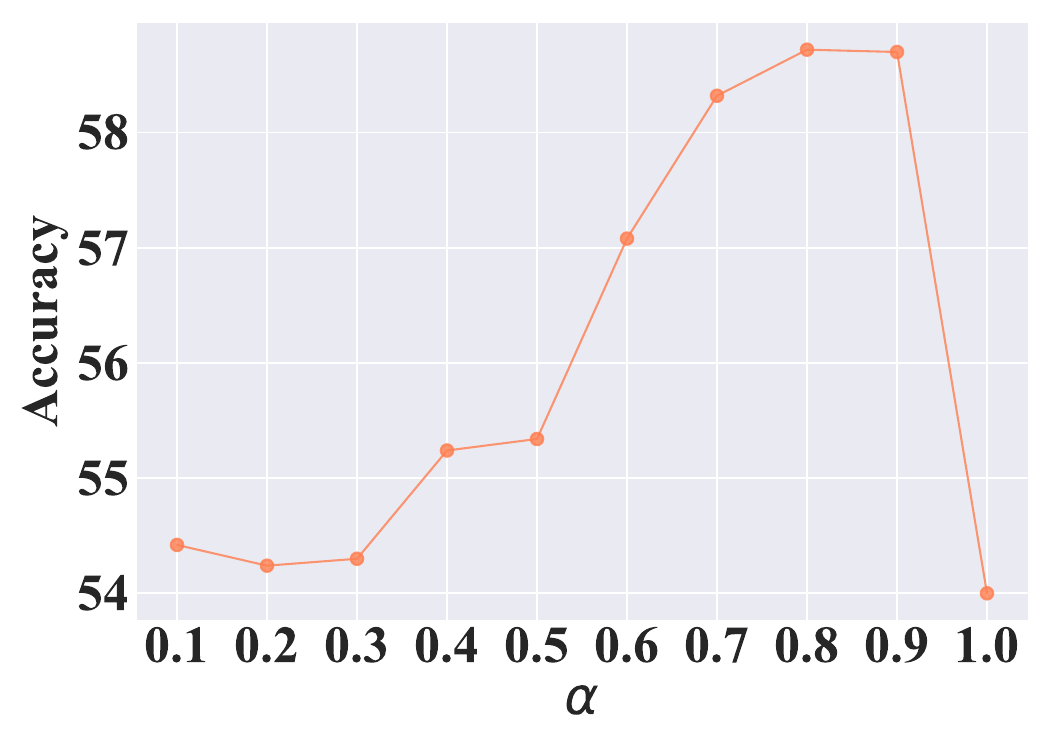}
    \caption{CiteSeer (Symmetric $\phi = 0.5$).}
     \label{fig:ablation_study_alpha_CiteSeer_symm}
  \end{subfigure}
  \caption{ Test accuracy of ERASE with different $\alpha$s on Cora (a, b) and CiteSeer (c, d) respectively.}
  \label{fig:ablation_study_alpha}
\end{figure}

\begin{table}[H]\centering
    \small
    \setlength\tabcolsep{5pt}
\begin{tabular}{c|ccccc|ccccc}
\toprule
\multirow{2}{*}{\textbf{Cora}} &
  \multicolumn{5}{c|}{Asymmetric} &
  \multicolumn{5}{c}{Symmetric} \\ \cline{2-11} 
 &
  0.1 &
  0.2 &
  0.3 &
  0.4 &
  0.5 &
  0.1 &
  0.2 &
  0.3 &
  0.4 &
  0.5 \\ \hline
$\beta=0.1$ &
  78.82 &
  79.30 &
  76.92 &
  74.50 &
  45.86 &
  79.10 &
  79.56 &
  78.90 &
  78.00 &
  76.50 \\
$\beta=0.2$ &
  78.94 &
  79.64 &
  76.66 &
  74.58 &
  44.82 &
  79.32 &
  79.24 &
  79.74 &
  78.04 &
  76.70 \\
$\beta=0.3$ &
  79.08 &
  79.26 &
  77.86 &
  74.14 &
  44.98 &
  79.38 &
  79.70 &
  79.64 &
  78.54 &
  \textbf{76.94} \\
$\beta=0.4$ &
  80.86 &
  \textbf{80.50} &
  78.76 &
  74.04 &
  \textbf{46.24} &
  79.88 &
  80.38 &
  80.20 &
  78.70 &
  76.32 \\
$\beta=0.5$ &
  79.68 &
  79.60 &
  \textbf{79.52} &
  \textbf{75.86} &
  43.40 &
  79.60 &
  80.16 &
  80.44 &
  78.38 &
  76.04 \\
$\beta=0.6$ &
  80.16 &
  79.48 &
  78.46 &
  74.44 &
  45.64 &
  79.48 &
  80.54 &
  80.60 &
  \textbf{79.12} &
  76.92 \\
$\beta=0.7$ &
  80.34 &
  79.86 &
  79.10 &
  72.28 &
  42.46 &
  \textbf{80.00} &
  \textbf{80.98} &
  \textbf{80.82} &
  78.54 &
  76.74 \\
$\beta=0.8$ &
  80.88 &
  80.22 &
  79.38 &
  69.78 &
  38.50 &
  79.90 &
  80.56 &
  80.44 &
  78.52 &
  77.08 \\
$\beta=0.9$ &
  \textbf{81.04} &
  80.14 &
  79.04 &
  62.60 &
  30.44 &
  79.76 &
  80.48 &
  80.26 &
  77.88 &
  72.14 \\
$\beta=1.0$ &
  80.56 &
  79.02 &
  77.32 &
  61.48 &
  28.72 &
  79.68 &
  79.78 &
  79.46 &
  77.64 &
  73.22 \\ \hline
\multirow{2}{*}{\textbf{CiteSeer}} &
  \multicolumn{5}{c|}{Asymmetric} &
  \multicolumn{5}{c}{Symmetric} \\ \cline{2-11} 
 &
  0.1 &
  0.2 &
  0.3 &
  0.4 &
  0.5 &
  0.1 &
  0.2 &
  0.3 &
  0.4 &
  0.5 \\ \hline
$\beta=0.1$ &
  70.12 &
  67.98 &
  68.38 &
  53.28 &
  50.26 &
  69.82 &
  68.92 &
  69.34 &
  67.64 &
  54.42 \\
$\beta=0.2$ &
  69.80 &
  68.44 &
  68.38 &
  53.12 &
  \textbf{50.42} &
  69.60 &
  68.94 &
  69.20 &
  67.82 &
  54.24 \\
$\beta=0.3$ &
  70.00 &
  67.28 &
  68.16 &
  54.70 &
  50.14 &
  69.76 &
  69.04 &
  69.42 &
  68.34 &
  54.30 \\
$\beta=0.4$ &
  69.76 &
  68.48 &
  68.16 &
  54.84 &
  49.62 &
  69.66 &
  68.68 &
  69.30 &
  68.66 &
  55.24 \\
$\beta=0.5$ &
  69.46 &
  67.72 &
  67.32 &
  55.28 &
  47.34 &
  69.60 &
  66.86 &
  69.00 &
  67.52 &
  55.34 \\
$\beta=0.6$ &
  70.14 &
  \textbf{68.70} &
  68.14 &
  \textbf{57.54} &
  49.00 &
  70.70 &
  69.22 &
  \textbf{70.10} &
  69.46 &
  57.08 \\
$\beta=0.7$ &
  70.36 &
  67.98 &
  \textbf{68.52} &
  55.32 &
  48.98 &
  70.16 &
  69.74 &
  70.06 &
  \textbf{69.70} &
  58.32 \\
$\beta=0.8$ &
  70.36 &
  68.34 &
  68.02 &
  56.00 &
  49.16 &
  \textbf{70.86} &
  \textbf{69.98} &
  69.94 &
  69.52 &
  \textbf{58.72} \\
$\beta=0.9$ &
  \textbf{70.42} &
  68.46 &
  67.94 &
  52.96 &
  37.64 &
  70.32 &
  69.66 &
  69.74 &
  69.22 &
  58.70 \\
$\beta=1.0$ &
  69.94 &
  67.52 &
  65.90 &
  50.98 &
  34.80 &
  69.78 &
  68.66 &
  68.06 &
  68.54 &
  54.00 \\ \bottomrule
\end{tabular}
\caption{Test accuracy for ERASE with differnt $\alpha$s and noise rate. The mean value of 5 runs is displayed.}
\label{tab:alpha}
\end{table}

\newpage

\subsection{Ablation on Prototype Pseudo Label}

To verify whether the prototype pseudo label contributes to the robustness of ERASE, we ablate the value of $\beta$ in~\cref{fig:ablation_study_beta,tab:semantic weight}. Results show that the accuracy remains relatively stable under low noise conditions. Besides, under high noise conditions, the accuracy also remains stable within a wide range of $\beta$. When $\beta$ is very close to 1, we observed a sharp drop in accuracy, because the prototype pseudo label is a crucial component.
\vspace{-1.2em}

\begin{figure}[H]
    \captionsetup{}
  \centering
  \begin{subfigure}{0.24\textwidth}
    \includegraphics[width=\textwidth]{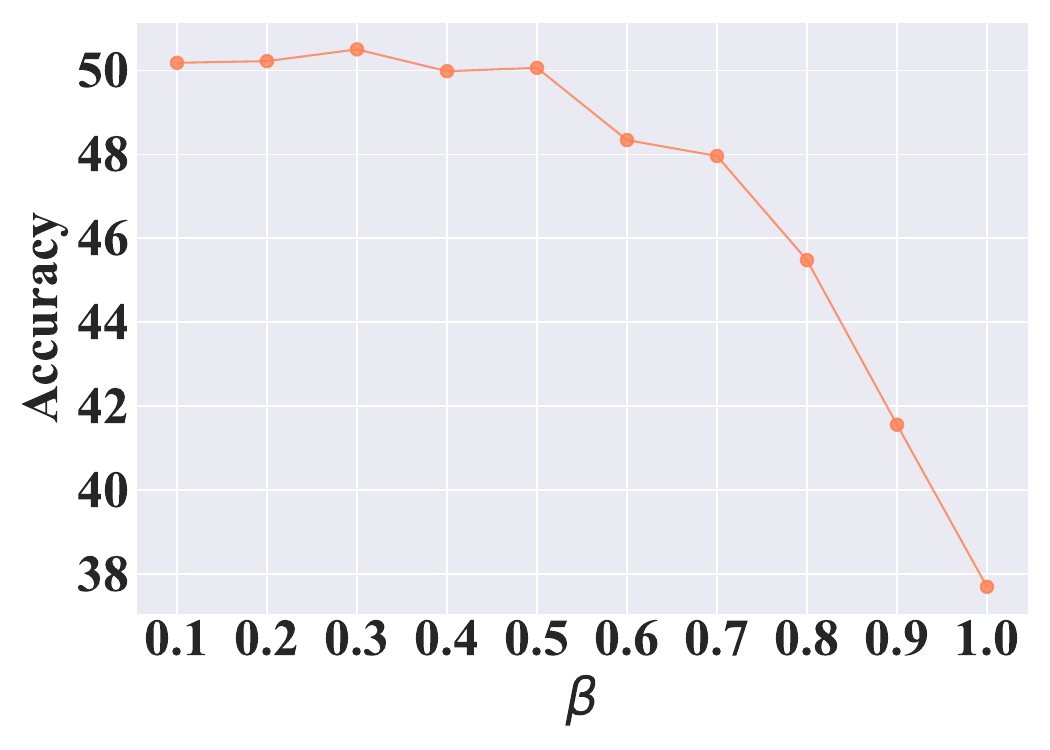}
    \vspace{-1.8em}
    \caption{Cora (Asymmetric $\phi = 0.5$).}
    \label{fig:ablation_study_beta_Cora_asymm}
  \end{subfigure}
  \begin{subfigure}{0.24\textwidth}
    \includegraphics[width=\textwidth]{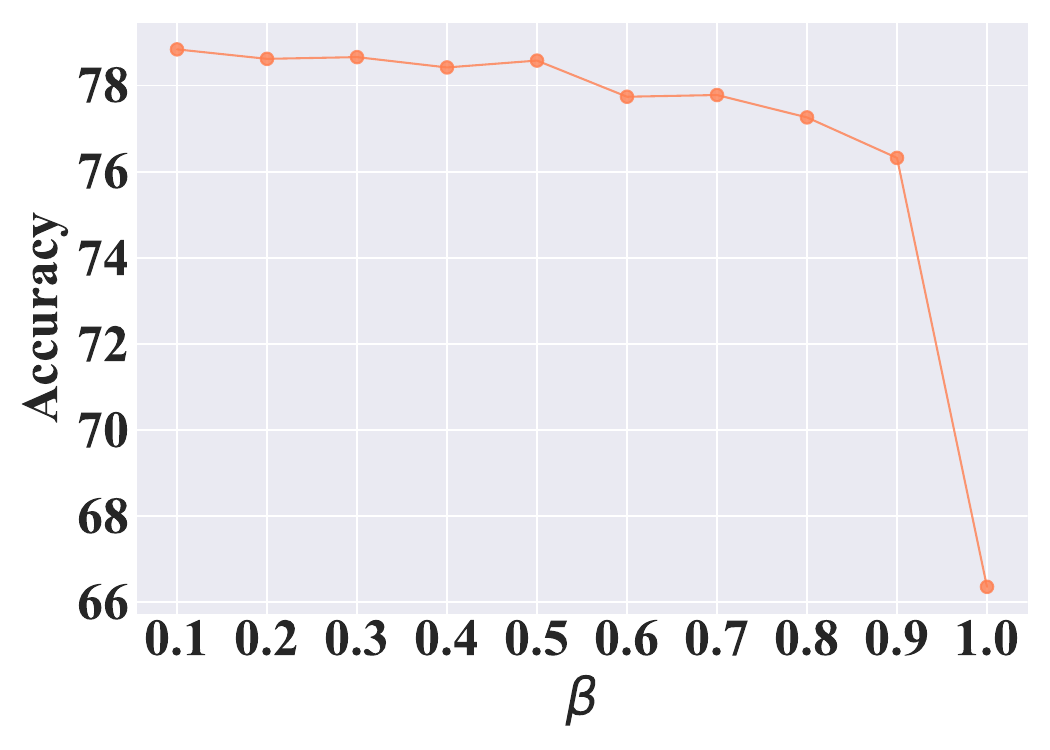}
    \vspace{-1.8em}
    \caption{Cora (Symmetric $\phi = 0.5$).}
     \label{fig:ablation_study_beta_Cora_symm}
  \end{subfigure}
  \begin{subfigure}{0.24\textwidth}
    \includegraphics[width=\textwidth]{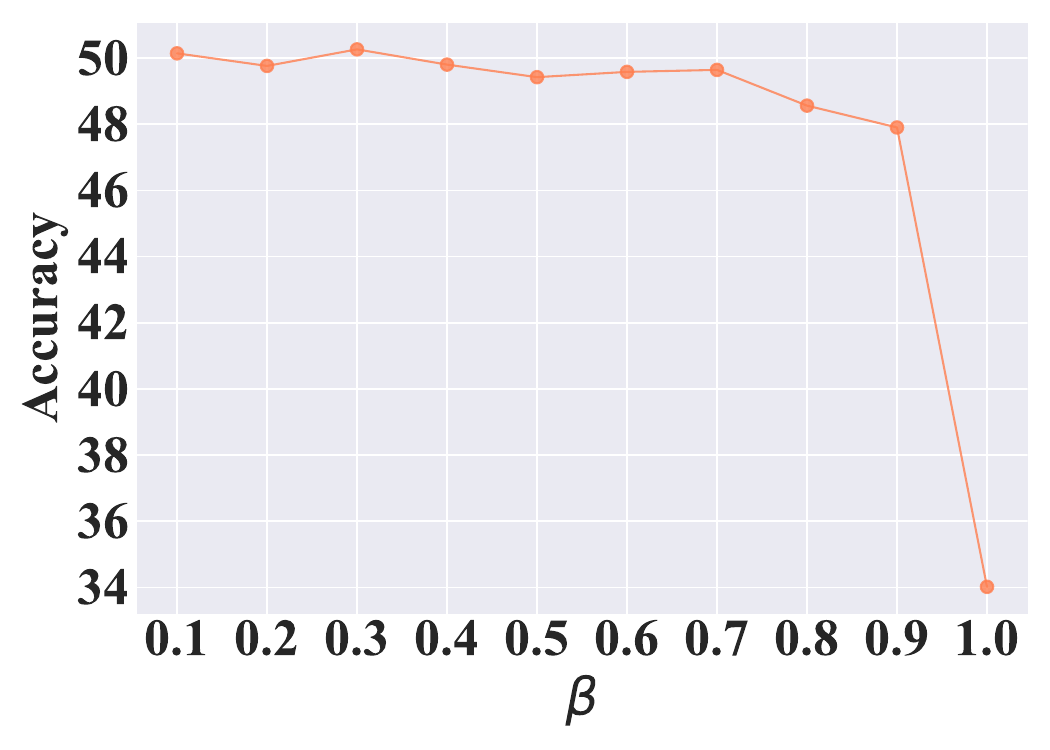}
    \vspace{-1.8em}
    \caption{CiteSeer (Asymmetric $\phi = 0.5$).}
    \label{fig:ablation_study_beta_CiteSeer_asymm}
  \end{subfigure}
  \begin{subfigure}{0.24\textwidth}
    \includegraphics[width=\textwidth]{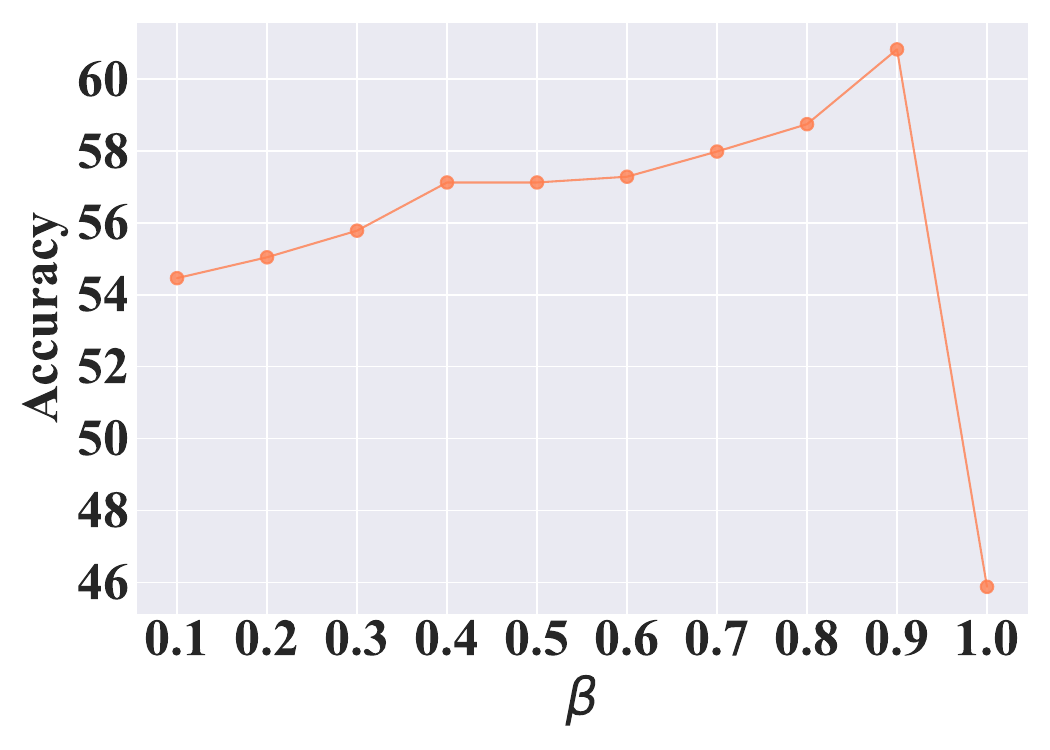}
    \vspace{-1.8em}
    \caption{CiteSeer (Symmetric $\phi = 0.5$).}
     \label{fig:ablation_study_beta_CiteSeer_symm}
  \end{subfigure}
  \vspace{-1em}
  \caption{ Test accuracy of ERASE with different $\beta$s on Cora (a, b) and CiteSeer (c, d) respectively.}
  \vspace{-1.8em}
  \label{fig:ablation_study_beta}
\end{figure}

\begin{table}[H]\centering
    \small
    \setlength\tabcolsep{6.5pt}
\begin{tabular}{c|ccccc|ccccc}
\toprule
\multirow{2}{*}{\textbf{Cora}}     & \multicolumn{5}{c|}{Asymmetric}                         & \multicolumn{5}{c}{Symmetric}                                    \\ \cline{2-11} 
                                   & 0.1            & 0.2   & 0.3   & 0.4   & 0.5            & 0.1   & 0.2            & 0.3            & 0.4   & 0.5            \\ \hline
$\beta$=0.1                        & 81.00          & 80.22 & 79.04 & 73.72 & 50.18          & 81.32 & 81.90          & 81.10          & 79.98 & \textbf{78.84} \\
$\beta$=0.2                        & 81.08          & 80.16 & 79.00 & 74.08 & 50.22          & 81.26 & \textbf{82.18} & 81.24          & 79.82 & 78.62          \\
$\beta$=0.3                        & 81.20          & 80.24 & 78.94 & 73.86 & \textbf{50.50} & 81.28 & 81.88          & 81.30          & 79.90 & 78.66          \\
$\beta$=0.4                        & 81.24          & 80.14 & 79.16 & 74.06 & 49.98          & 81.42 & 81.76          & 81.28          & 79.76 & 78.42          \\
$\beta$=0.5                        & 80.98          & 80.10 & 78.92 & 74.02 & 50.06          & 81.22 & 81.08          & \textbf{81.38} & 79.70 & 78.58          \\
$\beta$=0.6                        & 81.42          & 80.12 & 79.58 & 75.00 & 48.34          & 81.22 & 80.84          & 80.90          & 79.70 & 77.74          \\
$\beta$=0.7                        & 81.44          & 80.32 & 80.18 & 74.94 & 47.96          & 81.08 & 81.04          & 80.46          & 79.92 & 77.78          \\
$\beta$=0.8 &
  \textbf{81.46} &
  80.48 &
  80.28 &
  \textbf{75.12} &
  45.48 &
  81.20 &
  81.18 &
  80.88 &
  \textbf{80.10} &
  77.26 \\
$\beta$=0.9 &
  \textbf{81.46} &
  \textbf{80.68} &
  \textbf{80.32} &
  73.46 &
  41.56 &
  \textbf{81.48} &
  81.20 &
  80.64 &
  79.62 &
  76.32 \\
$\beta$=1.0                        & 80.36          & 78.84 & 72.80 & 60.58 & 37.70          & 78.44 & 79.76          & 77.74          & 72.58 & 66.36          \\ \hline
\multirow{2}{*}{\textbf{CiteSeer}} & \multicolumn{5}{c|}{Asymmetric}                         & \multicolumn{5}{c}{Symmetric}                                    \\ \cline{2-11} 
                                   & 0.1            & 0.2   & 0.3   & 0.4   & 0.5            & 0.1   & 0.2            & 0.3            & 0.4   & 0.5            \\ \hline
$\beta$=0.1                        & 69.56          & 68.28 & 67.82 & 56.54 & 50.14          & 69.38 & 68.78          & 66.70          & 67.30 & 54.46          \\
$\beta$=0.2                        & 69.62          & 68.20 & 68.06 & 56.98 & 49.76          & 69.22 & 68.68          & 67.28          & 67.74 & 55.04          \\
$\beta$=0.3                        & 69.44          & 68.52 & 68.44 & 57.46 & \textbf{50.26} & 69.28 & 68.96          & 67.40          & 67.84 & 55.78          \\
$\beta$=0.4                        & 69.58          & 68.76 & 68.46 & 58.28 & 49.80          & 69.52 & 68.56          & 67.54          & 67.84 & 57.12          \\
$\beta$=0.5                        & 69.50          & 68.96 & 68.84 & 58.66 & 49.42          & 69.64 & 68.68          & 68.00          & 68.14 & 57.12          \\
$\beta$=0.6                        & 69.96          & 69.14 & 69.02 & 58.32 & 49.58          & 69.80 & 69.10          & 68.34          & 68.22 & 57.28          \\
$\beta$=0.7                        & \textbf{70.06} & 69.30 & 68.94 & 58.92 & 49.64          & 70.20 & 69.16          & 68.92          & 67.90 & 57.98          \\
$\beta$=0.8 &
  69.92 &
  69.28 &
  \textbf{69.16} &
  \textbf{59.16} &
  48.56 &
  70.60 &
  69.40 &
  68.76 &
  68.18 &
  58.74 \\
$\beta$=0.9 &
  69.90 &
  \textbf{69.72} &
  69.14 &
  58.48 &
  47.90 &
  \textbf{70.74} &
  \textbf{69.74} &
  \textbf{69.52} &
  \textbf{69.50} &
  \textbf{60.82} \\
$\beta$=1.0                        & 67.86          & 61.10 & 56.04 & 50.78 & 34.02          & 66.46 & 66.62          & 64.12          & 63.16 & 45.88        \\ \bottomrule 
\end{tabular}
  \vspace{-1em}
\caption{Test accuracy for ERASE with differnt $\beta$s and noise rate. The mean value of 5 runs is displayed.}
  \vspace{-1.5em}
\label{tab:semantic weight}
\end{table}

\subsection{Ablation on Different Backbones}
\label{appendix:backbone}

As shown in~\cref{tab:detailed comparison of backbones}, ERASE enjoys different backbones and even the GCN backbone shows competitive performance against baselines in~\cref{tab:baseline_main_results} which indicates good generalization of the ERASE algorithm.
\vspace{0.4em}
\begin{table*}[h]
\centering
\setlength{\tabcolsep}{2.8pt}
\renewcommand\arraystretch{1.0}
\footnotesize
\vspace{-1.2em}
\begin{tabular}{c|ccccc|ccccc}
\toprule
\textbf{Cora}     & Asym-0.1     & Asym-0.2     & Asym-0.3     & Asym-0.4     & Asym-0.5     & Sym-0.1      & Sym-0.2      & Sym-0.3      & Sym-0.4      & Sym-0.5      \\ \hline
GCN               & 80.19 (0.95) & 79.92 (0.72) & 78.74 (1.31) & \textbf{75.53 (0.81)} & 47.31 (2.37) & 80.69 (0.78) & 81.14 (0.91) & 80.37 (1.33) & 80.07 (1.20) & \textbf{78.19 (1.29)} \\
GraphSAGE         & \textbf{81.61 (0.80)} & \textbf{80.88 (0.70)} & 79.14 (1.71) & 72.63 (3.03) & 45.12 (2.90) & 81.01 (0.99) & 80.90 (0.94) & 80.38 (1.23) & 78.53 (1.72) & 75.75 (0.77) \\ \hline

\rowcolor[HTML]{E7FBFF}
GAT               & 81.43 (0.90) & 80.46 (1.00) & \textbf{79.52 (1.13)} & 75.36 (2.32) & \textbf{48.00 (2.52)} & \textbf{81.58(0.80)}  & \textbf{81.97 (0.79)} & \textbf{81.61 (0.95)} & \textbf{80.13 (1.07)} & 78.01 (1.05) \\ \hline
\specialrule{0em}{0.5pt}{0.5pt}
\hline
\textbf{CiteSeer} & Asym-0.1     & Asym-0.2     & Asym-0.3     & Asym-0.4     & Asym-0.5     & Sym-0.1      & Sym-0.2      & Sym-0.3      & Sym-0.4      & Sym-0.5      \\
GCN               & \textbf{70.98 (1.30)} & 69.51 (2.26) & 68.79 (3.47) & 58.97 (5.58) & 48.19 (2.37) & \textbf{71.38 (1.41)} & \textbf{70.58 (1.83)} & \textbf{70.69 (2.47)} & \textbf{69.64 (2.07)} & 58.88 (5.04) \\
GraphSAGE         & 67.60 (4.27) & 65.72 (4.48) & 66.12 (3.39) & 52.61 (5.61) & 43.39 (2.26) & 68.59 (3.60) & 68.26 (3.18) & 67.47 (4.03) & 52.61 (5.61) & \textbf{59.17 (2.22)} \\\hline
\rowcolor[HTML]{E7FBFF}
GAT               & 70.70 (1.60) & \textbf{69.91 (1.79)} & \textbf{69.45 (1.84)} & \textbf{59.32 (4.69)} & \textbf{49.62 (2.20)} & 70.81 (1.34) & 69.85 (2.82) & 69.08 (2.98) & 68.37 (3.03) & 58.56 (3.30) \\ \hline
\specialrule{0em}{0.5pt}{0.5pt}
\hline
\textbf{PubMed}   & Asym-0.1     & Asym-0.2     & Asym-0.3     & Asym-0.4     & Asym-0.5     & Sym-0.1      & Sym-0.2      & Sym-0.3      & Sym-0.4      & Sym-0.5      \\
GCN               & 76.17 (1.37) & 76.10 (1.59) & 74.70 (1.84) & 68.83(2.31)  & 58.32 (4.26) & 77.22 (0.84) & 76.80 (1.12) & 74.64 (1.11) & 72.40 (1.33) & 69.30 (0.98) \\
GraphSAGE         & 77.98 (0.97) & \textbf{77.24 (1.75)} & 76.18 (1.42) & 70.33 (3.58) & 59.91 (2.17) & 77.65 (1.91) & 77.53 (1.50) & 75.34 (1.01) & 72.38 (1.13) & 69.64 (0.71) \\\hline
\rowcolor[HTML]{E7FBFF}
GAT               & \textbf{78.01 (1.10)} & 76.70 (1.07) & \textbf{76.71 (1.55)} & \textbf{73.00 (2.62)} & \textbf{61.46 (2.90)} & \textbf{78.16 (0.88)} & \textbf{77.86 (1.07)} & \textbf{75.60 (0.85)} & \textbf{72.72 (0.79)} & \textbf{70.72 (1.08)} \\ \bottomrule
\end{tabular}
\vspace{-1.3em}
\caption{Detailed comparison with different backbones in test accuracy ($\pm $ std) $\%$ with asymmetric and symmetric noise.}
\vspace{-1em}
\label{tab:detailed comparison of backbones}
\end{table*}

\newpage
\subsection{Technical Design on Two-stage Framework}
\label{sec:ablation_two_stage}
The two-stage framework is introduced for learning representations and node classification respectively. In the first stage, we train a graph encoder and obtain error-resilient representations. 
According to Lemma 1~(empirically supported in~\ref{sec:Abalation_study}), learned representations can be directly applied with a na\"ive \underline{\textit{linear}} classifier~(LogReg) to perform the classification. To verify the impact of the two-stage design, one-stage classification with linear classifier~(LogReg) for comparison is shown in~\cref{sec:Abalation_study}. Results show the necessity of the two-stage design.

\section{More Visualization Results}
\label{appendix:MoreVisRes}

\subsection{Comparison on the Cosine Similarity Matrix} 
We compare the cosine similarity of the representations learned by cross-entropy~(CE) and ERASE in~\cref{fig:cos_mat_comparison}. Results show that ERASE learns more orthogonal and robust representations between classes than CE.

\begin{figure*}[!ht]
    \captionsetup{}
  \centering
  \begin{subfigure}{0.49\textwidth}
  \centering
    \includegraphics[width=\textwidth]{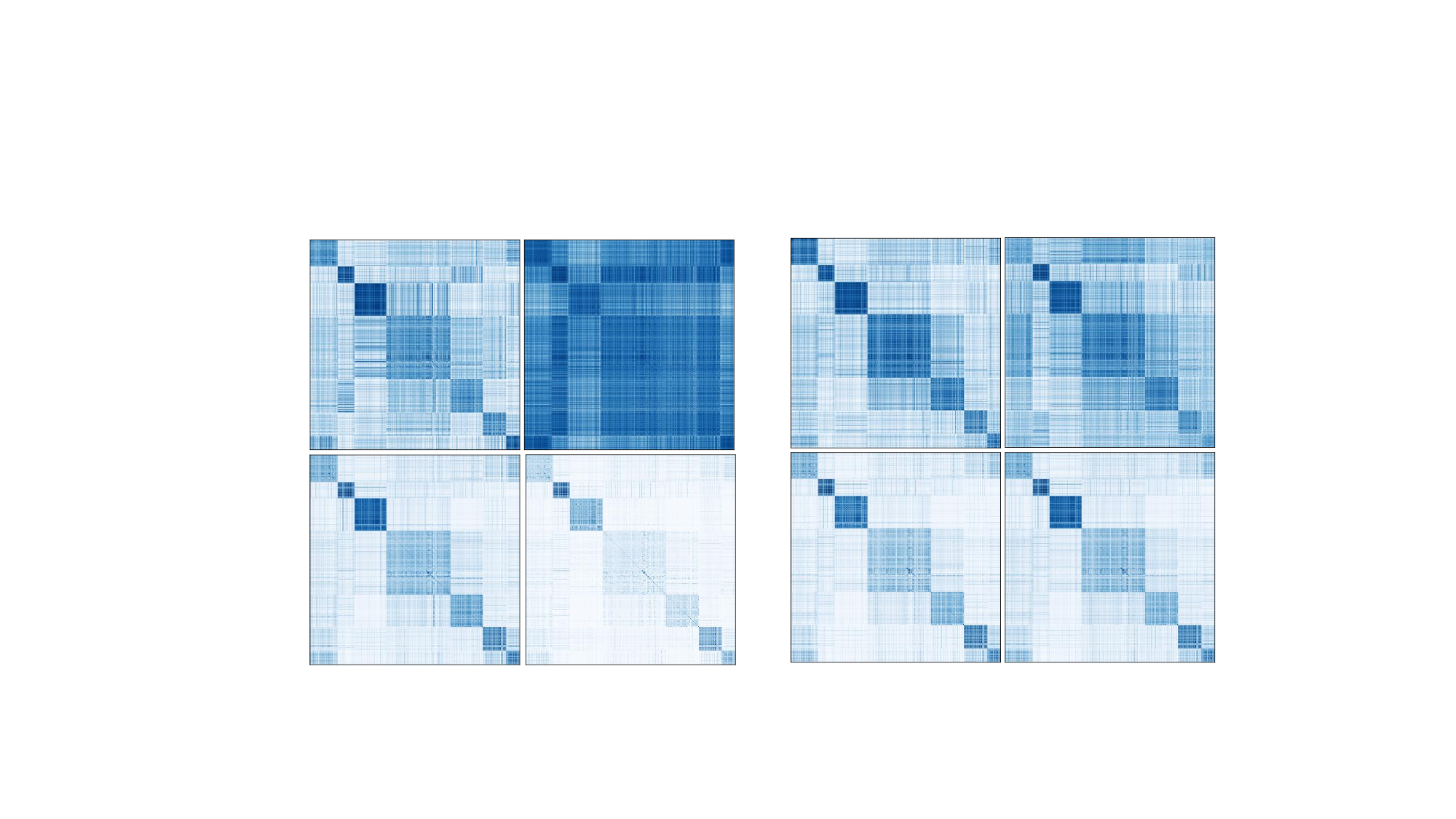}
    \caption{
  \centering Results of Cora obtained by CE~(above) and ERASE~(below). Asymmetric $\phi = 0.3$~(left) and 0.5~(right).}
  \end{subfigure}
    \centering
  \begin{subfigure}{0.49\textwidth}
  \centering
    \includegraphics[width=\textwidth]{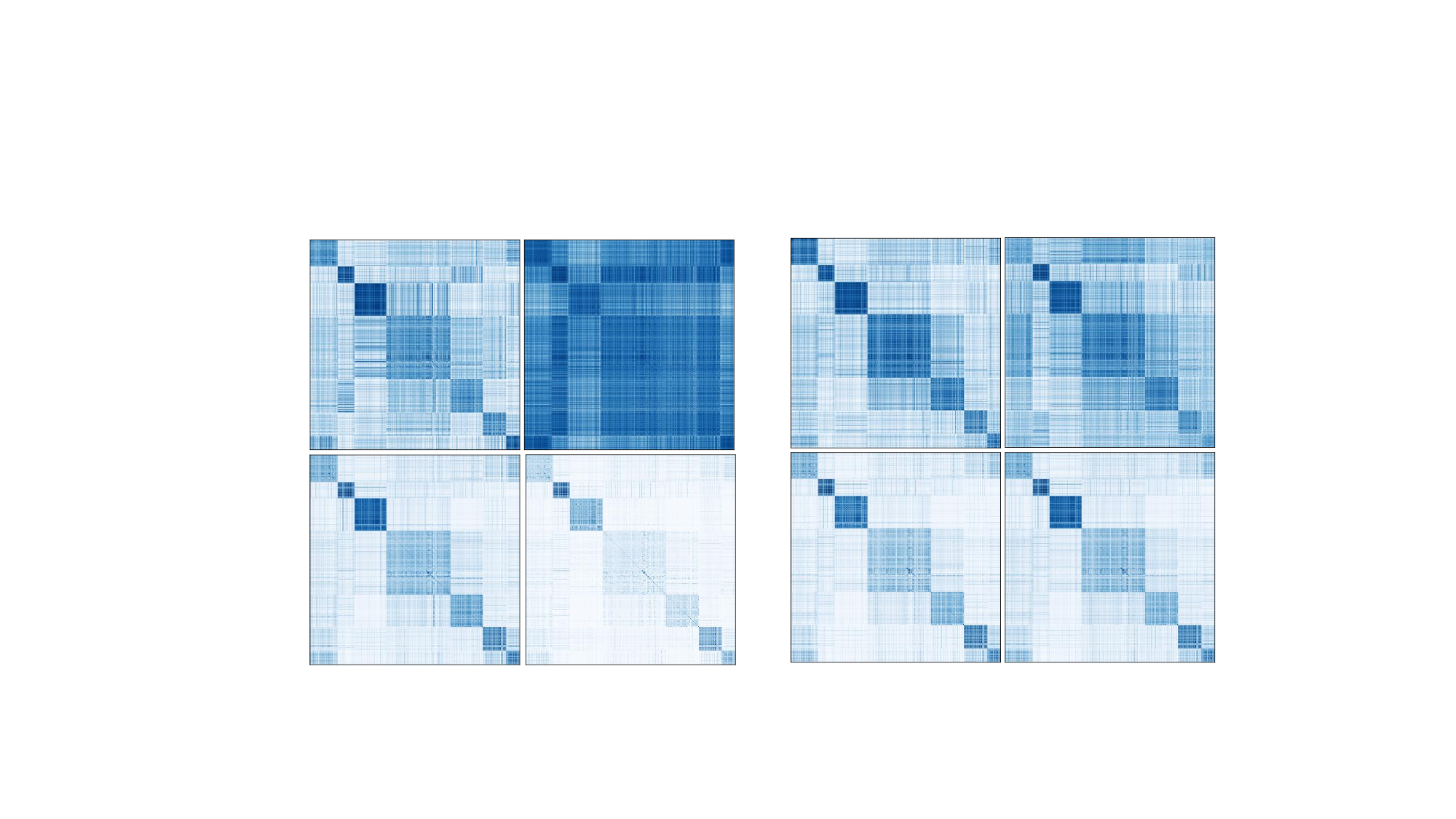}
    \caption{
  \centering Results of Cora obtained by CE~(above) and ERASE~(below). Symmetric $\phi = 0.3$~(left) and 0.5~(right).}
  \end{subfigure}
  \centering
  \begin{subfigure}{0.49\textwidth}
  \centering
    \includegraphics[width=\textwidth]{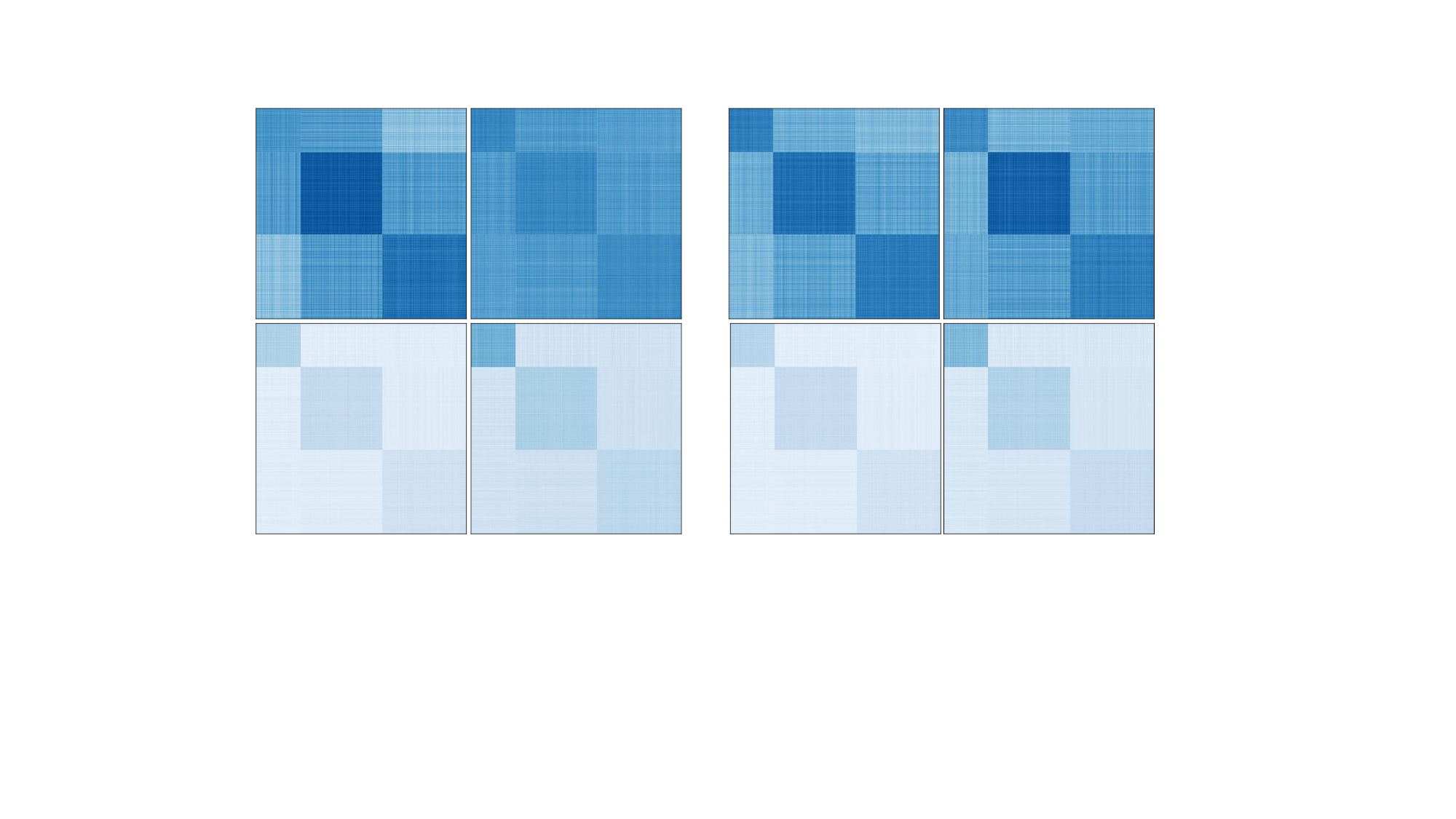}
    \caption{
  \centering Results of PubMed obtained by CE~(above) and ERASE~(below). Asymmetric $\phi = 0.3$~(left) and 0.5~(right).}
  \end{subfigure}
    \centering
  \begin{subfigure}{0.49\textwidth}
  \centering
    \includegraphics[width=\textwidth]{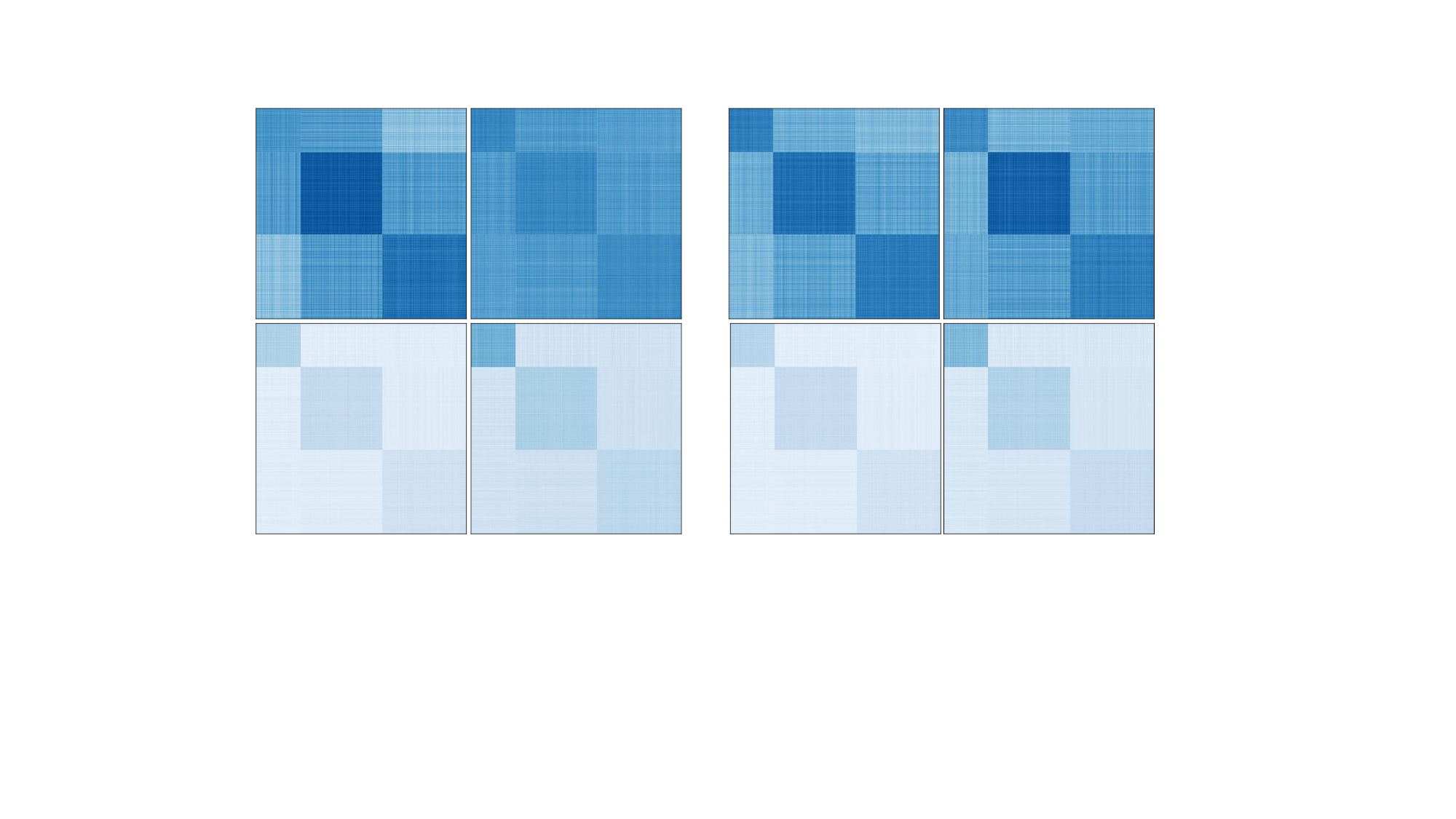}
    \caption{
  \centering Results of PubMed obtained by CE~(above) and ERASE~(below). Symmetric $\phi = 0.3$~(left) and 0.5~(right).}
  \end{subfigure}
    \vspace{-0.7em} 
  \caption{Comparison on the cosine similarity matrix.}
    \vspace{-1.1em}
  \label{fig:cos_mat_comparison}
\end{figure*}

\clearpage
\subsection{Pairwise Comparison on PCA Visualization.}
To demonstrate the orthogonal relationships between classes better, we perform PCA dimensionality reduction for visualization. Note that PCA is a linear dimensionality reduction algorithm. The results in~\cref{fig:pairwise_pca} indicate that the representations learned by ERASE show obvious orthogonal relationships between classes.

\begin{figure*}[!ht]
    \captionsetup{}
  \centering
  \begin{subfigure}{\textwidth}
  \centering
    \includegraphics[width=0.95\textwidth]{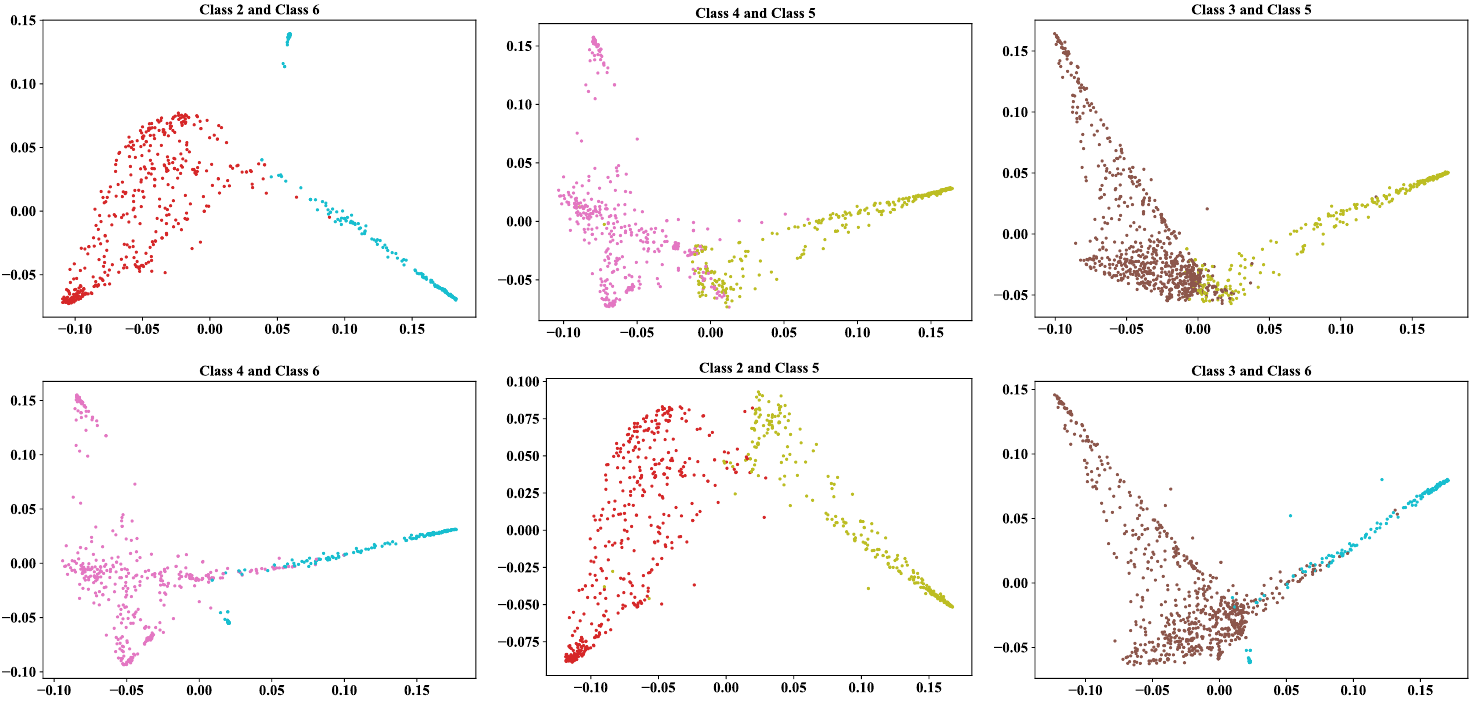}
    \vspace{-0.2em} 
    \caption{
  \centering Results of Cora. Asymmetric noise $\phi =0.3$.}
    \vspace{1em} 
  \end{subfigure}
  \centering
    \begin{subfigure}{\textwidth}
  \centering
    \includegraphics[width=0.95\textwidth]{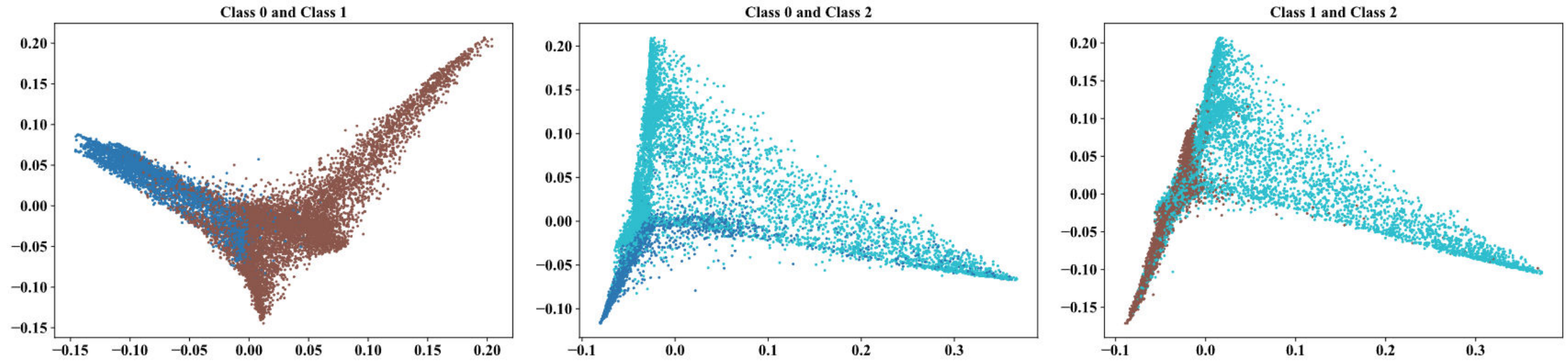}
    \vspace{-0.2em} 
    \caption{
  \centering Results of PubMed. Asymmetric noise $\phi =0.3$.}
  \end{subfigure}
    \vspace{-0.7em} 
  \caption{Pairwise PCA visualization between classes.}
  \label{fig:pairwise_pca}
\end{figure*}

\clearpage
\subsection{Visualization results of t-SNE.}
 We also visualized the learned representations obtained by cross-entropy (CE) and ERASE under noise via t-SNE on Cora, CiteSeer, and PubMed. The results in~\cref{fig:visualize_tsne} show that the representations obtained by ERASE are more discriminative and robust against label noise than CE.

 \vspace{-1em}
\begin{figure*}[!ht]
\captionsetup{}
    \centering
\begin{subfigure}{0.45\textwidth}
  \centering
    \includegraphics[width=\textwidth]{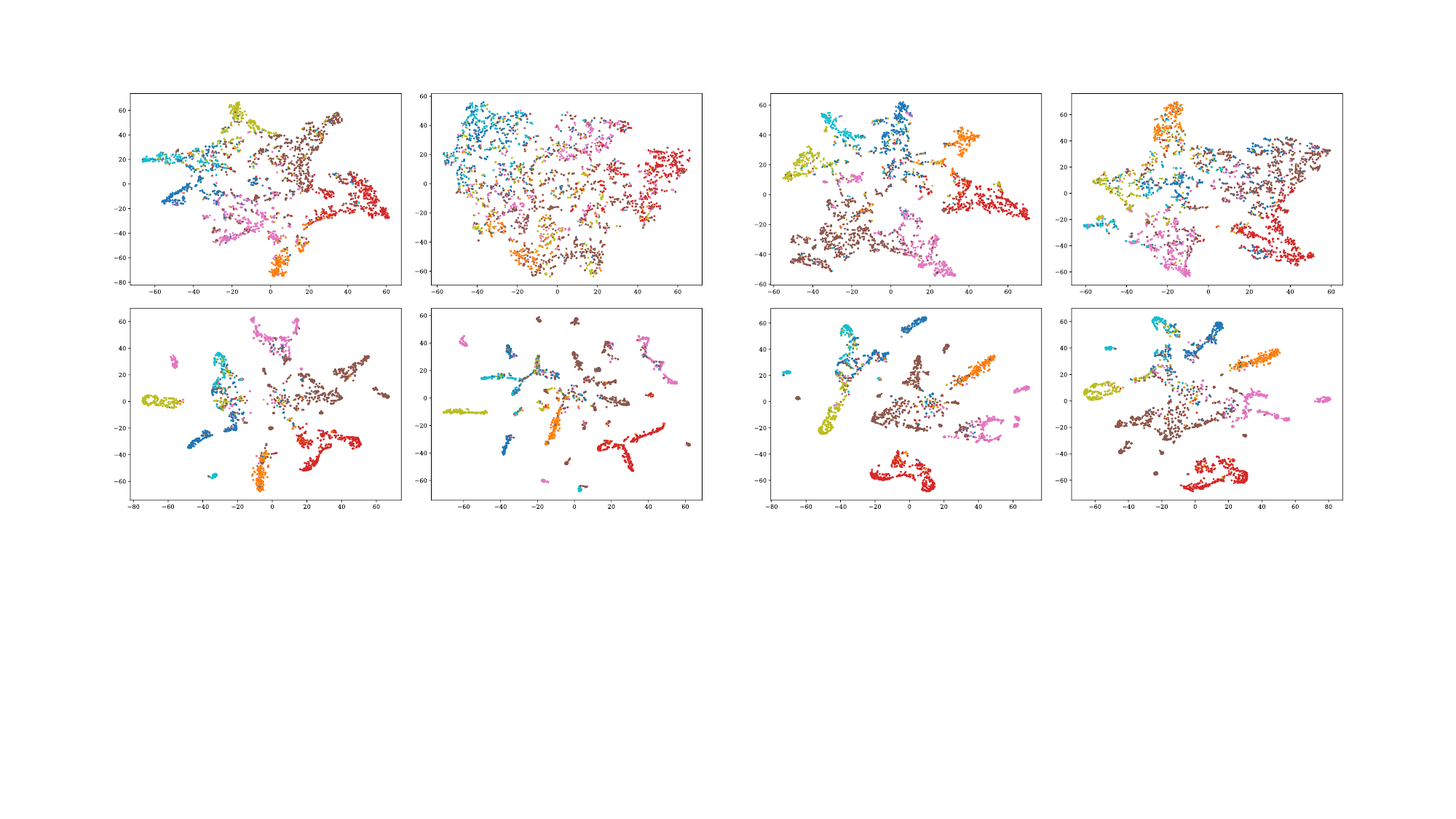}
    \vspace{-0.5em} 
    \caption{
  \centering  Results of Cora obtained by CE~(above) and ERASE~(below). Asymmetric $\phi = 0.3$~(left) and 0.5~(right).}
  \end{subfigure}
  \centering
    \begin{subfigure}{0.45\textwidth}
  \centering
    \includegraphics[width=\textwidth]{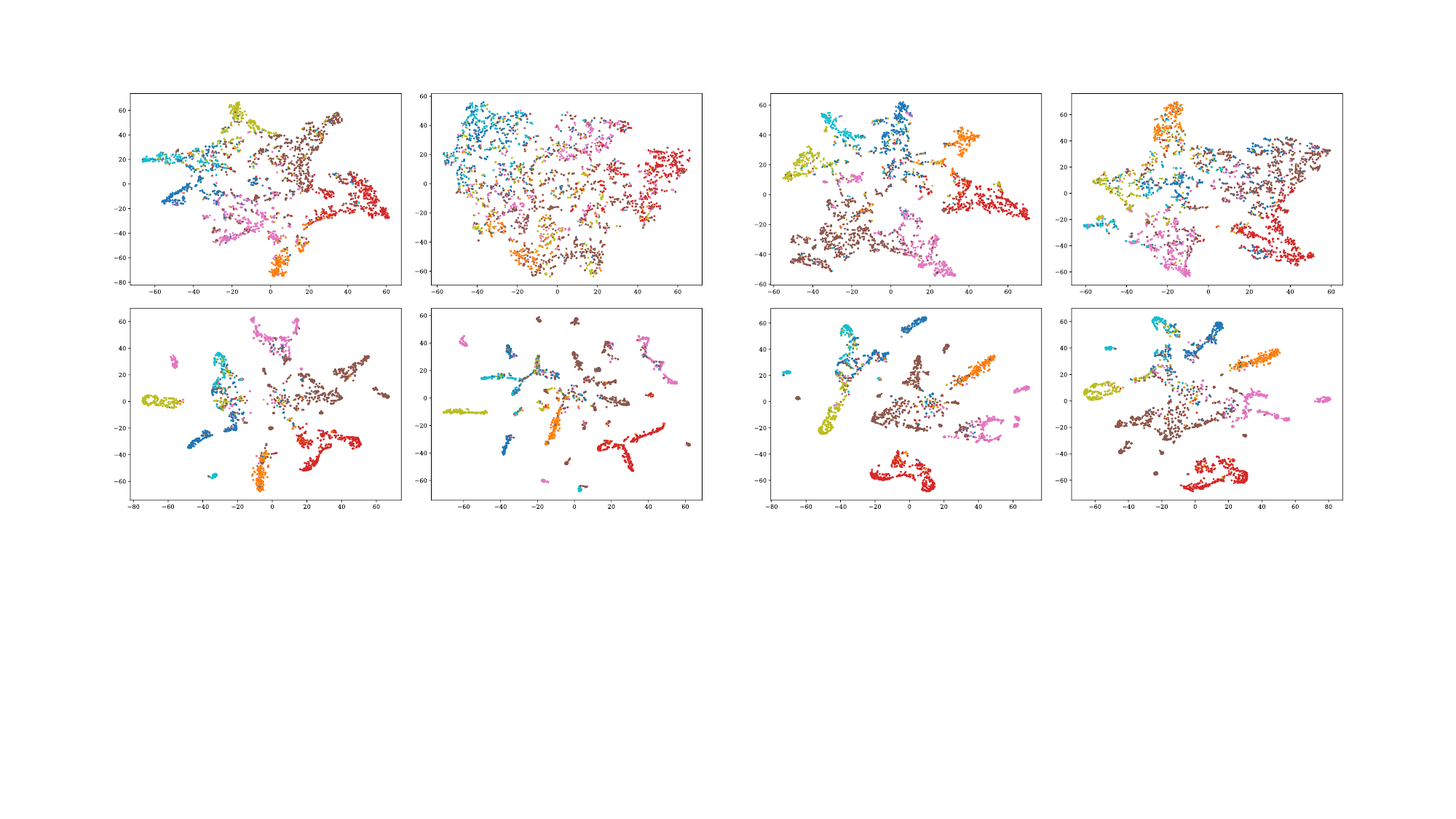}
    \vspace{-0.5em} 
    \caption{
  \centering Results of Cora obtained by CE~(above) and ERASE~(below). Symmetric $\phi = 0.3$~(left) and 0.5~(right).}
  \end{subfigure}
    \centering
    \begin{subfigure}{0.45\textwidth}
  \centering
    \includegraphics[width=\textwidth]{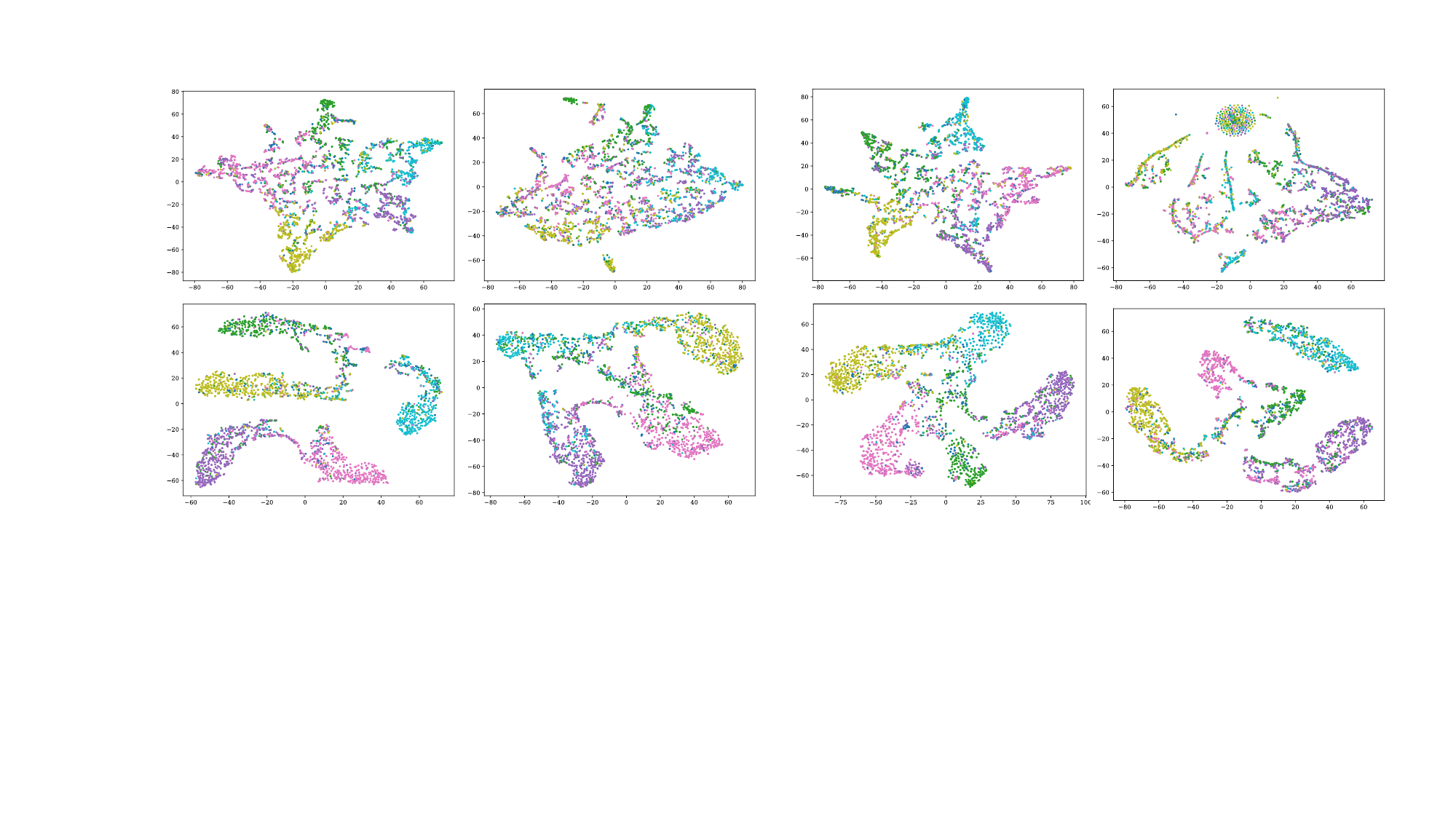}
    \vspace{-0.5em} 
    \caption{
  \centering Results of CiteSeer obtained by CE~(above) and ERASE~(below). Asymmetric $\phi = 0.3$~(left) and 0.5~(right).}
  \end{subfigure}
      \centering
    \begin{subfigure}{0.45\textwidth}
  \centering
    \includegraphics[width=\textwidth]{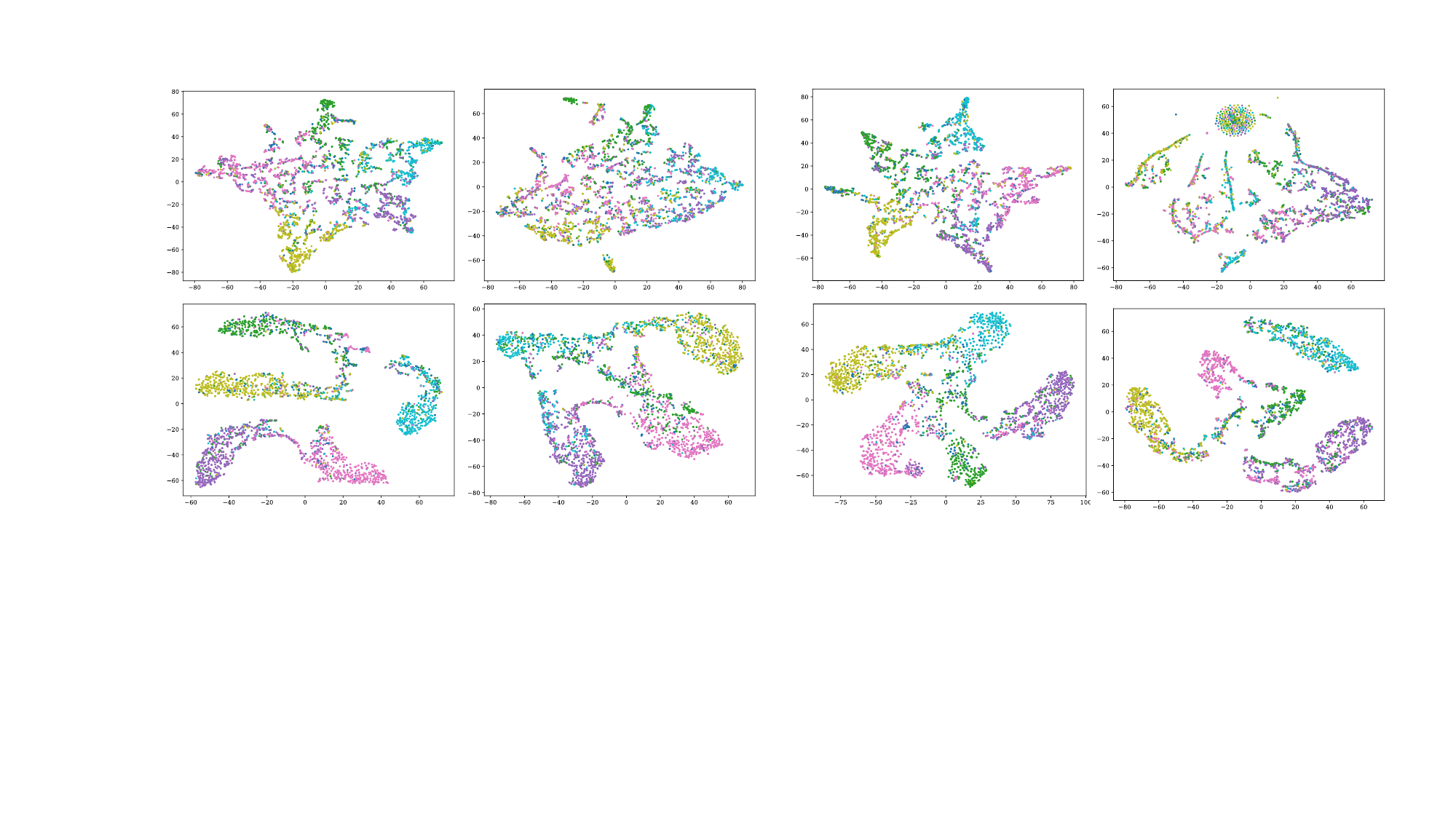}
    \vspace{-0.5em} 
    \caption{
  \centering Results of CiteSeer obtained by CE~(above) and ERASE~(below). Symmetric $\phi = 0.3$~(left) and 0.5~(right).}
  \end{subfigure}
        \centering
    \begin{subfigure}{0.45\textwidth}
  \centering
    \includegraphics[width=\textwidth]{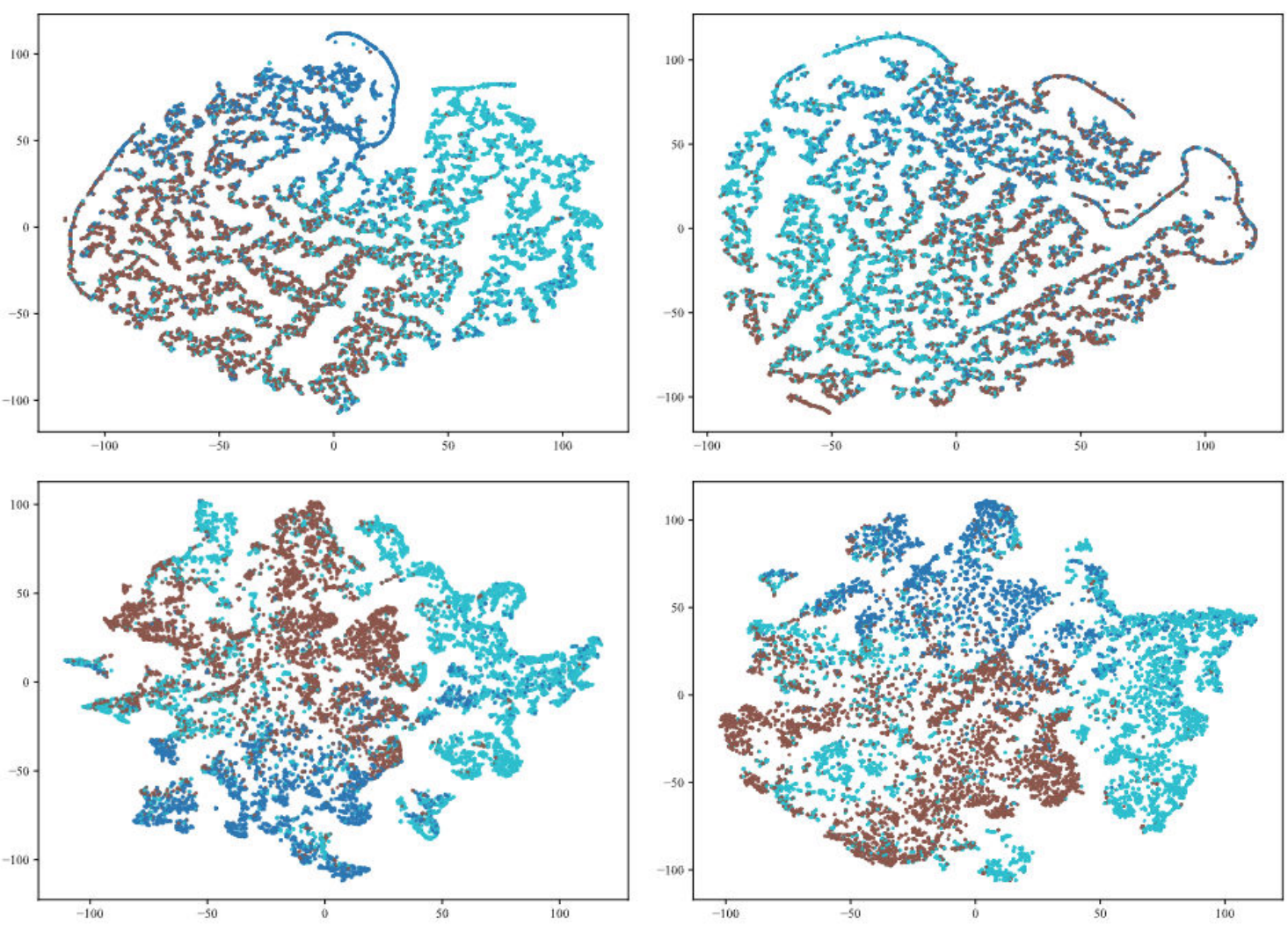}
    \vspace{-0.5em} 
    \caption{
  \centering Results of PubMed obtained by CE~(above) and ERASE~(below). Asymmetric $\phi = 0.3$~(left) and 0.5~(right).}
  \end{subfigure}
      \begin{subfigure}{0.45\textwidth}
  \centering
    \includegraphics[width=\textwidth]{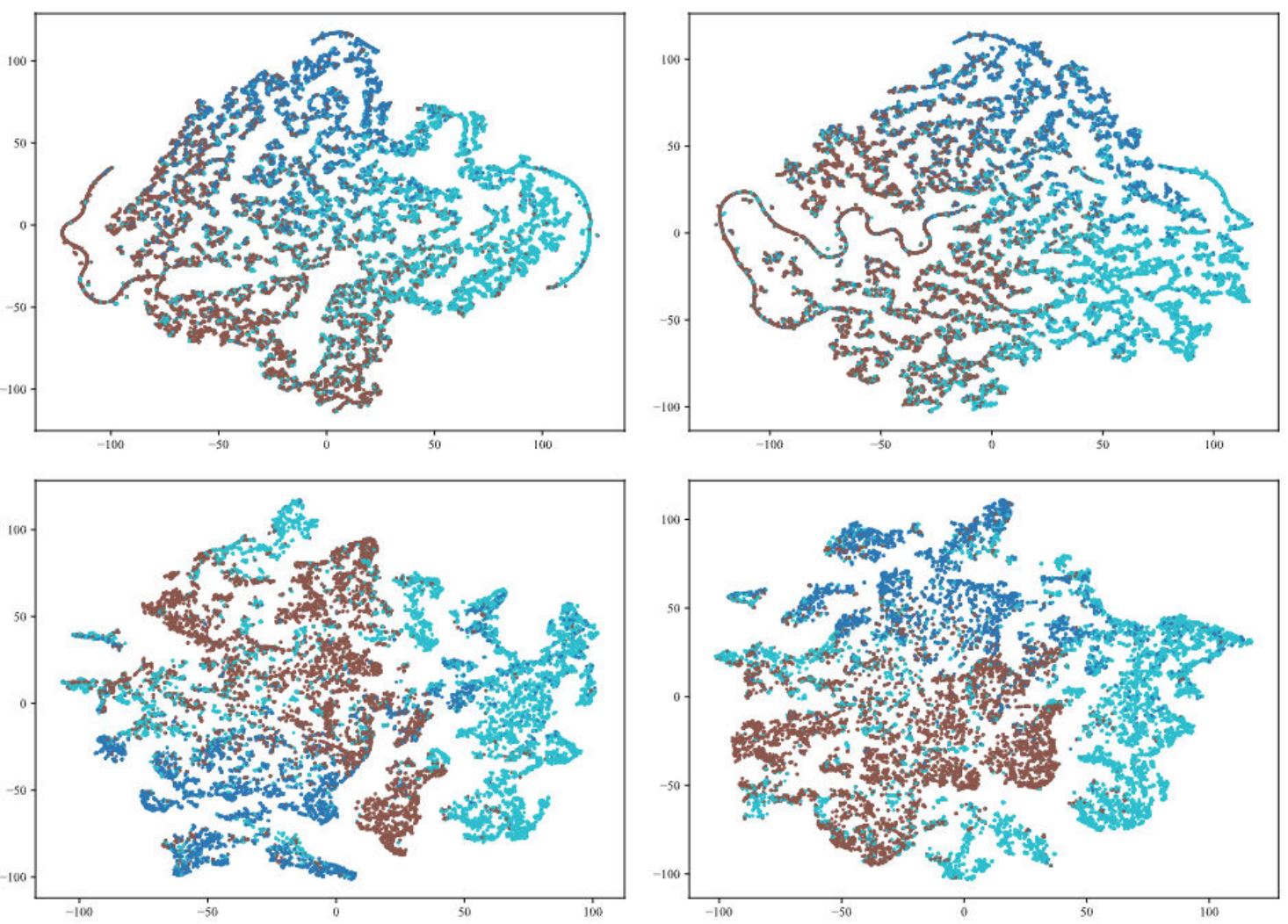}
    \vspace{-0.5em} 
    \caption{
  \centering Results of PubMed obtained by CE~(above) and ERASE~(below). Symmetric $\phi = 0.3$~(left) and 0.5~(right).}
  \end{subfigure}
    \vspace{-1em} 
    \caption{Comparison on t-SNE visualization.}
    \label{fig:visualize_tsne}
\end{figure*}

\newpage

\clearpage
\section{Case Study: Does ERASE Perform Well with Less Labeled Datasets?}
\label{sec:case_study}

As ERASE is robust to the label noise of the dataset, we are interested in whether ERASE could perform well when the datasets are less labeled. Therefore, we prune the size of the training set and compared ERASE with CE and PI-GNN. Results in~\cref{fig:case_study} show that ERASE performs better than baselines even when the datasets are less labeled which shows that ERASE is also promising to be a kind of data-efficient training fashion. 

\begin{figure}[H]
    \captionsetup{}
  \centering
  \begin{subfigure}{0.4\textwidth}
    \includegraphics[width=\textwidth]{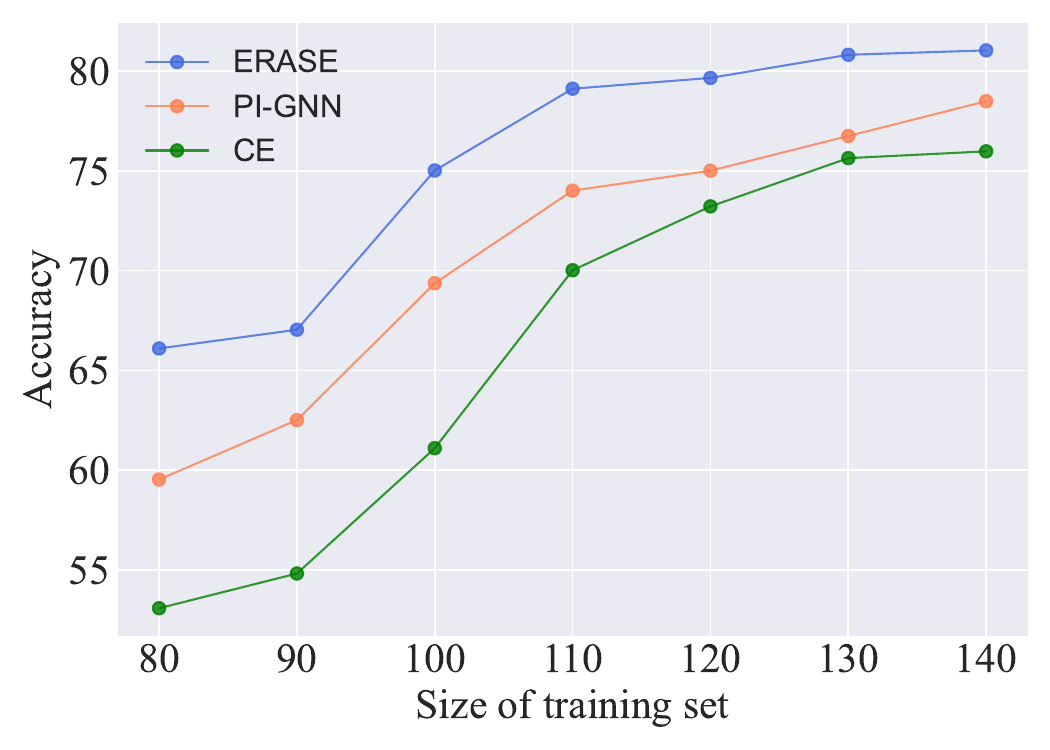}
    \caption{Results of Cora.}
  \end{subfigure}
  \begin{subfigure}{0.4\textwidth}
    \includegraphics[width=\textwidth]{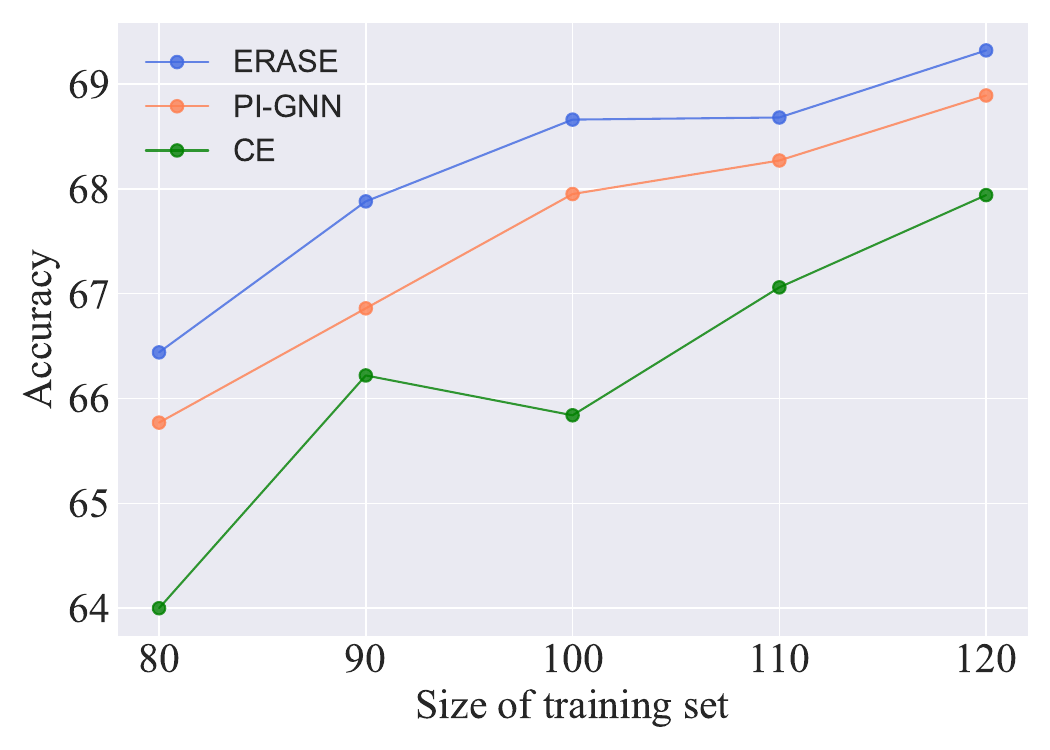}
    \caption{Results of CiteSeer.}
  \end{subfigure}
  \caption{Test accuracy with different sizes of training set on Cora and CiteSeer.}
  \label{fig:case_study}
\end{figure}


\end{document}